\documentclass{article}

\usepackage{etoolbox} 
\newtoggle{wip}
\togglefalse{wip}

\usepackage{silence}
\iftoggle{wip}{%
  \WarningFilter{latex}{Marginpar on page}%
}{}
\usepackage{microtype}
\usepackage{graphicx}
\usepackage{subcaption}
\usepackage{graphicx}
\usepackage{booktabs} 
\usepackage{tikz}
\usetikzlibrary{positioning}
\usepackage{nicefrac}
\usepackage{enumitem}
\usetikzlibrary{arrows.meta}
\usetikzlibrary{fit}
\usepackage[table]{xcolor}
\usepackage{hyperref}

\newcommand{\X}{\mathcal{X}}
\newcommand{\Z}{\mathcal{Z}}
\newcommand{\Y}{\mathcal{Y}}
\newcommand{\Hy}{\mathcal{H}}

\newcommand{\D}{\mathcal{D}}

\newcommand{\Prob}{\mathbb{P}}

\newcommand{\I}{\mathcal{I}}
\newcommand{\Sy}[1]{\mathcal{S}_{#1}}
\newcommand{\B}{\mathcal{B}}
\newcommand{\T}{\mathcal{T}}
\renewcommand{\P}{\mathcal{P}}

\newcommand{\proj}[2]{#1_{||#2}}
\newcommand{\cond}{\ | \ }
\newcommand{\subk}[1]{\mathcal{P}(#1)}
\newcommand{\topk}[1]{\mathcal{T}(#1)}
\newcommand{\githubrepo}{\url{https://github.com/Advueu963/Calibrated_Preference_Learning}}

\newcommand{\ECE}{\text{ECE}}
\let\savedaddcontentsline\addcontentsline

\usepackage{etoc}
\etocsettocstyle{}{}

\usepackage[accepted]{icml2026}
\togglefalse{wip} 

\usepackage{amsmath}
\usepackage{amssymb}
\usepackage{mathtools}
\usepackage{amsthm}
\usepackage{bbm}

\usepackage[capitalize,noabbrev]{cleveref}

\theoremstyle{plain}
\newtheorem{theorem}{Theorem}[section]

\newtheorem{corollary}[theorem]{Corollary}
\theoremstyle{definition}
\newtheorem{definition}{Definition} 

\theoremstyle{remark}

\newcommand{\theoremsymbol}{\textbullet}
\newtheoremstyle{bulleted}
  {10pt}      
  {10pt}      
  {\itshape}  
  {}          
  {\theoremsymbol\ \bfseries} 
  {.}         
  {0.5em}     
  {}          

\theoremstyle{bulleted}

\newcounter{thmenum}
\renewcommand{\thethmenum}{(\roman{thmenum})}
\newtheoremstyle{enumthm}
  {10pt}      
  {10pt}      
  {\itshape}  
  {}          
  {\bfseries\thethmenum\ \bfseries} 
  {.}         
  {0.5em}     
  {}          
\theoremstyle{enumthm}

\usepackage[textsize=tiny]{todonotes}
\iftoggle{wip}{%
  \setlength{\marginparwidth}{2cm}%
}{}

\usepackage{bm}
\renewcommand{\vec}[1]{\bm{#1}}
\icmltitlerunning{Calibrated Preference Learning: The Case of Label Ranking}

\begin{document}

\twocolumn[
\icmltitle{Calibrated Preference Learning: The Case of Label Ranking}



\icmlsetsymbol{equal}{*}

\begin{icmlauthorlist}
\icmlauthor{Santo M. A. R. Thies}{dfki,mcml,lmu}
\icmlauthor{Viktor Bengs}{dfki}
\icmlauthor{Timo Kaufmann}{mcml,lmu}
\icmlauthor{Sebastian J. Vollmer}{dfki}
\icmlauthor{Eyke Hüllermeier}{dfki,mcml,lmu}
\end{icmlauthorlist}

\icmlaffiliation{dfki}{Data Science and its Applications, Deutsches Forschungsinstitut für Künstliche Intelligenz (DFKI), Kaiserslautern, Germany}
\icmlaffiliation{lmu}{Ludwig Maximilian Universität München, Munich, Germany}
\icmlaffiliation{mcml}{Munich Center for Machine Learning, Munich (MCML), Germany}

\icmlcorrespondingauthor{Santo, Thies}{santo.thies@kiml.ifi.lmu.de}

\icmlkeywords{Machine Learning, ICML}

\vskip 0.3in
]



\printAffiliationsAndNotice{}  

\begin{abstract}
Calibration, the alignment of predicted probabilities with true outcome frequencies, is essential for reliable decision-making.
While extensively studied for classification and regression, calibration has not been formally addressed for probabilistic label ranking, where the goal is to predict a distribution over orderings of a label set.
Naively treating rankings as classes ignores their structure and fails to capture important modalities such as pairwise and top-k predictions.
We formalize calibration for label ranking and develop a hierarchy of notions covering full rankings, sub-rankings, and top-k rankings.
We prove that full-rank calibration implies the others but not conversely, and sub-ranking and top-k calibration are incomparable. 
Empirically, we find popular label ranking models are often poorly calibrated, with substantial differences between sub-ranking and top-k metrics.
Applying our framework to RLHF reward models, we find that calibration correlates strongly but not perfectly with benchmark accuracy, suggesting it captures a meaningful quality dimension beyond top-1 accuracy.
These findings motivate future work on understanding the downstream effects of miscalibration and developing methods to correct it.
\end{abstract}

\section{Introduction}
Probabilistic label ranking (ProLR) models a probability distribution over the possible rankings of a set of items given a context, capturing both the probability and the certainty of each potential ranking \citep{Cheng.2010, Cheng.2009}.
%
For such predictions to be trustworthy, the model must be \emph{calibrated}: predicted probabilities should align with true outcome frequencies \citep{Filho.2023}.
%
While calibration has been extensively studied in classification \citep{Vaicenavicius.2019}, regression \citep{Song.2019}, and recommender systems \citep{DaSilva.2025}, calibration for distributions over rankings remains unexplored.
%
One important application is Reinforcement Learning from Human Feedback (RLHF) \citep{Ouyang.2022}, which aims to align LLMs to human preferences via reward models that learn underlying preference structures.
The alignment procedure in RLHF relies on pairwise preferences, which are substructures of full rankings as considered in label ranking \citep{Wirth.2017}.

Label ranking aims to predict a complete ordering of a set of items given some context \citep{Furnkranz.2008,Hullermeier.2008}.
%
Unlike standard label ranking, which predicts only the single most likely ordering \citep{Vembu.2010}, ProLR exposes the distribution over rankings.
This information, in turn, is only reliable for downstream decisions if the model is well calibrated \citep{Filho.2023}.
In particular, calibration ensures that expressed uncertainty reflects true outcome variability, which is desirable for pluralistic alignment where consensus and contestation should be distinguished.
%

%
In ranking-related settings such as recommender systems \citep{DaSilva.2025}, calibration has been viewed as aligning predicted item scores with underlying user scores.
These score-based calibration methods \citep{Yan.2022,Sculley.2010} then aim at aligning scores to correctly reflect how likely a user is in favor of an item to be ranked.
A different line of work defines calibration via a pivot point that separates relevant from non-relevant items within a ranking \citep{Furnkranz.2008}. 
In contrast, aligning a predicted distribution to the underlying distribution over rankings, as required in ProLR, remains unexplored.
%

%
One can easily obtain a notion of calibration by treating the label ranking problem as a multi-class classification problem, with each ranking considered a class.
However, the resulting notion of calibration has some disadvantages for practical applications. 
First, there is a factorial number of (ranking) classes, which makes the quantitative measurement of this kind of calibration computationally difficult even for a moderate number of labels.  
What complicates matters further is that, in general, the rankings observed reflect only a tiny fraction of all possible outcomes.  
%
Furthermore, the metric structure of the ranking space is not reflected in the straightforward calibration notion for multi-class classification.
For example, the rankings $i_1 \succ i_2 \succ i_3$ and $i_1 \succ i_3 \succ i_2$ both share the common sub-ranking $i_1 \succ i_2$, and, as such, calibration on the former may induce calibration on the latter (see \autoref{section:calibration_prolr}), which remains unreflected in the naive classification viewpoint.
Finally, important modalities for practical applications, including pairwise and top-k rankings, are not taken into account.

\begin{figure*}[ht!]
    \centering
    \usetikzlibrary{positioning, arrows.meta, calc, fit}

\begin{tikzpicture}[
    scale=0.85,
    transform shape,
    node distance=1.6cm and 2.4cm,
    font=\small,
    calib/.style={
        rectangle,
        rounded corners,
        align=center,
        text width=3.6cm,
        minimum height=1.1cm,
        inner sep=6pt
    },
    main/.style={
        calib,
        fill=blue!25,
        draw=blue!70,
        font=\small\bfseries,
        text=blue!10!black
    },
    primary/.style={
        calib,
        fill=blue!10,
        draw=blue!55,
        text=blue!30!black
    },
    secondary/.style={
        calib,
        fill=white,
        draw=blue!40,
        text=blue!40!black
    },
    ref/.style={
        font=\scriptsize,
        draw=none,
        fill=none,
        text width=3.4cm,
        align=center
    },
    groupbox/.style={
        draw=blue!35,
        rounded corners=6pt,
        dashed,
        inner sep=10pt,
        fill=blue!15,
        fill opacity=0.18
    },
    arrow/.style={-{Latex[length=3mm,width=2mm]}, line width=0.9pt, draw=blue!70}
]

\node[primary] (full_rank) {Full Rank Calibration\\(Definition \ref{def:full_k_calib})};
\node[main, below=0.5cm of full_rank] (rankwise_calib) {Rankwise Calibration\\(Definition \ref{def:rankwise_calib})};
\node[primary, left=1.5cm of rankwise_calib] (sub_k_calib) {Sub-k Calibration\\(Definition \ref{def:sub_k_calib})};
\node[primary, right=1.5cm of rankwise_calib] (top_k_calib) {Top-k Calibration\\(Definition \ref{def:top_k_calib})};
\node[secondary, below=1.4cm of sub_k_calib] (rankwise_sub_k_calib) {Rankwise Sub-k Calibration\\(Definition \ref{def:rankwise_sub_k_calib})};
\node[secondary, below=1.4cm of top_k_calib] (rankwise_top_k_calib) {Rankwise Top-k Calibration\\(Definition \ref{def:rankwise_top_k_calib})};

\node[groupbox, label={[font=\footnotesize\bfseries, text=blue!55,xshift=-1cm]90:{Sub-k Family}}]
    (sub_group) [fit=(sub_k_calib)(rankwise_sub_k_calib)] {};
\node[groupbox, label={[font=\footnotesize\bfseries, text=blue!55,xshift=1cm]90:{Top-k Family}}]
    (top_group) [fit=(top_k_calib)(rankwise_top_k_calib)] {};

\draw[arrow] (full_rank) -- node[ref, pos=0.58, xshift=1.25cm]{\autoref{th:full_rank_rankwise_calib}} (rankwise_calib);
\draw[arrow] (full_rank) to[out=180, in=90]
    node[ref, pos=0.58, below, yshift=0.5cm, xshift=1cm]{\autoref{th:full_rank_is_sub_k} (i)}
    (sub_k_calib);
\draw[arrow] (full_rank) to[out=360, in=90]
    node[ref, pos=0.58, below, yshift=0.5cm, xshift=-1cm]{\autoref{th:full_rank_is_sub_k} (ii)}
    (top_k_calib);

\draw[arrow] (sub_k_calib) -- node[ref, pos=0.52, below, yshift=0.2cm, xshift=-1cm]{\autoref{th:sub_k_rankwise_sub_k} (i)} (rankwise_sub_k_calib);
\draw[arrow] (top_k_calib) -- node[ref, pos=0.52, below, yshift=0.2cm, xshift=1cm]{\autoref{th:sub_k_rankwise_sub_k} (ii)} (rankwise_top_k_calib);


\end{tikzpicture}
    \caption{Overview of the calibration definitions and their relationships (an outgoing arrow means implication, whereas dashed only hold for specific model classes). Exclusion results are depicted in \cref{fig:overview_exclusive_figure}.}
    \label{fig:overview_figure}
\end{figure*}

%
We therefore introduce the notion of calibration in ProLR, capturing calibration at different granularities, ranging from calibration over the full item set to calibration over selected subsets.
%
Building upon the two forms of strong and weak calibration in multi-class classification, we develop a hierarchy of calibration notions for ProLR.
We theoretically investigate the relationships between these notions, showing that calibration on sub-items does not imply calibration on all items.
%

\paragraph{Contribution.}
Our main contributions are as follows:
\begin{enumerate}
    \item \textbf{Calibration in Label Ranking:} We introduce calibration notions tailored to probabilistic label ranking, extending beyond what can be captured by viewing label ranking as a multi-class classification problem.
    \item \textbf{Unified theoretical picture (\autoref{fig:overview_figure} and \autoref{fig:overview_exclusive_figure}):} We establish theoretical relationships between the proposed calibration notions, showing that calibration on sub-rankings does not imply calibration on full-rankings, also for widely used Plackett--Luce and Mallows models.
    \item \textbf{Empirical Investigation:} We empirically evaluate the calibration properties of popular label ranking learners, illustrating differences between sub-ranking and full-ranking calibration. We also examine calibration of reward models in a popular RLHF benchmark. 
\end{enumerate}

\section{Related Work}
\paragraph{Probabilistic Calibration}
Prior work on multi-class calibration has established a hierarchy of increasingly strong notions, ranging from calibration of only the most likely class \citep{Guo.2017}, through \emph{class calibration} \citep{Zadrozny.2001}, which decomposes into one-vs-rest binary tasks, to \emph{multi-class calibration} \citep{Widmann.2019}, which requires calibrated distributions over all classes.
This hierarchy has been extended to the case of regression \citep{Song.2019, Widmann.2021} and set-based probabilistic classifiers \citep{Mortier.2023,Juergens.2025}.
Our work introduces an analogous hierarchy of calibration notions for the label ranking setting, additionally considering special cases such as pairwise and top-k calibration.  
%
%
Calibration has been investigated for popular machine learning algorithms, such as decision trees \citep{Zadrozny.2001}, random forests \citep{Shaker.2025} and neural networks \citep{Mukhoti.2020,Wang.2023}.
In line with this, we investigate calibration of popular ranking models such as Plackett--Luce and Mallows models, showing that they are often poorly calibrated for large item sets.
For an extensive overview of calibration methods, we refer to the reviews by \citet{Filho.2023} and \citet{Lane.2025}.

\paragraph{Label Ranking and Calibration.}

The field of preference learning has contributed significantly to the rise of LLMs, particularly to their fine-tuning \citep{Ouyang.2022,Kaufmann.2025}. 
This development was accompanied by new research impulses in subfields of preference learning, such as label ranking.
New learning methods \citep{korba2018structured, adam_inferring_2024, Thies.2024, zhou_heuristic_2024} and extensions of the standard setting have been introduced, such as the partial label ranking \citep{alfaro_learning_2021, alfaro_pairwise_2023} or dyad ranking \citep{schafer2018dyad} problem.
Further work has improved inference for probabilistic ranking models, such as the popular Mallow's model \citep{Kenig.2018,Ping.2020}.
%
Yet surprisingly, the concept of calibration has been completely neglected so far in label ranking, even though (i) calibration is of special interest to the emerging topic of uncertainty quantification in ML \citep{Huellermeir.2021} and (ii) a well-calibrated label ranking model could be a useful tool to capture annotator differences and train models that reflect these nuances.

\paragraph{Calibration in other Preference Learning Settings}
In contrast to ProLR, calibration in other ranking-based domains, particularly recommender systems, has received considerable interest.
It should be noted that, in this domain, calibration refers to aligning predicted scores with the user's underlying latent scores.
In ProLR, however, we consider aligning probability estimates to the true probabilities of the data-generating process.
\citet{Sculley.2010} were the first to consider calibrated scores alongside accurate rank predictions.
\citet{Li.2015, Steck.2018, Penha.2021} show that solely focusing on prediction quality in recommender systems deteriorates calibration, resulting in reduced user satisfaction.
Calibration of top-k rankings based on recommender systems has been considered by \citet{Sato.2024}.
Similarly, \citet{Yan.2022, Zhang.2024} deal with increasing user satisfaction in recommender systems via a two-component loss function, where the latter part encourages calibration. 
To improve calibration, non-parametric \citep{Menon.2012} and parametric methods \citep{Kweon.2022} have been introduced, generalizations of classification methods.
We refer to  \citet{DaSilva.2025} for a broader overview.

\paragraph{Calibration in RLHF.}
The term calibration appears in multiple senses within the RLHF literature.
One line of research studies calibration of LLM output confidence: whether models express well-calibrated uncertainty in their answers \citep{Zhu.2023,Tian.2023,Stangel.2025}.
A related question for our work is whether reward models themselves are calibrated.
\Citet{Halpern.2025} formalize pairwise calibration for pluralistic alignment:
they train ensembles of reward functions such that the ensemble prediction is calibrated with respect to the empirical annotator distribution.
\Citet{Kaufmann.2025.b} focuses on training a single reward model that captures utility differences well, thereby producing better-calibrated Bradley-Terry probabilities.
Our work is more closely aligned with the second direction, as we focus on calibrating reward models themselves.
It differs from prior work by formalizing calibration \emph{notions} for ranking distributions, offering a framework for measuring and comparing calibration across prediction granularities and beyond RLHF, rather than proposing training methods.

\section{Probabilistic Label Ranking}
\label{section:ProLR}
Here, we introduce the general notation of probabilistic label ranking and formalize the learning problem.
We then briefly review common probabilistic ranking models, including the Plackett--Luce and Mallows models.
Finally, we discuss the connection between the Plackett--Luce model and the widely adopted learning paradigm of ranking by pairwise comparison.
For a more general overview of label ranking, see \citep{Vembu.2010,Zhou.2014}.
%

\paragraph{Notation.}
Let $\X$ denote the feature space, $\I$ the set of items, and $\B \subseteq \I$ a subset of items.
The set $\Sy{\I}$ consists of all rankings of $\I$, with $m = |\I|$ denoting the number of items.
A ranking $\pi \in \Sy{\I}$ is a mapping $\pi: \I \rightarrow \mathbb{N}$, which assigns each item its corresponding position in the ranking.
We write a ranking as $\pi = i \succ j \succ k$ instead of $\pi(i)=1, \pi(j)=2, \pi(k)=3$.
We also use the $\succ$ operator to represent a ranking $\pi$ through (sub-)rankings $\pi_1,\pi_2$, e.g., $\pi = \pi_1 \succ\pi_2.$
For each ranking $\pi$, we denote by $\pi^{-1}: \mathbb{N} \rightarrow \I$ the mapping assigning each position its corresponding item, e.g., if $\pi = i_1 \succ i_3 \succ i_2$, then $\pi^{-1}(1)=i_1$, $\pi^{-1}(2)=i_3$, and $\pi^{-1}(3)=i_2$.
Furthermore, given $\rho \in \Sy{\B}$ with $\B \subseteq \I$, we write $\rho \subseteq_\B \pi$ if
$\forall i,j \in \B:\ \rho(i) < \rho(j) \Leftrightarrow \pi(i) < \pi(j)$ and omit $\B$ when it is clear from context.
We denote random variables by capital letters $X, \Pi$, sets by calligraphic or script letters such as $\X, \Sy{\I}$, and realizations by lowercase letters $x, \pi$.
The probability of an event $A$ is denoted by $P(A)$, while $\mathbb{P}(\mathcal{A})$ denotes the set of all probability distributions over a set $\mathcal{A}.$

\paragraph{Probabilistic Label Ranking.}
Let a hypothesis space $\Hy$ be given, where a model $h \in \Hy$ is a mapping $h:\X \rightarrow \Prob(\Sy{\I})$.
For a ranking $\pi \in \Sy{\I}$ and a given context $x \in \X$, $h(x)$ denotes a distribution over $\Sy{\I}$, and $h(x)[\pi]$ denotes the probability assigned to $\pi$.
A model $h$ is learned by minimizing a chosen loss function on a dataset $\mathcal{D} = \{(x_i,\pi_i)\}_{i=1}^n$ with $x_i \in \X$ and $\pi_i \in \Sy{\I}$.
Throughout this work, the data points are assumed to be independent and identically distributed.
Similarly to standard supervised learning, a common approach in probabilistic label ranking is to induce a parametrized ranking model of the form $h(x) = f(g(x))$,
where $f:\Theta \rightarrow \Prob(\Sy{\I})$ is a parametrized ranking distribution and $g:\X \rightarrow \Theta$ predicts the per-instance parameters.
Since $f$ is fixed once $\theta \in \Theta$ is given, learning reduces to finding the optimal function $g$.

\paragraph{Plackett--Luce Model.}
A prominent class of models for $f$ is latent utility models, which are parametrized by  $\theta \in \Theta \subset \mathbb{R}_+^m$. 
Widely adopted is the Plackett--Luce (PL) model, which assigns a probability to a ranking $\pi \in \Sy{\I}$ via
\begin{equation}
    q^{\vec \theta}[\pi] = \prod\nolimits_{i=1}^m \tfrac{\vec \theta_{\pi^{-1}(i)}}{\sum_{j=i}^m \vec \theta_{\pi^{-1}(j)}} \,\text{.}
\end{equation}
The parameters are typically learned by minimizing the negative log-likelihood of the PL model on the training data.
As the PL model is identifiable only up to a positive scaling factor \citep{Alvo.2014}, the parameters are commonly constrained such that
$\sum_{i=1}^m \theta_i = 1$ for all $x \in \X$ \footnote{For stability, $g$ often outputs log scores: $\theta_i = \exp(g(x)_i)$.}. 
We denote the resulting hypothesis space  by $\Hy_{PL}.$
For two-item rankings, i.e., pairwise comparisons, PL corresponds to the Bradley--Terry \citep{Bradley.1952} model (see \citet{Hunter.2004}) defined as
    $
    P(i \succ j) = \tfrac{\theta_i}{\theta_i + \theta_j} \,\text{.}
$
%

\paragraph{Mallows Model.}
In contrast to latent utility models, another popular class of models for $f$ is distance-based models.
The most prominent example is the Mallows model \citep{Mallows.1957}, parametrized by $(\tau,\lambda) \in \Theta = \Sy{\I} \times \mathbb{R}_+$.
The probability of a ranking $\pi$ under the Mallows model $(\tau,\lambda)$ is
\begin{equation}
    q^{(\tau, \lambda)}[\pi] = \tfrac{1}{C(\lambda,\tau)}\exp(-\lambda d(\pi,\tau)) \,\text{,}
\end{equation}
where $C(\lambda,\tau) = \sum_{r \in \Sy{\I}} \exp(-\lambda d(r, \tau)) $ is a normalization constant and
$d:\Sy{\I} \times \Sy{\I} \rightarrow \mathbb{R}$ is a distance defined on rankings.
In practice, both parameters are learned by alternating between optimizing $\lambda$ and updating $\tau$ until convergence \citep{Busse.2007}.
Throughout this work, we use the Kendall distance for $d$ \citep[see][]{Alvo.2014} and denote the resulting hypothesis space  by $\Hy_{MM}.$

\paragraph{Ranking by Pairwise Comparison.}
Finally, we consider the approach of \emph{ranking by pairwise comparison} (RPC) \citep{Hullermeier.2004}.
In this approach, a ranking model of the form $h(x) = f(g(x))$ is learned, where $f: [0,1]^{m\times m } \rightarrow \Prob(\Sy{\I})$ is a mapping from pairwise preference probabilities to a predictive ranking distribution and $g:\X \rightarrow  [0,1]^{m\times m }$ predicts the pairwise preference probabilities for each instance $x \in \X$.
The idea is (i) to learn pairwise preference probabilities via $g$ for every item pair by reducing the label ranking problem to $\binom{m}{2}$ binary classification problems -- one for each pair, and (ii) to aggregate these pairwise preference probabilities to a predictive ranking (distribution) via $f$.
For each item pair $(i,j)$, the dataset
$\D_{i,j} = \{(x,\mathbbm{1}_{\pi \supseteq i \succ j}) \mid (x,\pi)\in \D\}$ is considered
for training a binary classifier $g_{i,j}:\X \rightarrow [0,1]$.
This leads to a pairwise probability matrix
$\mathbf{M}(x) = [g_{i,j}(x)]_{i,j \in \I}$, with $g_{j,i}(x)=1-g_{i,j}(x)$.
While there are several options available for $f$ \citep[see][]{Fahandar.2017}, they often yield degenerate predictive distributions.
In ProLR, aggregations for $f$ are of interest that can produce non-degenerate predictive distributions.
One way to achieve this is by interpreting the entries of $\mathbf{M}(x)$ as Bradley--Terry probabilities and recovering Plackett--Luce weights $\theta_1,\dots,\theta_m$  such that
$\frac{\theta_i}{\theta_i+\theta_j} \approx g_{i,j}(x),$
e.g., by using the improved Zermelo algorithm \citep{Newman.2023}.

\section{Calibration in ProLR}
\label{section:calibration_prolr}
To introduce meaningful calibration concepts in label ranking, we follow the very principle of calibration.
We will briefly examine this idea in its two common variants for multi-class classification, then apply it to label ranking and adapt it to pairwise and top-k rankings.
All proofs can be found in \autoref{appendix:proofs}.

\paragraph{Calibration in Multi-class Classification}
In multi-class classification, we are given a finite set of outcomes $\Y$, for which a model $h: \X \rightarrow \Prob(\Y)$ is learned.
The outcomes are distributed according to some random variable $Y$.
We call a model \emph{strongly calibrated} if it holds for all $y \in \Y$ that: 
\begin{equation}\label{eq:multiclass_strong_calib}
    \forall p \in \Prob(\Y): P(Y=y \cond h(X)=p)=p[y]\,\text{.}
\end{equation}
In words, if a model $h$ is strongly calibrated, the probability of class $y$ given the predicted distribution $h(X)=p$ is exactly $p[y]$.
For example, in weather forecasting with $\mathcal Y=\{\text{sun},\text{rain},\text{snow}\}$, strong calibration requires that among all days assigned prediction $(0.5, 0.3, 0.2)$, outcome frequencies match it.

One can weaken this notion naturally by restricting the condition to the respective entry of $h(X)$.
Formally, a classifier is called \emph{weakly calibrated} if it holds for all $y \in \Y$ that
\begin{equation}\label{eq:multiclass_weak_calib}
    \forall \alpha \in [0,1]: P(Y=y \cond h(X)[y] = \alpha)=\alpha\,\text{.}
\end{equation}
In the weather example, among days where rain is predicted with probability $\alpha$, it rains on an $\alpha$-fraction (and analogously for sun and snow).
These can diverge because calibration pools by the outcome's probability: predictions $(0.6, 0.3, 0.1)$ and $(0.4, 0.3, 0.3)$ (perhaps reflecting different seasonal patterns) both assign $0.3$ to rain, so weak calibration pools them.
If rain frequencies are $0.2$ and $0.4$ respectively, they average to $0.3$ (weak holds), but neither full prediction matches its true distribution (strong fails).

\paragraph{Calibration for Full Rankings in ProLR}
%
Translating the concepts to label ranking can be easily done by viewing the problem as a multi-class classification problem, where each ranking is interpreted as a class.
Thus, for an item set $\I$ with $|\I| = m$, there exist $m!$ different rankings, which correspond to $m!$ different classes.
With this, we introduce the notion of full-rank calibration (\autoref{def:full_k_calib}), which is the label ranking equivalent to strong calibration in the sense of multi-class classification.
\begin{definition}
\label{def:full_k_calib}
A model $h \in \mathcal{H}$ is \emph{full-rank calibrated} if and only if it holds for all $\pi \in \Sy{\I}$ that
    \begin{equation}
\label{eq:full_calb}
    \forall p \in \mathbb{P}(\Sy{\I}): \; P(\Pi=\pi \cond h(X)=p) =  p[\pi] \, .
\end{equation}
\end{definition}
In a similar fashion, we introduce the notion of \emph{rankwise calibrated} (\autoref{def:rankwise_calib}), which considers conditioning only on the probability of a specific ranking, instead of a full probability distribution as in full-rank calibration.
\begin{definition}
\label{def:rankwise_calib}
    A model $h \in \mathcal{H}$ is \emph{rankwise} calibrated if and only if it holds for all $\pi \in \Sy{\I}$ that
    \begin{equation}
        \label{eq:rankwise_calb}
        \forall \alpha \in [0,1]: \; P(\Pi=\pi \cond h(X)[\pi] = \alpha) =  \alpha \, .
    \end{equation}
\end{definition}
As an example, consider predicting musical taste with items \emph{rock}, \emph{pop}, and \emph{classical}.
Rankwise calibration requires that among all people for whom a ranking like $\text{rock} \succ \text{pop} \succ \text{classical}$ is predicted with probability $\alpha$, it occurs for an $\alpha$-fraction.
Full-rank calibration requires that among all people assigned the same distribution over all six rankings, empirical frequencies match it.
As with weather, these can diverge because rankwise calibration pools by one ranking's probability:
different full predictions (e.g., different age groups) assigning the same probability to $\text{rock} \succ \text{pop} \succ \text{classical}$ are pooled, allowing subgroups to average correctly while neither matches the full prediction.
One can easily show that full-rank calibration implies rankwise calibration, as formulated in \autoref{th:full_rank_rankwise_calib}.
\begin{theorem}\label{th:full_rank_rankwise_calib}
    For an item set $\I$ and rankings $\Sy{\I}$, it holds
    \begin{itemize}
        \item [] $
    h \ \text{is full-rank calibrated }
    \Rightarrow
    h \ \text{is rankwise calibrated.}
    $
    \end{itemize}
\end{theorem}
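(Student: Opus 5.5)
The plan is to use the tower property of conditional expectation: the event $\{h(X)[\pi]=\alpha\}$ is measurable with respect to $h(X)$, so conditioning on it is coarser than conditioning on the full predicted distribution $h(X)$.

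Fix $\pi \in \Sy{\I}$ and $\alpha \in [0,1]$ with $P(h(X)[\pi]=\alpha)>0$. For $\alpha$ of probability zero, read the claim as a statement about regular conditional probabilities, holding almost surely.

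\textbf{Step 1.} Rewrite full-rank calibration (\autoref{def:full_k_calib}) as the almost-sure identity
\begin{equation*}
\mathbb{E}\bigl[\mathbbm{1}_{\Pi=\pi} \cond h(X)\bigr] = h(X)[\pi] .
\end{equation*}

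\textbf{Step 2.} Let $Z = h(X)[\pi]$. This is a measurable function of $h(X)$, since evaluation at $\pi$ is a coordinate projection. The tower property therefore gives
\begin{equation*}
\mathbb{E}[\mathbbm{1}_{\Pi=\pi} \cond Z] = \mathbb{E}\bigl[\mathbb{E}[\mathbbm{1}_{\Pi=\pi}\cond h(X)] \cond Z\bigr] = \mathbb{E}[Z \cond Z] = Z .
\end{equation*}

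\textbf{Step 3.} Evaluate this at $Z=\alpha$. The result is $P(\Pi=\pi \cond h(X)[\pi]=\alpha)=\alpha$, which is the rankwise condition of \autoref{def:rankwise_calib}.

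\textbf{Discrete case.} If $h(X)$ takes countably many values, the same argument is a finite sum:
\[
P(\Pi=\pi \cond Z=\alpha)=\sum_{p:\,p[\pi]=\alpha} P(\Pi=\pi\cond h(X)=p)\,P(h(X)=p\cond Z=\alpha)=\alpha\sum_{p} P(h(X)=p\cond Z=\alpha)=\alpha.
\]

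\textbf{Main obstacle.} The only delicate point is measure-theoretic: conditioning on null events when $h(X)$ is continuous. I would handle it by stating both definitions in their conditional-expectation form from the outset, so that every equality above is understood almost surely.
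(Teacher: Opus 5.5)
Your proposal is correct and is essentially the paper's argument: the paper applies the law of total probability over $\{p : p[\pi]=\alpha\}$ with respect to the conditional law of $h(X)$ given $h(X)[\pi]=\alpha$, uses full-rank calibration to make the integrand the constant $\alpha$, and integrates the conditional measure to one. That is exactly your tower-property step, and your discrete-case sum, written as an integral; your almost-sure conditional-expectation framing just handles the null-event issue more carefully than the paper, which leaves it implicit.
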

Intuitively, a model can be rankwise calibrated yet not full-rank calibrated, as the latter is a more demanding property.
As \autoref{th:rankwise_not_full_rank_calib} shows, this intuition holds under some mild assumptions
satisfied by PL and Mallows models.

\paragraph{Calibration for Key Modalities in ProLR.}
As noted in the introduction, the classification viewpoint ignores relationships between rankings and important modalities such as pairwise and top-k rankings.
To address this, we introduce calibration notions for sub-rankings and top-k rankings, projecting full rankings onto the sub-rankings of interest and marginalizing the resulting distributions accordingly.

Consider a subset of items $\B \subseteq \I$ with $|\B|\geq 2$ for which $\rho \in \Sy{\B}$ is a partial ranking of interest.
Then we denote by 
\begin{equation}\label{eq:sub_k_set}
    \mathcal{P}_\B(\rho) = \{ \pi \in \Sy{\I} \cond \pi \supseteq \rho \}\,\text{,}
\end{equation}
the set of rankings $\pi$ that contain $\rho$ as a partial ranking.
The set $\subk{\rho}$ contains all rankings $\pi$ for which the items in $\B$ are ranked according to $\rho$, while the remaining items in $\I \setminus \B$ can be ranked arbitrarily.

Another special case of sub-rankings are top-k rankings, which only consider the items ranked in the first $k$ positions.
For these,  we introduce 
\begin{equation}\label{eq:top_k_set}
        \mathcal{T}_\B(\rho) =  \{ \pi \in \Sy{\I} \cond \exists r \in \Sy{\I \setminus \B}: \pi = \rho \succ r\}\,\text{,}
\end{equation}
for each  $\B \subseteq \I$ with $|\B|\geq 1$ and  $\rho \in \Sy{\B}.$ 
In words, the set of rankings that coincide with $\rho$ on the top positions.
Here, the length of the top positions is determined by the cardinality of $\B$.

As an example to illustrate both concepts, consider $\I = \{1,2,3\}$ and $\B=\{1,2\}$ with $\rho = 1 \succ 2$. Here, we obtain
\begin{align*}
     \P_\B(\rho) &= \{ 1 \succ 2 \succ 3,\; 1 \succ 3 \succ 2,\; 3 \succ 1 \succ 2 \}\,\text{,} \\
     \T_\B(\rho) &= \{ 1 \succ 2 \succ 3 \}\,\text{.}
\end{align*}
Notice that $\topk{\rho} \subseteq \subk{\rho}$ holds in general and $\P(\rho) = \T(\rho) = \{\rho\}$ only in the case of $\B = \I$.
In the remainder of this work, we omit $\B$ from the subscripts of $\subk{\rho}$ and $\topk{\rho}$, where $\B$ is clear from context.
As a matter of fact, the cardinality of $\B$ will play a more central role in what follows, and we assume throughout that $|\B| = k$.
Accordingly, we refer to $\mathcal{P}$ as the \emph{sub-k subsets} and $\mathcal{T}$ as the \emph{top-k subsets}.

These two sets correspond to the respective modality, i.e., sub-k rankings or top-k rankings. 
The latter are of particular interest, as the evaluation of predictions in downstream tasks often considers top-rated items (top-k) or comparisons of specific subsets of items (sub-k), despite models in ProLR predicting full rankings.
Alternative subsets, such as worst-k, are much less practically relevant and therefore not considered here.
Next, we introduce a probability distribution $p \in \Prob(\Sy{\I})$  suitable marginalizations for both subsets $\P$ and $\T$.
\begin{definition}
\label{def:sub_k_top_k_marginal}
    For a set of items $\B \subseteq \I$ with $|\B| = k$  and sub-ranking $\rho \in \Sy{\B}$, the \emph{sub-k marginal distribution} 
    of a distribution $p \in \Prob(\Sy{\I})$ is
    $$
            \proj{p}{\P}[\rho] = \sum\nolimits_{\pi \in \subk{\rho}} \; p[\pi]\,\text{,}
    $$
    and the \emph{top-k marginal distribution } 
    is given by
    \begin{center}
            $
            \proj{p}{\T}[\rho] = \sum\nolimits_{\pi \in \topk{\rho}} \; p[\pi]\,\text{.}
    $
    \end{center}
\end{definition}
Note that the sub-k marginal distribution is an element of  $\Prob(\Sy{\B}),$ i.e., a probability distribution on the set of rankings over the item set $\B,$ while the top-k marginal distribution is an element of $\Prob( \cup_{\B\subset\I:|\B|=k}\Sy{\B}),$ i.e., a probability distribution on the set of all rankings of size $k$ consisting of elements in $\I.$ 
We now consider calibration for the above marginal distributions with respect to a model $h$ that outputs distributions on $\Sy{\I}$.
First, we define the (full-rank) sub-k calibration again by conditioning on an entire probability distribution $q \in \mathbb{P}(\Sy{\B})$.
Here, we consider the sub-k marginal of $h(X)$ for all sub-k subsets.

\begin{definition}
\label{def:sub_k_calib}
    A model $h \in \mathcal{H}$ is \emph{sub-k} calibrated if for every $\B \subseteq \mathcal{I}$ and $|\B| = k$ it holds for all $\rho \in \Sy{\B}$ that
    \begin{equation}
    \label{eq:sub_k_calb}
     \forall q \in \mathbb{P}(\Sy{\B}): \; P(\proj{\Pi}{\B}=\rho \cond \proj{h}{\P}(X) = q) = q[\rho],
\end{equation}
where $\proj{\Pi}{\B}$ denotes the projection of $\Pi$ onto the items in $\B$.
\end{definition}
Note that for $k=2$ we recover the modality of pairwise rankings, which is of special interest in Reinforcement Learning from Human Feedback (RLHF).
Similarly, we introduce \emph{top-k} calibration by considering the top-k marginal of $h(X)$ for all top-k subsets.
\begin{definition}
\label{def:top_k_calib}

    A model $h \in \mathcal{H}$ is \emph{top-k} calibrated if  for every $\B \subseteq \mathcal{I}$ and $|\B| = k$  it holds for all $\rho \in  \Sy{\B}$ and all  $q \in \mathbb{P}( \cup_{\B\subset\I:|\B|=k} \Sy{\B}):$
    \begin{equation}
    \label{eq:top_k_calb}
     P(\proj{\Pi}{\B}=\rho \cond \proj{h}{\T}(X) = q) = q[\rho] \, .
\end{equation}
\end{definition}
In other words, the probability for all events, where the sub-k (top-k) marginalised observed ranking is $\rho$ conditional on the sub-k (top-k) marginalised model predicting  $q$, is exactly the probability that $\rho$ has within $q$. 
For example, in cases where the model predicts the probability $p_{i,j}$ for the pairwise comparison $ i \succ j$ (e.g., \textit{rock} $\succ$ \textit{pop}) and we consider these pairwise comparisons precisely, they occur in exactly $p_{i,j}$ of the cases.
Another way to interpret sub-k or top-k calibration of a model $h$ is that its sub-k or top-k marginal distribution is full-rank calibrated on every subset  $\B \subseteq \mathcal{I}$ of size $k$.

\paragraph{Relationship of strong calibration notions in ProLR.}
Intuitively, both sub-k and top-k calibration pose a weaker notion than full-rank calibration, as they both consider sub-rankings of size $k$ instead of full-rankings of size $m$.
Indeed, one can show that full-rank calibration implies both sub-k and top-k calibration, reflected in the following theorem.
\begin{theorem}
\label{th:full_rank_is_sub_k}
    For every finite item set $\I$ and $k \leq m$:
    \begin{itemize}
        \item [(i)] $
    h \ \text{is full-rank calibrated }
    \Rightarrow
    h \ \text{is sub-k calibrated,}
    $
        \item [(ii)] $
    h \ \text{is full-rank calibrated }
    \Rightarrow
    h \ \text{is top-k calibrated.}
    $
    \end{itemize} 
\end{theorem}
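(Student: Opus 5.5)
The plan is to view both sub-k and top-k calibration as calibration with respect to a \emph{coarsening} of the model's prediction, and then apply the tower property of conditional expectation. The key observation is that the marginalisation maps $\phi_\P : p \mapsto \proj{p}{\P}$ and $\phi_\T : p \mapsto \proj{p}{\T}$ from \autoref{def:sub_k_top_k_marginal} are deterministic functions of the full prediction. Hence $\proj{h}{\P}(X) = \phi_\P(h(X))$ and $\proj{h}{\T}(X) = \phi_\T(h(X))$, so the $\sigma$-algebra generated by the marginal prediction is contained in the one generated by $h(X)$. This is the same mechanism behind \autoref{th:full_rank_rankwise_calib}, where the coarsening is $p \mapsto p[\pi]$.

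For (i), I fix $\B$ with $|\B| = k$, a ranking $\rho \in \Sy{\B}$ and a $q \in \Prob(\Sy{\B})$. First I rewrite the outcome event as a union of full-ranking events: $\{\proj{\Pi}{\B} = \rho\} = \{\Pi \in \subk{\rho}\}$, a disjoint union over $\pi \in \subk{\rho}$. Full-rank calibration (\autoref{def:full_k_calib}) gives, almost surely,
$P(\Pi = \pi \cond h(X)) = h(X)[\pi]$. Summing over $\pi \in \subk{\rho}$ yields
\[
P(\Pi \in \subk{\rho} \cond h(X)) = \sum\nolimits_{\pi\in\subk{\rho}} h(X)[\pi] = \phi_\P(h(X))[\rho].
\]
Conditioning further on $\phi_\P(h(X)) = q$ and using the tower property gives
\[
P(\proj{\Pi}{\B} = \rho \cond \proj{h}{\P}(X) = q) = \mathbb{E}\bigl[\phi_\P(h(X))[\rho] \cond \phi_\P(h(X)) = q\bigr] = q[\rho],
\]
which is \eqref{eq:sub_k_calb}.

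For (ii), I repeat the argument verbatim with $\topk{\rho}$ in place of $\subk{\rho}$ and $\phi_\T$ in place of $\phi_\P$. Here I read the event in \eqref{eq:top_k_calb} as ``the top-$k$ prefix of $\Pi$ equals $\rho$'', i.e.\ $\{\Pi \in \topk{\rho}\}$. This reading is consistent with $\proj{p}{\T}[\rho]$ being the probability of that event under $p$. The sets $\topk{\rho}$ are again disjoint unions of singletons, so the summation and tower steps go through unchanged.

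I expect the main obstacle to be purely measure-theoretic rather than combinatorial. Conditioning on $\{h(X) = p\}$ may involve null events when $h(X)$ has a continuous distribution. I will therefore interpret all calibration identities as almost-sure statements about conditional expectations, i.e.\ $P(A \cond Z) = f(Z)$ a.s. Under that reading the tower property $\mathbb{E}[\mathbb{E}[\cdot \cond h(X)] \cond \phi(h(X))] = \mathbb{E}[\cdot \cond \phi(h(X))]$ is immediate, because $\phi$ is measurable (it is linear). If instead one wants pointwise statements, I would assume that $h(X)$ takes countably many values. In that case each conditional probability is an elementary ratio, and the same computation is a finite or countable weighted average of the values $q[\rho]$.
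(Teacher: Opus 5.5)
Your proposal is correct and follows essentially the same route as the paper. The paper's integral over $\{p : \proj{p}{\P} = q\}$ against the conditional law of $h(X)$ given $\proj{h}{\P}(X)=q$ is exactly your tower-property step, combined with the same disjoint-union decomposition of the outcome event into full-ranking events and an application of full-rank calibration. Your explicit reading of the top-$k$ event as $\{\Pi \in \topk{\rho}\}$ is also what the paper implicitly uses in (ii) when it takes $c=(m-k)!$ summands, and that reading is necessary: with the literal event $\{\proj{\Pi}{\B}=\rho\}$, statement (ii) already fails for $k=1$.
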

Considering the converse direction, we can show that neither sub-k nor top-k calibration implies full-rank calibration.
Intuitively, this stems from the fact that both sub-k and top-k calibration consider only sub-rankings of size $k$, hence ignoring information about the remaining items.
Therefore, modifications on the entire distribution $h(X) \in \Prob(\Sy{\I})$ can cancel themselves out in the sub-k or top-k marginal distributions, yet lead to violations of full-rank calibration.

\begin{table}[ht]
\centering
\caption{
Counterexample showing that sub-2 calibration does not imply full-rank calibration.
Sub-2 marginals satisfy calibration for both $x_1$ and $x_2$, whereas the full ranking probabilities violate calibration for both predictions.
}
\label{tab:counterexample_general_prob}
\vspace{0.25em}
\resizebox{\columnwidth}{!}{%
\begin{tabular}{@{}lcccc@{}}
\toprule
$\pi$ 
& $P(\Pi|x_1)$ 
& $P(\Pi|x_2)$ 
& $h(x_1)$ 
& $h(x_2)$ \\
\midrule
$i_1\!\succ\! i_2\!\succ\! i_3$ & $\nicefrac{1}{6}$ & $\nicefrac{1}{6}$ & $\nicefrac{2}{6}$ & $\nicefrac{2}{6}$ \\
$i_1\!\succ\! i_3\!\succ\! i_2$ & $\nicefrac{1}{6}$ & $\nicefrac{1}{6}$ & $\nicefrac{1}{12}$ & $\nicefrac{1}{12}$ \\
$i_2\!\succ\! i_1\!\succ\! i_3$ & $\nicefrac{1}{6}$ & $\nicefrac{1}{6}$ & $\nicefrac{1}{12}$ & $\nicefrac{1}{12}$ \\
$i_2\!\succ\! i_3\!\succ\! i_1$ & $\nicefrac{1}{6}$ & $\nicefrac{1}{6}$ & $\nicefrac{1}{12}$ & $\nicefrac{1}{12}$ \\
$i_3\!\succ\! i_1\!\succ\! i_2$ & $\nicefrac{1}{6}$ & $\nicefrac{1}{6}$ & $\nicefrac{1}{12}$ & $\nicefrac{1}{12}$ \\
$i_3\!\succ\! i_2\!\succ\! i_1$ & $\nicefrac{1}{6}$ & $\nicefrac{1}{6}$ & $\nicefrac{2}{6}$ & $\nicefrac{2}{6}$ \\
\rowcolor[gray]{0.92}
\multicolumn{5}{c}{\textbf{Sub-2 marginals}} \\
$i_1\!\succ\! i_2$ & $\nicefrac{1}{2}$ & $\nicefrac{1}{2}$ & $\nicefrac{1}{2}$ & $\nicefrac{1}{2}$ \\
$i_1\!\succ\! i_3$ & $\nicefrac{1}{2}$ & $\nicefrac{1}{2}$ & $\nicefrac{1}{2}$ & $\nicefrac{1}{2}$ \\
$i_2\!\succ\! i_3$ & $\nicefrac{1}{2}$ & $\nicefrac{1}{2}$ & $\nicefrac{1}{2}$ & $\nicefrac{1}{2}$ \\
\bottomrule
\end{tabular}%
}
\end{table}
\Cref{tab:counterexample_general_prob} shows an example of a model that is sub-$k$ calibrated yet not full-rank calibrated. An example of a top-$k$ calibrated model that is not full-rank calibrated is given in \cref{tab:counterexample_top1_not_rankwise}.
More general results dealing with all values of $k$ are Theorems \ref{th:sub_k_calib_not_full_rank_calib} and \ref{th:top_k_calib_not_full_rank_calib}.
Note that neither sub-k nor top-k calibration implies the other, as the former considers all rankings containing a specific sub-ranking, while the latter only considers rankings where the sub-ranking appears in the top-k positions.
Again, modifications on the entire distribution $h(X) \in \Prob(\Sy{\I})$ can cancel themselves out in one of the marginal distributions, yet lead to violations in the other.
Thus, a model can be sub-k calibrated yet not top-k calibrated, and vice versa (see Theorems \ref{th:sub_k_calib_not_top_k} and \ref{th:top_k_calib_not_sub_k}).
This supports the view that both notions cover different aspects of calibration.

It remains to clarify how the weaker rankwise calibration relates to sub-k or top-k calibration.
One can show that neither sub-k nor top-k calibration implies rankwise calibration, and vice versa.
Thus, a model can be rankwise calibrated, yet not sub-k calibrated (\cref{th:rankwise_not_sub_k_calibrated}) or top-k calibrated (\cref{th:rankwise_not_top_k_calibrated}).
Similarly, a model can be sub-k calibrated but not rankwise calibrated (\cref{th:sub_k_calib_not_rankwise}) or can be top-k calibrated but not rankwise calibrated (\cref{th:top_k_calib_not_rankwise}).
These results can be intuitively explained: rankwise calibration conditions on the probability of a specific ranking, while sub-k and top-k calibration condition on marginal distributions over sub-rankings.
Thus, roughly speaking, the notions are orthogonal and therefore do not imply anything about one another. 
\begin{table}[ht]
\centering
\caption{
Rankwise calibrated model that is neither sub-2 nor top-1 calibrated.
Sub-2 marginals violate calibration for $x_2$ 
and top-1 marginals violate calibration for both $x_1$ and $x_2.$ 
}
\label{tab:counterexample_rankwise_not_top_k_sub_k}
\resizebox{\columnwidth}{!}{%
\begin{tabular}{@{}lcccc@{}}
\toprule
$\pi$ 
& $P(\Pi|x_1)$ 
& $P(\Pi|x_2)$ 
& $h(x_1)$ 
& $h(x_2)$ \\
\midrule
$i_1\!\succ\! i_2\!\succ\! i_3$ & $\nicefrac{1}{3}$ & $0$ & $\nicefrac{1}{3}$ & $0$ \\
$i_1\!\succ\! i_3\!\succ\! i_2$ & $0$ & $\nicefrac{1}{3}$ & $\nicefrac{1}{6}$ & $\nicefrac{1}{6}$ \\
$i_2\!\succ\! i_1\!\succ\! i_3$ & $\nicefrac{1}{3}$ & $\nicefrac{1}{3}$ & $\nicefrac{1}{3}$ & $\nicefrac{1}{3}$ \\
$i_2\!\succ\! i_3\!\succ\! i_1$ & $0$ & $\nicefrac{1}{3}$ & $0$ & $\nicefrac{1}{3}$ \\
$i_3\!\succ\! i_1\!\succ\! i_2$ & $\nicefrac{1}{3}$ & $0$ & $\nicefrac{1}{6}$ & $\nicefrac{1}{6}$ \\
$i_3\!\succ\! i_2\!\succ\! i_1$ & $0$ & $0$ & $0$ & $0$ \\

\rowcolor[gray]{0.92}
\multicolumn{5}{c}{\textbf{Sub-2 marginals}} \\
$i_1\!\succ\! i_2$ & $\nicefrac{2}{3}$ & $\nicefrac{1}{3}$ & $\nicefrac{5}{6}$ & $\nicefrac{1}{3}$ \\
$i_1\!\succ\! i_3$ & $\nicefrac{2}{3}$ & $\nicefrac{2}{3}$ & $\nicefrac{2}{3}$ & $\nicefrac{1}{2}$ \\
$i_2\!\succ\! i_3$ & $\nicefrac{2}{3}$ & $\nicefrac{2}{3}$ & $\nicefrac{2}{3}$ & $\nicefrac{2}{3}$ \\

\rowcolor[gray]{0.92}
\multicolumn{5}{c}{\textbf{Top-1 marginals}} \\
$i_1\!\succ\!\{i_2,i_3\}$ & $\nicefrac{1}{3}$ & $\nicefrac{1}{3}$ & $\nicefrac{1}{2}$ & $\nicefrac{1}{6}$ \\
$i_2\!\succ\!\{i_1,i_3\}$ & $\nicefrac{1}{3}$ & $\nicefrac{2}{3}$ & $\nicefrac{1}{3}$ & $\nicefrac{1}{3}$ \\
$i_3\!\succ\!\{i_1,i_2\}$ & $\nicefrac{1}{3}$ & $0$ & $\nicefrac{1}{6}$ & $\nicefrac{1}{6}$ \\
\bottomrule
\end{tabular}%
}
\end{table}
An example of a model being rankwise calibrated but neither sub-k nor top-k calibrated is illustrated in \cref{tab:counterexample_rankwise_not_top_k_sub_k}.

\paragraph{Weaker calibration notions in ProLR.}
%
Similar to the transition from full-rank calibration to rankwise calibration, we can further weaken the conditions of sub-k and top-k calibration by conditioning only on the probability of a specific sub-ranking instead of the entire marginal distribution. 
\begin{definition}
\label{def:rankwise_sub_k_calib}
    A model $h \in \Hy$ is \emph{rankwise sub-k} calibrated if for every $\B \subseteq \I$ with $|\B|=k$ it holds for all $\pi \in \Sy{\B}$ that
    \begin{equation}\label{eq:rankwise_sub_k_calib}
        \forall \alpha \in [0,1]: \; P(\proj{\Pi}{\B} = \pi \cond \proj{h}{\P}(X)[\pi] =\alpha) = \alpha \, .
    \end{equation}
\end{definition}
\begin{definition}
\label{def:rankwise_top_k_calib}
    A model $h \in \Hy$ is \emph{rankwise top-k} calibrated if for every $\B$ with $|\B|=k$ it holds for all $\pi \in \Sy{\B}$ that
    \begin{equation} \label{eq:rankwise_top_k_calib}
        \forall \alpha \in [0,1]: \; P(\proj{\Pi}{\B} = \pi \cond \proj{h}{\T}(X)[\pi] = \alpha) = \alpha \, .
    \end{equation}
\end{definition}
Obviously, it holds that sub-k calibration implies rankwise sub-k calibration, and similarly for top-k and rankwise top-k calibration, as formulated in the following theorem.
\begin{theorem}\label{th:sub_k_rankwise_sub_k}
    For every finite item set $\I$ and $k \leq m$:
    \begin{itemize}
    [leftmargin=5mm]
        \item [(i)] $ h \ \text{is sub-k calibrated }
    \Rightarrow
    h \ \text{is rankwise sub-k calibrated}
    $
        \item [(ii)] $ h \ \text{is top-k calibrated }
    \Rightarrow
    h \ \text{is rankwise top-k calibrated}
    $
    \end{itemize} 
\end{theorem}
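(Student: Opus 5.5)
The plan is to prove both parts with the standard tower-property argument that turns strong calibration into weak calibration in multi-class classification. This is presumably also how \autoref{th:full_rank_rankwise_calib} is proved. The key observation is that the event $\{\proj{h}{\P}(X)[\rho]=\alpha\}$ is a union of the events $\{\proj{h}{\P}(X)=q\}$ over all $q \in \Prob(\Sy{\B})$ with $q[\rho]=\alpha$. This is because $\proj{h}{\P}(X)[\rho]$ is a measurable function of the random marginal $Q := \proj{h}{\P}(X)$. Part (ii) is identical, with $\proj{h}{\T}$ and the distribution space $\Prob(\cup_{\B\subset\I:|\B|=k}\Sy{\B})$ in place of $\Prob(\Sy{\B})$.

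For (i), I fix $\B$ with $|\B|=k$, $\rho \in \Sy{\B}$ and $\alpha\in[0,1]$ with $P(Q[\rho]=\alpha)>0$; otherwise the conditional statement is understood almost surely. Sub-k calibration states that $P(\proj{\Pi}{\B}=\rho \mid Q) = Q[\rho]$ almost surely. This is just \eqref{eq:sub_k_calb} read as a statement about conditional expectations, $\mathbb{E}[\mathbbm{1}\{\proj{\Pi}{\B}=\rho\}\mid Q]=Q[\rho]$.

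Since $\sigma(Q[\rho]) \subseteq \sigma(Q)$, the tower property gives
\begin{equation*}
P(\proj{\Pi}{\B}=\rho \mid Q[\rho]) = \mathbb{E}\bigl[\,P(\proj{\Pi}{\B}=\rho\mid Q)\,\bigm|\, Q[\rho]\bigr] = \mathbb{E}\bigl[Q[\rho] \mid Q[\rho]\bigr] = Q[\rho].
\end{equation*}
Evaluating at $Q[\rho]=\alpha$ yields \eqref{eq:rankwise_sub_k_calib}. In the discrete case one can write this directly as
\begin{equation*}
P(\proj{\Pi}{\B}=\rho \mid Q[\rho]=\alpha) = \sum_{q:\,q[\rho]=\alpha} P(Q=q\mid Q[\rho]=\alpha)\, q[\rho] = \alpha .
\end{equation*}
For (ii), I repeat this verbatim. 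The one check needed is that the conditioning variable in \autoref{def:rankwise_top_k_calib} is the coordinate $\proj{h}{\T}(X)[\rho]$ of the same random vector $\proj{h}{\T}(X)$ appearing in \autoref{def:top_k_calib}, so $\sigma(\proj{h}{\T}(X)[\rho])\subseteq\sigma(\proj{h}{\T}(X))$ holds again.

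The only potential obstacle is measure-theoretic. When $h(X)$ takes uncountably many values, conditioning on null events like $\{Q=q\}$ is ill-defined pointwise. I will handle this by interpreting all calibration equalities as almost-sure identities of regular conditional probabilities, as in the classification literature. Under that interpretation the tower-property argument above is fully rigorous, and no combinatorial structure of $\P$ or $\T$ is needed.
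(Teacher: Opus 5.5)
Your proposal is correct and matches the paper's proof. The paper writes $P(\proj{\Pi}{\B}=\tau \cond \proj{h}{\P}(X)[\tau]=\alpha)$ as an integral of $P(\proj{\Pi}{\B}=\tau \cond \proj{h}{\P}(X)=q)$ over the conditional law of $\proj{h}{\P}(X)$ given $\proj{h}{\P}(X)[\tau]=\alpha$, substitutes $q[\tau]=\alpha$ via sub-k calibration, and integrates to one. That is exactly your tower-property argument (and your discrete sum), with (ii) obtained by replacing $\P$ by $\T$, and your almost-sure reading of the conditional statements is just a more careful formalization of the same step.
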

Both rankwise sub-k and rankwise top-k calibration pose a weaker notion than their full marginal counterparts, as they consider only conditioning on the probability of a specific sub-ranking instead of the entire marginal distribution.
Due to this, one can show that neither rankwise sub-k nor rankwise top-k calibration implies (full-rank) sub-k or top-k calibration, respectively.
\begin{table}[!htbp]
\centering
\caption{
Top-1 calibrated model that is not rankwise calibrated.
Top-1 marginals are identical for both $x_1$ and $x_2$, whereas the model violates rankwise calibration for both predictions. Note that the model is also not full-rank calibrated. 
}
\label{tab:counterexample_top1_not_rankwise}
\resizebox{\columnwidth}{!}{%
\begin{tabular}{@{}lcccc@{}}
\toprule
$\pi$ 
& $P(\Pi|x_1)$ 
& $P(\Pi|x_2)$ 
& $h(x_1)$ 
& $h(x_2)$ \\
\midrule
$i_1\!\succ\! i_2\!\succ\! i_3$ & $\nicefrac{1}{6}$ & $\nicefrac{1}{6}$ & $\nicefrac{2}{6}$ & $\nicefrac{2}{6}$ \\
$i_1\!\succ\! i_3\!\succ\! i_2$ & $\nicefrac{1}{6}$ & $\nicefrac{1}{6}$ & $0$ & $0$ \\
$i_2\!\succ\! i_1\!\succ\! i_3$ & $\nicefrac{1}{6}$ & $\nicefrac{1}{6}$ & $\nicefrac{2}{6}$ & $\nicefrac{2}{6}$ \\
$i_2\!\succ\! i_3\!\succ\! i_1$ & $\nicefrac{1}{6}$ & $\nicefrac{1}{6}$ & $0$ & $0$ \\
$i_3\!\succ\! i_1\!\succ\! i_2$ & $\nicefrac{1}{6}$ & $\nicefrac{1}{6}$ & $\nicefrac{2}{6}$ & $\nicefrac{2}{6}$ \\
$i_3\!\succ\! i_2\!\succ\! i_1$ & $\nicefrac{1}{6}$ & $\nicefrac{1}{6}$ & $0$ & $0$ \\
\rowcolor[gray]{0.92}
\multicolumn{5}{c}{\textbf{Top-1 marginals}} \\
$i_1\!\succ\! i_2$ & $\nicefrac{1}{3}$ & $\nicefrac{1}{3}$ & $\nicefrac{1}{3}$ & $\nicefrac{1}{3}$ \\
$i_1\!\succ\! i_3$ & $\nicefrac{1}{3}$ & $\nicefrac{1}{3}$ & $\nicefrac{1}{3}$ & $\nicefrac{1}{3}$ \\
$i_2\!\succ\! i_3$ & $\nicefrac{1}{3}$ & $\nicefrac{1}{3}$ & $\nicefrac{1}{3}$ & $\nicefrac{1}{3}$ \\
\bottomrule
\end{tabular}%
}
\end{table}
Interestingly, in contrast to the stronger notion of full-rank calibration, rankwise calibration implies neither sub-$k$ calibration nor top-$k$ calibration.
\Cref{tab:counterexample_general_prob} shows a sub-2 calibrated model that is not rankwise calibrated, while \cref{tab:counterexample_top1_not_rankwise} shows a top-1 calibrated model that is not rankwise calibrated.

Moreover, one can show that neither rankwise sub-k nor rankwise top-k calibration implies rankwise calibration, which again holds for any model class.
The intuition behind this is similar to the case of full-rank calibration not being implied by sub-k or top-k calibration, as rankwise sub-k and rankwise top-k calibration consider only sub-rankings of size $k$.
Further, the conditioning is only on the probability of a specific sub-ranking, hence modifications on the entire distribution $h(X) \in \Prob(\Sy{\I})$ can cancel themselves out in the conditioning of a specific sub-ranking, yet remain in the rankwise conditioning.
Formally, this is shown in Theorems \ref{th:rankwise_sub_k_not_rankwise_calib} and \ref{th:rankwise_top_k_not_rankwise_calib} in \autoref{appendix:proofs}.
Notice that one can show that neither rankwise sub-k nor rankwise top-k calibration implies the other, as shown in Theorems \ref{th:rankwise_sub_k_not_rankwise_top_k_calib} and \ref{th:rankwise_top_k_not_rankwise_sub_k_calib}. 
The overall relationships between the different notions of calibration are shown in \cref{fig:overview_figure}.

\section{Experiments}
\begin{figure*}[ht]
    \includegraphics[width=\textwidth]{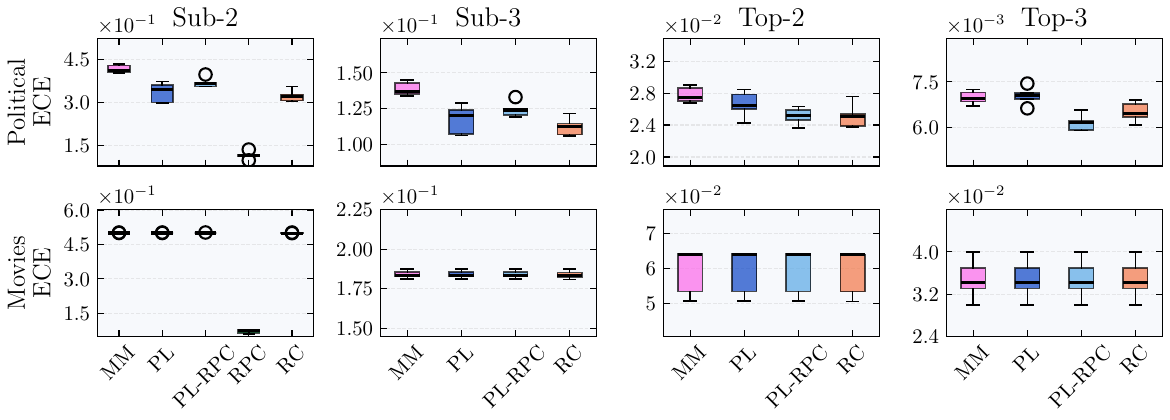}
    \caption{
   Rankwise sub-2,sub-3,top-2, and top-3 ECE on \emph{political} and \emph{movies}, considering only the $95\%$ most occurring rankings.
    }
    \label{fig:label_ranking_calibration}
\end{figure*}
Our experiments are divided into two parts. 
First, we investigate the calibration with respect to our definitions of existing learners for ProLR.
In the second part, we investigate the calibration of reward models in a popular benchmark for RLHF.
In both parts, we consider the same evaluation metrics, which we explain next prior to presenting the experimental setup and results.

%

%
\paragraph{Evaluation Metrics.}
We use the expected calibration error (ECE) \citep{Filho.2023} to evaluate the different notions of calibration.
Simply put, we bin the predicted probabilities into $B$ equally sized bins, and for each bin we compute the absolute difference between the average predicted probability and the empirical frequency of the ranking.
Here, a ranking could either be the ranking over all $m$, or only over a subset of $k$ items.
To compute the empirical frequency of the event in each bin, we consider all instances $x_i$ for which the predicted probability of the event falls into the respective bin.
Given a binning $bin_1, \dots, bin_B$ of $[0,1]$, a ranking $\tau$ and data $\D=\{(x_i,\pi_i)\}_{i=1}^n,$  we compute the $ECE_\tau$ for rankwise calibration as
$$
\ECE_\tau= \sum\nolimits_{b=1}^B \tfrac{|I_b|}{n} |\text{freq}(I_b) - \text{conf}(I_b)|\,\text{,}
$$
where $I_b = \{ i \in \{1,\dots,n\} \mid h(x_i)[\tau] \in \text{bin}_b \}$ is the set of instances in bin $b$, $\text{freq}(I_b) = \frac1{|I_b|} \sum_{i \in I_b} \mathbbm{1}\{\pi_i = \tau\}$ the accuracy in bin $b$, and 
$\text{conf}(I_b) = \frac{1}{|I_b|} \sum_{i \in I_b} h(x_i)[\tau]$
the average predicted probability in bin $b$.
Similarly, for rankwise sub-k and top-k calibration, we compute the $\ECE_{\rho}$ for a sub-ranking $\rho$ of size $k$ and average over all sub-rankings $\P$, $\T$.
We refer to Appendix \ref{appendix:discuss_evaluation_metrics} for further details and critical discussion.

\subsection{Calibration of Label Ranking Models}
\paragraph{Methods.}
We investigate the different notions of calibration for popular label ranking models such as Plackett--Luce (\texttt{PL})  \citep{Cheng.2010}, Mallows Model (\texttt{MM}) \citep{Cheng.2009} and Ranking by Pairwise Comparison (\texttt{RPC}) \citep{Hullermeier.2004}.
Additionally, we consider a neural network classifier in which each full ranking corresponds to a single class, which we denote as RankClassifier (\texttt{RC}).
Finally, we consider the \texttt{PL-RPC} model, obtaining Plackett--Luce parameters from an RPC model (see \autoref{section:ProLR}), as \texttt{RPC} calibration can only be investigated for sub-2 (see Appendix \ref{appendix:implementation_details}).
Details on the models, their implementation, and source code are given in Appendix \ref{appendix:implementation_details}.

\paragraph{Datasets.}
We consider popular semi-synthetic and real-world benchmark datasets from the label ranking literature \citep{Cheng.2009b}.
The following experiments are focused on the latter (\textit{movies} and \textit{political}), while \cref{appendix:further_results} reports results for the semi-synthetic datasets.
More details on all datasets are provided in Appendix \ref{appendix:datasets}.

\paragraph{Setup and Results.}
We evaluate Expected Calibration Error (ECE) on the \emph{movies} and \emph{political} label ranking datasets using $5$-fold cross-validation to obtain error bounds.
Here, we consider only rankings of length $k=3$.
We refer to \cref{appendix:further_results} for the complete results.
As shown in \autoref{fig:label_ranking_calibration}, RPC achieves the strongest calibration performance for pairwise rankings, which can be attributed to its pairwise learning paradigm.
The effect of rank breaking \citep{Soufiani.2013} is also evident: although RPC itself is well calibrated for $k=2$, the Plackett--Luce--RPC variant shows poor calibration.
Both Plackett--Luce and Mallows are generally poorly calibrated across the considered settings.
RankClassifier displays mixed behavior, being well calibrated in some cases while showing substantial miscalibration in others, such as sub-$2$ rankings on the \emph{political} and \emph{movies} datasets (see \autoref{fig:label_ranking_calibration}).
A similar pattern is observed on the semi-synthetic datasets reported in Appendix~\ref{appendix:further_results}, underscoring the need for calibration techniques tailored to probabilistic label ranking.
As the number of ranking classes grows factorially with $k$, the ECE becomes increasingly uninformative.
We refer to \cref{sec:conclusion} and Appendix \ref{appendix:discuss_evaluation_metrics} for further details.

\subsection{Calibration of RLHF Reward Models}
\label{section:calibration_of_rlhf_reward_models}
%
Reward models are central to RLHF, guiding policy optimization by scoring response quality.
Such models are often trained with Bradley--Terry \citep{Ouyang.2022,Kaufmann.2025} and evaluated on benchmarks such as RewardBench2 \citep{Malik.2025}, which is structured as a top-1 task.\footnote{The ``Ties'' category does not fit this framing and is excluded, see Appendix \ref{appendix:reward_model_calibration}.}
We evaluate reward models from RewardBench2 in our experiments.
Each reward model predicts utilities for four candidate responses, yielding top-1 probabilities via the PL model (see \autoref{appendix:reward_model_calibration} for details).
%
%

\begin{figure*}[htbp]
    \centering
    \begin{subfigure}{0.49\linewidth}
        \centering
        \includegraphics[height=18em, width=\linewidth,  keepaspectratio]{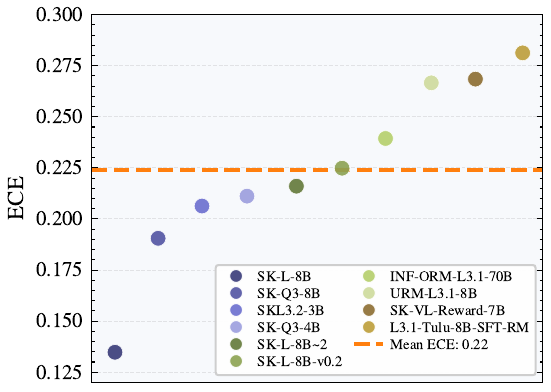}
        \caption{Top-1 ECE of the top-10 most calibrated RewardBench2 models. We abbreviate Llama with ``L'' and Skywork with ``SK''.}
        \label{fig:reward_bench_ece}
    \end{subfigure}
    \hfill
    \begin{subfigure}{0.49\linewidth}
        \centering
        \includegraphics[height=18em, width=\linewidth,  keepaspectratio]{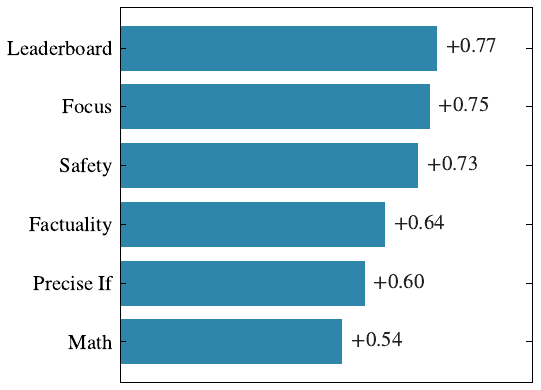}
        \caption{Correlation of rankings obtained by top-1 ECE and categories of RewardBench2.}
        \label{fig:correlation_rewardbench2}
    \end{subfigure}
    \caption{Top-1 ECE results and their correlation with RewardBench2 categories.}
    \label{fig:rewardbench_combined}
\end{figure*}
\Cref{fig:rewardbench_combined} shows the ECE of the ten most calibrated models, shown in \cref{fig:reward_bench_ece}, and the correlation between ECE and the individual RewardBench2 categories, shown in \cref{fig:correlation_rewardbench2}.
For the former, we observe that well-calibrated models also tend to achieve high accuracy.
Regarding the correlations, the strongest association is observed for the leaderboard score; however, this correlation remains imperfect, leaving room for models that perform well according to one metric but not the other.
We also observe notable differences across categories.
Focus\footnote{Focus tests whether the RM can detect responses that are off-topic or fail to address the question asked.} and Safety exhibit the strongest correlations, whereas Math and Precise If\footnote{Precise If requires the model to follow specific verifiable constraints in its response.} exhibit the weakest.
One possible explanation is that Focus and Safety allow for a more gradual notion of correctness, enabling model confidence to better track response quality.
In contrast, Math and Precise If are closer to step-function tasks, where small errors can lead to complete failure, making confidence less predictive of correctness.

\section{Discussion, Conclusion and Future Work}
\label{sec:conclusion}
\textbf{Practical Considerations.}
As the number of classes increases, the exact computation of the ECE becomes infeasible.
Moreover, it is difficult to envisage a scenario in which a practitioner would be concerned with whether the predicted probability of an extremely improbable ranking of, say, ten items is $0.00001$ or $0.00002$.
For our practical investigation, we therefore restrict the computation to those rankings that together account for up to $95\%$ of the ranking occurrences in the datasets.
For downstream tasks, one should either focus on the rankings of greatest interest or use weaker notions of calibration, such as sub-k and top-k calibration.
If calibration on all rankings is nevertheless required, we refer to Appendix~\ref{appendix:discuss_evaluation_metrics}, which discusses (i) alternative metrics for such cases and (ii) a bottom-to-top approach: first investigating calibration for weaker, more tractable notions and then considering more complex ones.

\textbf{Conclusion.}
We introduced a hierarchy of calibration notions for probabilistic label ranking, from full-ranking to pairwise and top-k calibration.
We established theoretical relationships between the different notions and empirically studied the \mbox{(non-)calibration} of popular label ranking models on real-world datasets as well as reward models in RLHF.

\textbf{Future Work.}
Given the hierarchy presented in \cref{fig:overview_figure}, a natural next step is to develop tailored calibration methods and metrics for label ranking.
Furthermore, the impact of calibration on downstream tasks, such as fine-tuning large language models, should be investigated.

\section*{Acknowledgements}
Timo Kaufmann and Eyke H\"ullermeier gratefully acknowledge funding by the German Research Foundation (Deutsche Forschungsgemeinschaft, DFG), project number 467367360.
The authors acknowledge support and computational resources from the Munich Center of Machine
Learning (MCML).
For the CPU-based synthetic experiments, the authors gratefully acknowledge the computational
and data resources provided by the Leibniz Supercomputing Centre (www.lrz.de).


\section*{Impact Statement}
The most important results are theoretical in nature and relate to foundational research. 
These may lead to follow-up work with potential societal implications, particularly in the area of reinforcement learning from human feedback and its close connection to the fine-tuning of LLMs. 
Although there are many potential societal consequences of our work, there is none which we feel must be specifically highlighted here.%





\bibliography{example_paper}
\bibliographystyle{icml2026}

\newpage
\appendix
\onecolumn
\clearpage

\let\addcontentsline\savedaddcontentsline

\section*{Appendix}
\tableofcontents

\clearpage
\section{Proofs}%
\label{appendix:proofs}
This section is divided into two parts: The first part deals with the theorems formulated in the main paper and shows implications, while the second part deals with results showing the exclusiveness of the notions with respect to one another.

\subsection{Implication Results}

\textbf{\autoref{th:full_rank_rankwise_calib}}
    For an item set $\I$ and rankings $\Sy{\I}$, it holds 
    \begin{itemize}
        \item [] $
    h \ \text{is full-rank calibrated }
    \Rightarrow
    h \ \text{is rankwise calibrated}
    $ 
    \end{itemize}

\begin{proof}[Proof of Theorem \ref{th:full_rank_rankwise_calib}]
If $h$ is full-rank calibrated, then by \autoref{def:full_k_calib} it holds for all $\pi \in \Sy{\I}$:
\begin{equation*}
\forall p \in \mathbb{P}(\Sy{\I}): \; P(\Pi=\pi \cond h(X) = p) = p[\pi] \, .
\end{equation*}
Now let $\pi \in \Sy{\I}$ and $\alpha \in[0,1]$ be fixed but arbitrary. Then,
\begin{align}
    P(\Pi = \pi \cond h(X)[\pi] = \alpha)
    &= \int_{\{p:\, p[\pi]=\alpha\}} P(\Pi = \pi \cond h(X) = p) \;dP(h(X)=p \cond h(X)[\pi]=\alpha) \label{th:full_rank_sub_k_proof_step_1} \\
    &= \int_{\{p:\, p[\pi]=\alpha\}} p[\pi] \;dP(h(X)=p \cond h(X)[\pi]=\alpha) \label{th:full_rank_sub_k_proof_step_2} \\
    &= \int_{\{p:\, p[\pi]=\alpha\}} \alpha \;dP(h(X)=p \cond h(X)[\pi]=\alpha) \\
    &= \alpha \,. \label{th:full_rank_sub_k_proof_step_3}
\end{align}
Step~\eqref{th:full_rank_sub_k_proof_step_1} is the law of total probability, decomposing over the conditional distribution of $h(X)$ given $h(X)[\pi]=\alpha$. Step~\eqref{th:full_rank_sub_k_proof_step_2} applies full-rank calibration. Since $p[\pi]=\alpha$ throughout the domain of integration, the integrand is constant, and the conditional measure integrates to one, yielding~\eqref{th:full_rank_sub_k_proof_step_3}.
\end{proof}


\textbf{\autoref{th:full_rank_is_sub_k}}
    For every finite item set $\I$ and $k \leq m$:
    \begin{itemize}
        \item [(i)] $
    h \ \text{is full-rank calibrated }
    \Rightarrow
    h \ \text{is sub-k calibrated}
    $
        \item [(ii)] $
    h \ \text{is full-rank calibrated }
    \Rightarrow
    h \ \text{is top-k calibrated}
    $
    \end{itemize} 

\begin{proof}[Proof of Theorem \ref{th:full_rank_is_sub_k} (i)]
    Let $k \leq m$, $\B \subseteq \I$ with $|\B| = k$.
    Additionally, let $q \in \Prob(\Sy{\B})$ and $\rho \in \Sy{\B}$ arbitrary but fixed.
    We are given $h: \mathcal{X} \rightarrow \Prob(\Sy{\I})$, which we assume to be full-rank calibrated as in \autoref{def:full_k_calib}.
    Our goal is to show that it holds:
    $$
    P(\proj{\Pi}{\B} = \rho \cond \proj{h}{\P} = q) = q[\rho]\,\text{.}
    $$
    Let $\tau_1, \dots, \tau_c \in \Sy{\I}$, where $c=\frac{m!}{k!}$, be all rankings for which $\tau_i \in \subk{\rho}$.
    Then it holds:
    \begin{align}
        P(\proj{\Pi}{\B}=\rho \cond \proj{h}{\P}(X)=q) \notag \\
        &= \int_{p: \proj{p}{\P}=q} P(\proj{\Pi}{\B}=\rho \cond h(X)=p, \proj{h}{\P}(X)= \proj{p}{\P}) dP(h(X)=p \cond \proj{h}{\P}(X) = q) \label{th:full_rank_is_sub_k_proof_step_1}\\
        &= \int_{p: \proj{p}{\P}=q} P(\proj{\Pi}{\B}=\rho \cond h(X)=p) dP(h(X)=p \cond \proj{h}{\P}(X) = q) \label{th:full_rank_is_sub_k_proof_step_2} \\
        &= \int_{p: \proj{p}{\P}=q} \sum_{i=1}^c P(\Pi = \tau_i \cond h(X)=p) dP(h(X)=p \cond \proj{h}{\P}(X) = q)
        \label{th:full_rank_is_sub_k_proof_step_3}\\
        &= \int_{p: \proj{p}{\P}=q} \sum_{i=1}^c p[\tau_i] dP(h(X)=p \cond \proj{h}{\P}(X) = q) 
        \label{th:full_rank_is_sub_k_proof_step_4}\\
        &= \int_{p: \proj{p}{\P}=q} q[\rho] dP(h(X)=p \cond \proj{h}{\P}(X) = q) 
        \label{th:full_rank_is_sub_k_proof_step_5}\\
        &= q[\rho]\int_{p: \proj{p}{\P}=q} 1  dP(h(X)=p \cond \proj{h}{\P}(X) = q) 
        \label{th:full_rank_is_sub_k_proof_step_6}\\
        &= q[\rho] \label{th:full_rank_is_sub_k_proof_step_7}
    \end{align}
    In \eqref{th:full_rank_is_sub_k_proof_step_1} and \eqref{th:full_rank_is_sub_k_proof_step_2} we use the law of total probability.
    Since $\proj{h}{\P}$ is a deterministic function of $h$, conditioning on both 
      $h(X)=p$ and $\proj{h}{\P}(X)=q$ is equivalent to conditioning on $h(X)=p$ when $\proj{p}{\P}=q$.
    In \eqref{th:full_rank_is_sub_k_proof_step_3}, we use that $\{\Pi_\B = \rho\}$ is the disjoint union of the events $\{\Pi = \tau_i\}$.
    Step \eqref{th:full_rank_is_sub_k_proof_step_4} uses the definition of full-rank calibration (\autoref{def:full_k_calib}).
    Step \eqref{th:full_rank_is_sub_k_proof_step_5} uses \autoref{def:sub_k_top_k_marginal} as well as the condition on the domain of the integral, while Step \eqref{th:full_rank_is_sub_k_proof_step_6} is by linearity of the integral and $q[\rho]$ being a constant.
    Finally, the conditional distribution 
  $P(h(X)=p \mid \proj{h}{\P}(X)=q)$ integrates to 1 in \eqref{th:full_rank_is_sub_k_proof_step_7}.
\end{proof}

\begin{proof}[Proof of Theorem \ref{th:full_rank_is_sub_k} (ii)]
    The proof is analogous to \autoref{th:full_rank_is_sub_k} (i) by replacing $\P$ by $\T,$ taking $c= (m-k)!$ and using  $q \in \mathbb{P}( \cup_{\B\subset\I:|\B|=k} \Sy{\B}).$

\end{proof}


\textbf{\autoref{th:sub_k_rankwise_sub_k}}
    For every finite item set $\I$ and $k \leq m$:
    \begin{itemize}
        \item [(i)] $ h \ \text{is sub-k calibrated }
    \Rightarrow
    h \ \text{is rankwise sub-k calibrated}
    $
        \item [(ii)] $ h \ \text{is top-k calibrated }
    \Rightarrow
    h \ \text{is rankwise top-k calibrated}
    $
    \end{itemize} 

\begin{proof}[Proof of Theorem \ref{th:sub_k_rankwise_sub_k} (i)]
    Let $h$ be sub-k calibrated for $k \leq m$ arbitrary but fixed.
    Then by \autoref{def:sub_k_calib} for every $\B \subseteq \mathcal{I}$ and $|\B| = k$ it holds for all $\tau \in \Sy{\B}$:
    \begin{equation*}
     \forall q \in \mathbb{P}(\Sy{\B}): \; P(\proj{\Pi}{\B}=\tau \cond \proj{h}{\P}(X) = q) = q[\tau].
    \end{equation*}
    Now let $\tau \in \Sy{\B}$ and $\alpha \in[0,1]$ be fixed but arbitrary.
    Then,
    \begin{align}
        P(\proj{\Pi}{\B} = \tau \cond \proj{h}{\P}(X)[\tau] = \alpha) 
        &= \int_{\{q:\, q[\tau]=\alpha\}}
         P(\proj{\Pi}{\B} = \tau \cond \proj{h}{\P}(X) = q)
         \;dP(\proj{h}{\P}(X) = q \cond \proj{h}{\P}(X)[\tau] = \alpha)
         \label{th:sub_k_rankwise_sub_k_proof_step_1}
       \\
         &= \int_{\{q:\, q[\tau]=\alpha\}}
          q[\tau]
         \;dP(\proj{h}{\P}(X) = q \cond \proj{h}{\P}(X)[\tau] = \alpha)
          \label{th:sub_k_rankwise_sub_k_proof_step_2}
        \\
         &= \int_{\{q:\, q[\tau]=\alpha\}}
          \alpha
         \;dP(\proj{h}{\P}(X) = q \cond \proj{h}{\P}(X)[\tau] = \alpha)
          \label{th:sub_k_rankwise_sub_k_proof_step_3}
        \\
         &= \alpha.
         \label{th:sub_k_rankwise_sub_k_proof_step_4}
    \end{align}
    Step~\eqref{th:sub_k_rankwise_sub_k_proof_step_1} is the law of total probability, decomposing over the conditional distribution of $\proj{h}{\P}(X)$ given $\proj{h}{\P}(X)[\tau]=\alpha$.
    Step~\eqref{th:sub_k_rankwise_sub_k_proof_step_2} applies sub-$k$ calibration.
    Since $q[\tau]=\alpha$ throughout the domain of integration, the integrand is constant~\eqref{th:sub_k_rankwise_sub_k_proof_step_3}, and the conditional measure integrates to one, yielding~\eqref{th:sub_k_rankwise_sub_k_proof_step_4}.
\end{proof}

\begin{proof}[Proof of Theorem \ref{th:sub_k_rankwise_sub_k} (ii)]
    The proof is analogous to \autoref{th:sub_k_rankwise_sub_k} (i), replacing $\P$ by $\T$ and considering the law of total probability over  $q \in \mathbb{P}( \cup_{\B\subset\I:|\B|=k} \Sy{\B})$.
\end{proof}

\subsection{Exclusiveness Results}%
\label{appx:exclusiveness}

\begin{figure*}[ht!]
    \centering
    \usetikzlibrary{positioning, arrows.meta, calc, decorations.markings}
\usetikzlibrary{fit}

\begin{tikzpicture}[
    scale=0.99,
    transform shape,
    node distance=1.6cm and 2.4cm,
    font=\small,
    calib/.style={
        rectangle,
        rounded corners,
        align=center,
        text width=3.6cm,
        minimum height=1.1cm,
        inner sep=6pt
    },
    main/.style={
        calib,
        fill=blue!25,
        draw=blue!70,
        font=\small\bfseries,
        text=blue!10!black
    },
    primary/.style={
        calib,
        fill=blue!10,
        draw=blue!55,
        text=blue!30!black
    },
    secondary/.style={
        calib,
        fill=white,
        draw=blue!40,
        text=blue!40!black
    },
    ref/.style={
        font=\scriptsize,
        draw=none,
        fill=none,
        text width=3.4cm,
        align=center
    },
    groupbox/.style={
        draw=blue!35,
        rounded corners=6pt,
        dashed,
        inner sep=10pt,
        fill=blue!15,
        fill opacity=0.18
    },
    crossed arrow/.style={
        line width=0.9pt, 
        draw=black!70,
        postaction={
            decorate,
            decoration={
                markings,
                mark= at position 0.999 with {
                  \draw[black!70, line width=0.9pt, -{Latex[length=2mm,width=4mm]}] (-1mm,0) -- (0,0);},
                mark=at position 0.97 with {
                  \draw[red!70, line width=1.2pt] (-2mm,-2mm) -- (2mm,2mm);
                  \draw[red!70, line width=1.2pt] (-2mm,2mm) -- (2mm,-2mm);
                }
            }
        }
    }
]

\node[primary] (full_rank) {Full Rank Calibration\\(Definition \ref{def:full_k_calib})};
\node[main, below=2.4cm of full_rank] (rankwise_calib) {Rankwise Calibration\\(Definition \ref{def:rankwise_calib})};
\node[primary, left=1.5cm of rankwise_calib] (sub_k_calib) {Sub-k Calibration\\(Definition \ref{def:sub_k_calib})};
\node[primary, right=1.5cm of rankwise_calib] (top_k_calib) {Top-k Calibration\\(Definition \ref{def:top_k_calib})};
\node[secondary, below=2.4cm of sub_k_calib] (rankwise_sub_k_calib) {Rankwise Sub-k Calibration\\(Definition \ref{def:rankwise_sub_k_calib})};
\node[secondary, below=2.4cm of top_k_calib] (rankwise_top_k_calib) {Rankwise Top-k Calibration\\(Definition \ref{def:rankwise_top_k_calib})};

\node[groupbox, label={[font=\footnotesize\bfseries, text=blue!55,xshift=-1cm]90:{Sub-k Family}}]
    (sub_group) [fit=(sub_k_calib)(rankwise_sub_k_calib)] {};
\node[groupbox, label={[font=\footnotesize\bfseries, text=blue!55,xshift=1cm]90:{Top-k Family}}]
    (top_group) [fit=(top_k_calib)(rankwise_top_k_calib)] {};


\draw[crossed arrow] (rankwise_calib) -- node[ref, pos=0.58, xshift=-0.05cm]{\autoref{th:rankwise_not_full_rank_calib}} (full_rank);

\draw[crossed arrow] (sub_k_calib) to[out=85, in=225]
    node[ref, pos=0.58, above, yshift=0.25cm, xshift=-1cm]{\autoref{th:sub_k_calib_not_full_rank_calib}}
    (full_rank);

\draw[crossed arrow] (top_k_calib) to[out=95, in=315]
    node[ref, pos=0.58, above, yshift=0.25cm, xshift=1cm]{\autoref{th:top_k_calib_not_full_rank_calib}}
    (full_rank);

\draw[crossed arrow] (sub_k_calib) to[out=15, in=165]
    node[ref, pos=0.9, below, yshift=-0.2cm]{\autoref{th:sub_k_calib_not_top_k}}
    (top_k_calib);
\draw[crossed arrow] (top_k_calib) to[out=195, in=345]
    node[ref, pos=0.9, above, yshift=0.2cm]{\autoref{th:top_k_calib_not_sub_k}}
    (sub_k_calib);

\draw[crossed arrow] (rankwise_calib) to[out=90, in=30]
    node[ref, pos=0.28, above, yshift=0cm, xshift=-2.25cm]{\autoref{th:rankwise_not_sub_k_calibrated}}
    (sub_k_calib);

\draw[crossed arrow] (rankwise_calib) to[out=90, in=140]
    node[ref, pos=0.28, above, yshift=0pt, xshift=2.25cm]{\autoref{th:rankwise_not_top_k_calibrated}}
    (top_k_calib);

\draw[crossed arrow] (sub_k_calib) to[out=300, in=215]
    node[ref, pos=0.58, above, yshift=4pt, xshift=-0.5cm]{\autoref{th:sub_k_calib_not_rankwise}}
    (rankwise_calib);

\draw[crossed arrow] (top_k_calib) to[out=230, in=325]
    node[ref, pos=0.58, above, yshift=4pt, xshift=0.5cm]{\autoref{th:top_k_calib_not_rankwise}}
    (rankwise_calib);

\draw[crossed arrow] (rankwise_sub_k_calib) to[out=90, in=250]
    node[ref, pos=0.88, above, yshift=-1cm, xshift=-2.75cm]{\autoref{th:rankwise_sub_k_not_rankwise_calib}}
    (rankwise_calib);

\draw[crossed arrow] (rankwise_top_k_calib) to[out=90, in=290]
    node[ref, pos=0.88, above, yshift=-1cm, xshift=2.75cm]{\autoref{th:rankwise_top_k_not_rankwise_calib}}
    (rankwise_calib);

\draw[crossed arrow] (rankwise_sub_k_calib) to[out=10, in=170]
    node[ref, pos=0.5, above, yshift=2pt]{\autoref{th:rankwise_sub_k_not_rankwise_top_k_calib}}
    (rankwise_top_k_calib);
\draw[crossed arrow] (rankwise_top_k_calib) to[out=190, in=350]
    node[ref, pos=0.5, below, yshift=-2pt]{\autoref{th:rankwise_top_k_not_rankwise_sub_k_calib}}
    (rankwise_sub_k_calib);

\draw[crossed arrow] (rankwise_sub_k_calib) to[out=180, in=180]
    node[ref, pos=0.5, right, yshift=-2pt,xshift=-1cm]{\autoref{th:rankwise_sub_k_not_sub_k_calib}}
    (sub_k_calib);

\draw[crossed arrow] (rankwise_top_k_calib) to[out=0, in=0]
node[ref, pos=0.5, left, yshift=-2pt,xshift=1cm]{\autoref{th:rankwise_top_k_not_top_k_calib}}
(top_k_calib);

\end{tikzpicture}
    \caption{Overview of the calibration definitions and their exclusiveness relationships.}
    \label{fig:overview_exclusive_figure}
\end{figure*}

\begin{theorem}\label{th:rankwise_not_full_rank_calib}
    Let $\X$ consist of at least two elements.
    If there exist two data points $x_i, x_j \in \X$ with $i \neq j$ such that for two rankings $\pi_1, \pi_2 \in \Sy{\I}$ it holds that
    $$
        h(x_i)[\pi_1] = h(x_j)[\pi_1] \land h(x_i)[\pi_2] = h(x_j)[\pi_2] %
    $$
    and 
    $
        P(X=x_i \cond h(x_i)[\alpha] = \alpha) = P(X=x_j \cond h(x_j)[\alpha] = \alpha) %
    $
    for some $\alpha \in [0,1]$,
    then it holds:
    \begin{align*}
        &\exists h \in \X \rightarrow \Prob(\Sy{\I}):
        h \ \text{rankwise calibrated } \land h \ \text{ \emph{not} full-rank calibrated}\,\text{.}
    \end{align*}
\end{theorem}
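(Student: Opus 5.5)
The statement is existential, so I will prove it with an explicit construction in the spirit of the weather and music examples of \autoref{section:calibration_prolr}. The hypotheses supply two contexts $x_i, x_j$ and two rankings $\pi_1,\pi_2$ on which the predictions agree. They also make $x_i$ and $x_j$ equally weighted inside the conditioning event. I read the garbled condition ``$h(x_i)[\alpha]=\alpha$'' as: conditional on the shared predicted value, the two points carry equal mass.

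\textbf{Construction.} Give $x_i$ and $x_j$ the \emph{same} predicted distribution $h(x_i)=h(x_j)=p$. Let the true conditional ranking distributions $P(\Pi\mid X=x_i)$ and $P(\Pi\mid X=x_j)$ differ only on $\pi_1,\pi_2$, by a perturbation $\pm\delta$:
\begin{align*}
P(\Pi=\pi_1\mid x_i)&=p[\pi_1]+\delta, & P(\Pi=\pi_2\mid x_i)&=p[\pi_2]-\delta,\\
P(\Pi=\pi_1\mid x_j)&=p[\pi_1]-\delta, & P(\Pi=\pi_2\mid x_j)&=p[\pi_2]+\delta.
\end{align*}
On every other ranking, the true probability equals $p[\pi]$ for both points. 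Choose $p$ with $p[\pi_1],p[\pi_2]\in(0,1)$ and $\delta>0$ small enough that all probabilities are valid. Every other context $x$ gets $h(x)$ equal to its true conditional distribution $P(\Pi\mid X=x)$, and its predicted values are chosen not to collide with those of $p$. This isolates $\{x_i,x_j\}$ within the relevant level sets. At this step I must be careful that any collisions in individual coordinates do not spoil the averaging. Since all other points are exactly calibrated pointwise, their contributions to any conditional average are themselves calibrated, so collisions are harmless.

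\textbf{Rankwise calibration.} Fix $\pi$ and $\alpha$. Decompose $P(\Pi=\pi\mid h(X)[\pi]=\alpha)$ by the law of total probability over $X$, as in the proof of \autoref{th:full_rank_rankwise_calib}. Points other than $x_i,x_j$ contribute exactly $\alpha$ times their weight. For $\pi\notin\{\pi_1,\pi_2\}$, the points $x_i,x_j$ also each contribute $p[\pi]=\alpha$. For $\pi=\pi_1$ and $\alpha=p[\pi_1]$, both points lie in the conditioning event. The equal-mass hypothesis gives them equal conditional weight $w$, so their contribution is
\[
w(p[\pi_1]+\delta)+w(p[\pi_1]-\delta)=2w\,\alpha .
\]
The case $\pi_2$ is symmetric. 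Summing, the conditional probability is exactly $\alpha$.

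\textbf{Failure of full-rank calibration.} Condition on $h(X)=p$. The other points are kept off this level set, so only $x_i,x_j$ remain, again with equal weight. Conditioning on the full distribution gives exactly the same pooling, which would yield calibration. This is the main obstacle: with identical $h(x_i)=h(x_j)$, full-rank calibration cannot fail.

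The fix is to make the predictions differ on some third ranking $\pi_3$ while keeping the $\pi_1,\pi_2$ entries equal. This is exactly what the hypothesis asserts: agreement only on $\pi_1,\pi_2$. Set $h(x_i)[\pi_3]=p[\pi_3]+\varepsilon$ and $h(x_j)[\pi_3]=p[\pi_3]-\varepsilon$, and compensate on a fourth ranking $\pi_4$, or redistribute across the remaining rankings. The true probabilities on these rankings match each point's own prediction, which requires $m\ge 3$ so that there are enough rankings. Then $h(X)=h(x_i)$ isolates $x_i$, and
\[
P(\Pi=\pi_1\mid h(X)=h(x_i))=p[\pi_1]+\delta\neq h(x_i)[\pi_1],
\]
so full-rank calibration fails. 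Rankwise calibration still holds. On $\pi_1,\pi_2$ the pooling argument is unchanged. On $\pi_3,\pi_4$ each point is individually calibrated, and the values differ between the points, so no problematic pooling occurs.
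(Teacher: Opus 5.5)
Your proposal is correct and is essentially the paper's argument. You use two equally weighted contexts that share predicted probabilities on $\pi_1,\pi_2$ but differ on some other ranking, and shift their true probabilities on $\pi_1,\pi_2$ by opposite amounts $\pm\delta$; the shifts cancel within the rankwise level sets but survive on the full-rank level set $\{h(X)=h(x_i)\}$, which excludes $x_j$. The differences are cosmetic: you build the model explicitly (pointwise calibrated away from $x_i,x_j$) and make explicit both the disagreement on $\pi_3,\pi_4$ and the implicitly needed $m\ge 3$, whereas the paper perturbs the data distribution underneath an arbitrary full-rank calibrated $h$.
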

\begin{proof}[Proof of Theorem \ref{th:rankwise_not_full_rank_calib}]
Without loss of generality, let $\X$ consist of $2 \leq n < \infty$ data points, and let $h \in \Hy$ be full-rank calibrated, thus also rankwise calibrated by \autoref{th:full_rank_rankwise_calib}.
The proof can be easily extended to the case of uncountable infinite sets $\X$ (see the proof of \autoref{th:sub_k_calib_not_top_k}).
Let $\alpha, \alpha' \in [0,1]$, $p \in \Prob(\Sy{\I})$ be arbitrary but fixed and $\pi, \tau \in \Sy{\I}$ with $\pi \neq \tau$.
%
%
We choose the data points in $\X$ such that there exist two $x_i,x_j \in \X$ for which $x_i,x_j \in \mathcal{G} = \{ x \in \X \cond h(x)[\pi] = \alpha \land h(x)[\tau] = \alpha' \}$, $x_i \in \mathcal{F} = \{x \in \X \cond h(x)= p\}$, $x_j \notin \mathcal{F}$
and $P(X=x_i \cond h(x_i)[\pi] = \alpha) = P(X=x_j \cond h(x_j)[\pi] = \alpha)$.
Intuitively, $\mathcal{G}$ is the set of data points for which $h$ assigns the probability $\alpha$ to the ranking $\pi$, and $\mathcal{F}$ is the set of points for which the marginals $\proj{h}{\P}$ are identical to the probability vector $q$.
Importantly, we have chosen $x_i, x_j$ in such a way that they have identical probability for $\pi, \tau$ but differ in at least one other ranking.
As a next step we construct new data points $z_1, \dots, z_n$ based on $x_1, \dots, x_n$, for which $h$ is rankwise calibrated yet not full-rank calibrated.
For $z_i,z_j$ we construct \\
\resizebox{\textwidth}{!}{
\begin{tabular}{l l l}
     (i) & $P(\Pi = \pi \cond X=z_i) = P(\Pi = \pi \cond X=x_i) + \delta$ & $P(\Pi = \pi \cond X=z_j) = P(\Pi = \pi \cond X=x_j) - \delta$ \\
     (ii) & $P(\Pi = \tau \cond X=z_i) = P(\Pi = \tau \cond X=x_i) - \delta$ & $P(\Pi = \tau \cond X=z_j) = P(\Pi = \tau \cond X=x_j) + \delta$ \\
     (iii) $\forall r \in \Sy{\I} \setminus \{\tau,\pi\}:$ & $ P(\Pi = r \cond X=z_i) = P(\Pi = r \cond X=x_i)$ & $P(\Pi = r \cond X=z_j) = P(\Pi = r \cond X=x_j)$
\end{tabular}
}
We choose $\delta$ in such a way that the above adjustments again lead to valid probability distributions, i.e.
$$
    \max\left(P(\Pi=\tau \cond X=x_i), P(\Pi=\pi \cond X=x_j)\right) \leq
    \delta \leq
    1 - \max\left(P(\Pi=\pi \cond X=x_i), P(\Pi=\tau \cond X=x_j)\right)\,\text{.}
$$
For all other data points $l \in \{1,\dots,n\} \setminus \{i,j\}$ we have $P(\Pi \cond X=z_l) = P(\Pi \cond X=x_l)$.
Obviously, $h$ is still rankwise calibrated on $z_1, \dots, z_n$ as in group $\mathcal{G}$ the changes $\delta$ and $-\delta$ cancel themselves out.
Indeed, by writing $P_{z_1,\ldots,z_n}$ and $P_{x_1,\ldots,x_n}$ to indicate the considered feature space, we obtain
{\allowdisplaybreaks
\begin{align}
    P_{z_1,\ldots,z_n}(\Pi = \pi \cond h(X)[\pi] = \alpha)
    &= \sum_{\substack{z \in \Z \\ h(z)[\pi] = \alpha}}
        P(\Pi = \pi \cond X=z)
        P(X=z \cond h(X)[\pi] = \alpha) \label{proof_rankwise_not_full_rank_calib_step_1} \\
    &= \sum_{\substack{z \in \Z \setminus \{z_i,z_j\} \\ h(z)[\pi] = \alpha}}
        P(\Pi = \pi \cond X=z)
        P(X=z \cond h(X)[\pi] = \alpha) \notag \\
    &\quad + P(\Pi = \pi \cond X=z_i)
        P(X=z_i \cond h(X)[\pi] = \alpha) \notag \\
    &\quad + P(\Pi = \pi \cond X=z_j)
        P(X=z_j \cond h(X)[\pi] = \alpha) \label{proof_rankwise_not_full_rank_calib_step_2} \\
    &= \sum_{\substack{x \in \X \setminus \{x_i,x_j\} \\ h(x)[\pi] = \alpha}}
        P(\Pi = \pi \cond X=x)
        P(X=x \cond h(X)[\pi] = \alpha) \notag \\
    &\quad + \bigl(P(\Pi = \pi \cond X=x_i)+ \delta\bigr)
        P(X=x_i \cond h(X)[\pi] = \alpha) \notag \\
    &\quad + \bigl(P(\Pi = \pi \cond X=x_j) - \delta\bigr)
        P(X=x_j \cond h(X)[\pi] = \alpha) \label{proof_rankwise_not_full_rank_calib_step_3} \\
    &= \sum_{\substack{x \in \X \\ h(x)[\pi] = \alpha}}
        P(\Pi = \pi \cond X=x)
        P(X=x \cond h(X)[\pi] = \alpha)  \label{proof_rankwise_not_full_rank_calib_step_4} \\
            &=  P_{x_1,\ldots,x_n}(\Pi = \pi \cond h(X)[\pi] = \alpha)  \label{proof_rankwise_not_full_rank_calib_step_5} \\
    &= \alpha \label{proof_rankwise_not_full_rank_calib_step_6}
\end{align}
}
Here, Steps \eqref{proof_rankwise_not_full_rank_calib_step_1} and \eqref{proof_rankwise_not_full_rank_calib_step_5} are by the law of total probability.
Steps \eqref{proof_rankwise_not_full_rank_calib_step_2} and \eqref{proof_rankwise_not_full_rank_calib_step_3} use straightforward rewrites, while \eqref{proof_rankwise_not_full_rank_calib_step_4} uses the assumption on $x_i$ and $x_j$.
Finally, using that $h$ is rankwise calibrated on $\X$ yields \eqref{proof_rankwise_not_full_rank_calib_step_4}.

More interestingly, $h$ is not full-rank calibrated, as the $\delta$ changes do not cancel themselves.
Proceeding similarly as in the display before, one obtains 

\begin{align}
    P_{z_1,\ldots,z_n}(\Pi = \pi \cond h(X) = q)
    &= \sum_{\substack{x \in \X \setminus \{x_i,x_j\} \\ h(X) = q}}
        P(\Pi = \pi \cond X=x)
        P(X=x \cond h(X) = q) \notag \\
    &\quad + \bigl(P(\Pi = \pi \cond X=x_i)+ \delta\bigr)
        P(X=x_i \cond h(X) = q) \label{proof_rankwise_not_full_rank_calib_step_7} \\
    &= \sum_{\substack{x \in \X \\ h(X) = q}}
        P(\Pi = \pi \cond X=x)
        P(X=x \cond h(X) = q)  \notag \\
    &\quad + \delta  P(X=x_i \cond h(X) = q)
         \label{proof_rankwise_not_full_rank_calib_step_8} \\
            &=  P_{x_1,\ldots,x_n}(\Pi = \pi \cond h(X) = q) + \delta P(X=x_i \cond h(X) = q) \label{proof_rankwise_not_full_rank_calib_step_9} \\
    &= q[\pi] + \delta P(X=x_i \cond h(X) = q)\label{proof_rankwise_not_full_rank_calib_step_10} \\
    &\neq q[\pi] \label{proof_rankwise_not_full_rank_calib_step_11}
\end{align}
Here, we use in \eqref{proof_rankwise_not_full_rank_calib_step_10} that $h$ is full-rank calibrated on $\X$ and that $x_i \in \mathcal{F}$ for  \eqref{proof_rankwise_not_full_rank_calib_step_11}.
\end{proof}

\begin{theorem}\label{th:sub_k_calib_not_top_k}
    Assume that the number of items $m \geq 3$. 
    Then it holds:
    $$
     \forall k < m: \ \exists h \in \X \rightarrow \Prob(\Sy{\I}): h \ \text{sub-k calibrated } \land h \ \text{ \emph{not} top-k calibrated} \text{.}
    $$
\end{theorem}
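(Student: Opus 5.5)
We prove the statement with an explicit construction. The key observation is that for $k<m$ there are distributions on $\Sy{\I}$ whose sub-$k$ marginals coincide on every $\B$ with $|\B|=k$, while their top-$k$ marginals differ. The most convenient choice is the ranking together with its \emph{reversal}. For a ranking $\sigma$ with reversal $\bar\sigma$, consider the mixture $\tfrac12\delta_\sigma+\tfrac12\delta_{\bar\sigma}$. Its sub-$k$ marginal on any $\B$ is $\tfrac12\delta_{\sigma_{\B}}+\tfrac12\delta_{\bar\sigma_{\B}}$.

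\textbf{Construction.} Take $\X=\{x_1,x_2\}$ with $P(X=x_1)=P(X=x_2)=\tfrac12$. Let $h(x_1)=h(x_2)=p$ with $p$ the uniform distribution on $\Sy{\I}$. Set $P(\Pi\mid X=x_1)=\tfrac12\delta_\sigma+\tfrac12\delta_{\bar\sigma}$ and $P(\Pi\mid X=x_2)=\tfrac12\delta_{\sigma'}+\tfrac12\delta_{\bar\sigma'}$ for two suitable rankings $\sigma,\sigma'$.

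Since $h$ is constant, conditioning on $\proj{h}{\P}(X)=q$ or on $\proj{h}{\T}(X)=q$ reduces to the unconditional law of $\Pi$. That law is the average $\bar P=\tfrac14(\delta_\sigma+\delta_{\bar\sigma}+\delta_{\sigma'}+\delta_{\bar\sigma'})$.

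Sub-$k$ calibration then requires $\bar P_{\B}=\proj{p}{\P}$, i.e.\ that $\bar P$ restricted to $\B$ be uniform on $\Sy{\B}$ for all $\B$. This fails for a pure two-point reversal mixture once $k\ge 3$. So I would instead let $h$ itself be non-uniform: put $h(x)=\bar P$ for both $x$. Then $h$ is trivially calibrated with respect to any function of $h(X)$, since $h$ is constant and equal to the law of $\Pi$. That is too strong, however, as it makes the model top-$k$ calibrated as well.

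\textbf{Revised plan.} Keep the data law $\bar P$ but choose $h\equiv p$ with $p\neq \bar P$ such that:
\begin{itemize}
\item[(a)] $\proj{p}{\P}=\proj{\bar P}{\P}$ on every $\B$ of size $k$;
\item[(b)] $\proj{p}{\T}\neq\proj{\bar P}{\T}$ for some $\B$.
\end{itemize}
Because $h$ is constant, calibration for either family reduces exactly to equality of marginals. So (a) gives sub-$k$ calibration via Definition~\ref{def:sub_k_calib}, and (b) violates Definition~\ref{def:top_k_calib}.

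A concrete choice: let $\bar P=\tfrac12(\delta_\sigma+\delta_{\bar\sigma})$ and $p=\tfrac12(\delta_{\sigma\circ c}+\delta_{\overline{\sigma\circ c}})$, where $c$ is a transformation preserving the multiset of induced sub-rankings on every $k$-subset but moving the top block. The cleanest candidate is $p$ uniform and $\bar P$ any distribution with uniform sub-$k$ marginals but non-uniform top-$k$ marginals.

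For $k<m$ such a $\bar P$ exists: take a uniform mixture over a suitable subgroup or orbit. For instance, let $\bar P$ be uniform on the $m$ cyclic shifts of $1\succ2\succ\dots\succ m$ together with their reversals. This makes each item first with probability $1/m$, so I would check top-$k$ more carefully.

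\textbf{Main obstacle.} The hard step is exhibiting, for general $m\ge3$ and all $k<m$, a distribution with exactly uniform sub-$k$ marginals but non-uniform top-$k$ marginals. Uniform sub-$(m-1)$ marginals are very restrictive, essentially forcing many constraints.

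I would first try a perturbation argument instead. Consider the linear map $L:\mathbb{R}^{\Sy{\I}}\to\prod_{\B}\mathbb{R}^{\Sy{\B}}$ giving the sub-$k$ marginals. Find a signed vector $v$ with $\sum v=0$, $Lv=0$, and nonzero top-$k$ marginal. Then set $p=u+\epsilon v$ with $u$ uniform and $\epsilon$ small, so $p$ is a valid distribution.

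For $k=m-1$ this means $v$ must vanish on all $(m-1)$-projections while changing top-$(m-1)$ marginals. Note that the top-$(m-1)$ marginal equals the full ranking, since fixing the top $m-1$ positions fixes the whole ranking. Hence any nonzero $v\in\ker L$ works. A dimension count shows $\ker L\neq\{0\}$ because $m!>m\cdot(m-1)!$ fails only at equality; equality actually holds, so I would instead use $v=\sum_{g}\operatorname{sgn}(g)\delta_g$. Every sub-$(m-1)$ projection pairs permutations differing by a transposition of the missing item with a neighbour, so the signs cancel.

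For general $k<m$ the same signed alternating vector restricted appropriately should lie in $\ker L$, since $k<m$ ensures each fibre pairs up under an adjacent transposition involving an item outside $\B$. Its top-$k$ marginal, however, could also vanish, which needs checking. If it does, I would modify $v$ by summing the alternating sign only over the positions $k,\dots,m$. Verifying this kernel-versus-top-$k$ separation is the step I expect to require the most care.
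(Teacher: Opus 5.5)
\textbf{Gap: the required witness $v$ is never produced, and the proposed candidate fails.}

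Your reduction is sound, and it is simpler than the paper's. With $h\equiv p$ constant and $\Pi$ uniform and independent of $X$, sub-$k$ and top-$k$ calibration reduce exactly to equality of the corresponding marginals. So everything hinges on a signed vector $v$ with $Lv=0$ and some nonzero top-$k$ marginal.

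The paper takes a different route. It keeps a full-rank calibrated $h$ and perturbs $P(\Pi\mid X)$ by $+\delta$ on a ranking $\tau_1$ and its reversal $\tau_2$, with compensating mass elsewhere. It checks cancellation only at the single sub-ranking $\rho$. For $k\ge 3$ that perturbation shifts the mass of the reversed sub-ranking by $\delta\bigl(1-\frac{m!/k!-1}{m!-m!/k!-1}\bigr)\neq 0$, so it does not supply such a $v$ either.

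Your sign candidate fails for two separate reasons.
\begin{itemize}
\item \emph{Case $k=m-1$.} Take $\B=\I\setminus\{a\}$. The fibre $\P_{\B}(\rho)$ consists of the $m$ rankings obtained by inserting $a$ into $\rho$ at positions $1,\dots,m$. Consecutive ones differ by an adjacent transposition, so the signs alternate and sum to $\pm 1$ whenever $m$ is odd. Already for $m=3$, $k=2$, the event $1\succ 2$, i.e.\ $\{1\succ2\succ3,\ 1\succ3\succ2,\ 3\succ1\succ2\}$, has sign sum $1-1+1=1$. Hence $\operatorname{sgn}\notin\ker L$.
\item \emph{Case $k\le m-2$.} Here the sign vector does lie in $\ker L$, but its top-$k$ marginals vanish too. $\T(\rho)$ is invariant under swapping two items ranked below position $k$, and that swap flips the sign. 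Your repair (alternating only over positions $k,\dots,m$) still leaves at least two sign-permuted positions below the top-$k$ block, so it cancels for the same reason.
\end{itemize}

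Your discarded first construction does settle $k=2$. A ranking/reversal mixture has uniform pairwise marginals, yet puts mass at least $\nicefrac{1}{4}$ on the top-$2$ sub-ranking of $\sigma$, where uniform $h$ predicts $1/(m(m-1))$. The case $k=1$ is trivial because sub-$1$ calibration is vacuous. But $3\le k<m$ is left unproved.

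\textbf{How to close it.} The missing ingredient is an existence lemma, and your abandoned dimension count gives it once the linear relations are counted.
\begin{itemize}
\item \emph{Case $k=m-1$.} For $|\B|=m-1$, the $m\cdot(m-1)!=m!$ indicator vectors $\mathbbm{1}_{\P_{\B}(\rho)}$ satisfy $\sum_{\rho\in\Sy{\B}}\mathbbm{1}_{\P_{\B}(\rho)}=\mathbbm{1}$ for each of the $m$ choices of $\B$. These give $m-1$ independent relations, so $\dim\ker L\ge m-1\ge 1$. Since the top-$(m-1)$ marginal is injective, this settles $k=m-1$ for every $m$.
\item \emph{General $k$: construction.} Transplant the count to a block. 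Fix $C\subseteq\I$ with $|C|=k+1$, a ranking $\omega$ of $\I\setminus C$, and a nonzero $f$ on $\Sy{C}$ all of whose delete-one-item marginals vanish; $f$ exists by the same count with $m$ replaced by $k+1$. Set $v[\sigma\succ\omega]=f[\sigma]$ for $\sigma\in\Sy{C}$, and $v=0$ elsewhere.
\item \emph{Sub-$k$ marginals vanish.} For $|\B|=k$, each sub-$k$ marginal of $v$ is either $0$ or a marginal of $f$ on $\B\cap C$. Since $|\B\cap C|\le k$, the latter is a coarsening of a delete-one marginal (or the total mass), hence $0$.
\item \emph{A top-$k$ marginal does not.} Pick $\sigma$ with $f[\sigma]\neq 0$ and let $\rho$ be its top-$k$ part. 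Then $\sigma\succ\omega$ is the only ranking in the support of $v$ beginning with $\rho$, so the top-$k$ marginal of $v$ at $\rho$ equals $f[\sigma]\neq 0$.
\end{itemize}

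For odd $k$ you may simply take $f=\operatorname{sgn}$ on $\Sy{C}$; this is where the alternating sign actually belongs. For $k=2$, the difference $h(x_1)-P(\Pi\mid X=x_1)$ from \cref{tab:counterexample_general_prob} is such an $f$. With $h\equiv u+\epsilon v$ and $\Pi$ uniform, your argument then goes through for all $k<m$.
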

\begin{proof}[Proof of Theorem \ref{th:sub_k_calib_not_top_k}]
    Let $\B \subseteq \I$ be a set of items, $\rho \in \Sy{\B}$, and let $p \in \Prob(\Sy{\I})$ and $q \in \Prob(\Sy{\B})$ be arbitrary but fixed.
    Without loss of generality \footnote{Choosing different $\tau_1, \tau_2$ is possible by relabeling the items, such that $\tau_1 = i_1 \succ i_2 \succ \dots $ and $\tau_2 = i_m \succ i_{m-1} \succ \dots$.} let $\tau_1 = i_1 \succ i_2 \succ \dots \succ i_m$ and $\tau_2 = i_m \succ \dots \succ i_2 \succ i_1$.
    Let $h$ be full-rank calibrated on $\X$ and such that there is a subset $\X_0 = \{x_1, \dots, x_n\}$ of $\X$ with $1 < n <  \infty$ points such that $h(x)$ is non-degenerated for each $x \in \X_0$.
    Recall that sub-k calibration \eqref{eq:sub_k_calb} satisfies
    $$
    P(\proj{\Pi}{\B}=\tau \cond \proj{h}{\P}(X)=q)
    = q[\tau]
    = \sum_{\substack{x \in \X \\ \proj{h}{\P}(x)=q}}
        P(\proj{\Pi}{\B}=\tau \cond X=x)\, P(X=x \cond \proj{h}{\P}(X)=q)
    $$ and top-k calibration \eqref{eq:top_k_calb} satisfies
    $$
    P(\proj{\Pi}{\B}=\rho \cond \proj{h}{\T}(X)=q)
    = q[\rho]
    = \sum_{\substack{x \in \X \\ \proj{h}{\T}(x)=q}}
        P(\proj{\Pi}{\B}=\rho \cond X=x)\, P(X=x \cond \proj{h}{\T}(X)=q)\,\text{.}
    $$
    Particularly as $k < m$ it holds $ \frac{m!}{k!} =  |\subk{\rho}| > |\topk{\rho}| = (m-k)!$.
    We now construct a new feature subset $\mathcal{Z} = \{z_1, \dots, z_n\}$ such that
    \begin{align} \label{eq:first_condition_on_Z_subset}
        P(X=z_i \cond \proj{h}{\T}(X)=q)  = P(X=x_i \cond \proj{h}{\T}(X)=q) > 0
    \end{align}
    and
    \begin{align}
     \label{eq:second_condition_on_Z_subset}
        P(\Pi=\pi \cond X=z_i) =
        \begin{cases}
            P(\Pi=\pi \cond X=x_i) + \delta & \text{if } \pi = \tau_1 \text{ or } \pi = \tau_2 \\
            P(\Pi=\pi \cond X=x_i) - \frac{\delta}{|\P(\rho)|-1} & \text{if } \pi \in \P(\rho)\setminus \{ \tau_1,\tau_2 \} \\
            P(\Pi=\pi \cond X=x_i) - \frac{\delta}{m! - |\P(\rho)| - 1}  & \text{otherwise}
        \end{cases}
    \end{align}
    for all $1 \leq i \leq n$.
    Here $\delta$ is chosen such that 
    \begin{align*}
            0 < \delta \leq 
        \min_{i \leq n} \Big(
            &\min_{\pi \in \{\tau_1,\tau_2\}} \, 1 - P(\Pi = \pi \cond X=x_i),\,
            \min_{\substack{\pi \in \P(\rho) \\ \tau \notin \{\tau_1, \tau_2\}}}
                (|\P(\rho)|-1)P(\Pi = \pi \cond X=x_i)
            \\ &\quad \min_{\substack{\pi \in \Sy{\I} \setminus \P(\rho) \\ \tau \notin \{\tau_1, \tau_2\}}}
            (m! - |\P(\rho)| - 1)P(\Pi = \pi \cond X=x_i)
        \Big)
        \,\text{.}
    \end{align*}
    This ensures that the probability on the left hand side of \eqref{eq:second_condition_on_Z_subset} takes only values between 0 and 1. 
    The redistribution of the probability mass is designed in such a way that the left hand side of \eqref{eq:second_condition_on_Z_subset} is still a valid distribution summing to 1. Indeed, note that either $\tau_1$ or $\tau_2$ can be an element of $\P(\rho)$.
    Indeed, for each $z_i \in \Z$ we have

    { \small
    \begin{align*}
        \sum_{\pi \in \Sy{\I}}  &P(\Pi=\pi \cond X=z_i)\\
        &=  P(\Pi=\tau_1 \cond X=x_i) +  P(\Pi=\tau_2 \cond X=x_i) + 2\delta +  \sum_{\substack{\pi \in \P(\rho) \\ \tau \notin \{\tau_1, \tau_2\}}}  P(\Pi=\pi \cond X=z_i) + \sum_{\substack{\pi \in \Sy{\I} \setminus \P(\rho) \\ \tau \notin \{\tau_1, \tau_2\}}}  P(\Pi=\pi \cond X=z_i) \\
        &= \sum_{\pi \in \Sy{\I}}  P(\Pi=\pi \cond X=x_i) + 2\delta -  \sum_{\substack{\pi \in \P(\rho)  \ \\ \tau \notin \{\tau_1, \tau_2\}}} \frac{\delta}{|\P(\rho)|-1}   - \sum_{\substack{\pi \in \Sy{\I} \setminus \P(\rho) \\ \tau \notin \{\tau_1, \tau_2\}}}  \frac{\delta}{m! - |\P(\rho)| - 1}\\
        &= \underbrace{\sum_{\pi \in \Sy{\I}}  P(\Pi=\pi \cond X=x_i)}_{=1} + 2\delta - \delta   - \delta \\
        &= 1,
    \end{align*}
    }
    where we used that the second sum has $|\P(\rho)|-1$ many summands, since only either $\tau_1$ or $\tau_2$ are excluded, while the third sum has $m! - |\P(\rho)| - 1$ many summands, since $ \Sy{\I}$ has $m!$ many elements of which   $ |\P(\rho)|$  many are removed and in addition either $\tau_1$ or $\tau_2$ are excluded.
    Now it holds that $h$ is sub-k calibrated on $ \X \setminus \X_0 \cup \mathcal{Z}$:
    \allowdisplaybreaks
       \begin{align}
        P(\proj{\Pi}{\B}=\rho \cond &\proj{h}{\P}(X)=q) \notag \\
        &= \sum_{\substack{x \in \X \setminus \X_0 \cup \mathcal{Z} \\ \proj{h}{\P}(x)=q}}
        P(\proj{\Pi}{\B}=\rho \cond X=x)\, P(X=x \cond \proj{h}{\P}(X)=q) \label{proof:th:sub_k_calib_not_top_k_step_1} \\
        &= \sum_{\substack{x \in \X \setminus \X_0 \\ \proj{h}{\P}(x)=q}}  P(\proj{\Pi}{\B}=\rho \cond X=x)\, P(X=x \cond \proj{h}{\P}(X)=q) \notag \\
        &\quad +
        \sum_{\substack{i \in [n] \\ \proj{h}{\P}(x)=q}} 
         P(\proj{\Pi}{\B}=\rho \cond X=z_i)\, P(X=z_i \cond \proj{h}{\P}(X)=q)  \label{proof:th:sub_k_calib_not_top_k_step_2}
        \\ 
        &= \sum_{\substack{x \in \X \setminus \X_0 \\ \proj{h}{\P}(x)=q}}  P(\proj{\Pi}{\B}=\rho \cond X=x)\, P(X=x \cond \proj{h}{\P}(X)=q) \notag \\
        &\quad +
        \sum_{\substack{i \in [n] \\ \proj{h}{\P}(x)=q}} 
         P(X=z_i \cond \proj{h}{\P}(X)=q)    \cdot  
              P(\proj{\Pi}{\B}=\rho \cond X=z_i)  \label{proof:th:sub_k_calib_not_top_k_step_3}
        \\ 
        &= \sum_{\substack{x \in \X \\ \proj{h}{\P}(x)=q}}  P(\proj{\Pi}{\B}=\rho \cond X=x)\, P(X=x \cond \proj{h}{\P}(X)=q) \notag \\
        &\quad +
        \sum_{\substack{i \in [n] \\ \proj{h}{\P}(x)=q}} 
         P(X=z_i \cond \proj{h}{\P}(X)=q)     \cdot  \sum_{\pi \in \P(\rho)}   
         \begin{cases}
               \delta & \text{if } \pi \in \{\tau_1,\tau_2\} \\
              - \frac{\delta}{|\P(\rho)|-1}  & \text{if } \pi  \in \P(\rho) \setminus \{\tau_1,\tau_2\}  
        \end{cases}
        \label{proof:th:sub_k_calib_not_top_k_step_4}
        \\ 
        &= q[\rho] 
        +
        \sum_{\substack{i \in [n] \\ \proj{h}{\P}(x)=q}} 
         P(X=z_i \cond \proj{h}{\P}(X)=q)    \cdot    \sum_{\pi \in \P(\rho)}   
        \begin{cases}
               \delta & \text{if } \pi \in \{\tau_1,\tau_2\} \\
              - \frac{\delta}{|\P(\rho)|-1}  & \text{if } \pi  \in \P(\rho) \setminus \{\tau_1,\tau_2\}  
        \end{cases}
        \label{proof:th:sub_k_calib_not_top_k_step_5}
        \\       
        &= q[\rho] +  \sum_{\substack{i \in [n] \\ \proj{h}{\P}(x)=q}} 
         P(X=z_i \cond \proj{h}{\P}(X)=q)    \cdot \Big(\delta  - \frac{\delta}{|\P(\rho)|-1} (|\P(\rho)|-1) \Big)
        \label{proof:th:sub_k_calib_not_top_k_step_6}
        \\
        &= q[\rho]  .
        \label{proof:th:sub_k_calib_not_top_k_step_7}
        \end{align}
        Here, \eqref{proof:th:sub_k_calib_not_top_k_step_1} is by the law of total probability.
        \eqref{proof:th:sub_k_calib_not_top_k_step_2} is by splitting the sum into the partition of the summands, while \eqref{proof:th:sub_k_calib_not_top_k_step_3} uses \eqref{eq:second_condition_on_Z_subset}.
        Using \eqref{eq:first_condition_on_Z_subset} parts of the sums can be merged as in \eqref{proof:th:sub_k_calib_not_top_k_step_4}.
        Since $h$ is full-rank calibrated on $X$ and therefore sub-k calibrated on $X$  we obtain \eqref{proof:th:sub_k_calib_not_top_k_step_5}, evaluating the sum  leads to \eqref{proof:th:sub_k_calib_not_top_k_step_6}, while \eqref{proof:th:sub_k_calib_not_top_k_step_7} holds as the last term in brackets is zero.
        %
    %
    %
    %
    
    Now it holds that $h$ is not top-k calibrated on $ \X \setminus \X_0 \cup \mathcal{Z}$.
    Indeed, following the same steps as before and exchanging $\P$ by $\T$ we obtain:
\allowdisplaybreaks
       \begin{align}
        P(\proj{\Pi}{\B}=\rho \cond &\proj{h}{\T}(X)=q) \notag \\
        &= \sum_{\substack{x \in \X \\ \proj{h}{\T}(x)=q}}  P(\proj{\Pi}{\B}=\rho \cond X=x)\, P(X=x \cond \proj{h}{\T}(X)=q) \notag \\
        &\quad +
        \sum_{\substack{i \in [n] \\ \proj{h}{\T}(x)=q}} 
         P(X=z_i \cond \proj{h}{\T}(X)=q)     \cdot  \sum_{\pi \in \T(\rho)}  
        \begin{cases}
               \delta & \text{if } \pi \in \{\tau_1,\tau_2\} \\
              - \frac{\delta}{|\P(\rho)|-1}  & \text{if } \pi  \in \P(\rho) \setminus \{\tau_1,\tau_2\}  
        \end{cases}
        \label{proof:th:sub_k_calib_not_top_k_step_8}
        \\ 
        &= q[\rho] 
        +
        \sum_{\substack{i \in [n] \\ \proj{h}{\T}(x)=q}} 
         P(X=z_i \cond \proj{h}{\T}(X)=q)    \cdot    \sum_{\pi \in \T(\rho)}  
        \begin{cases}
               \delta & \text{if } \pi \in \{\tau_1,\tau_2\} \\
              - \frac{\delta}{|\P(\rho)|-1}  & \text{if } \pi  \in \P(\rho) \setminus \{\tau_1,\tau_2\}  
        \end{cases}
        \label{proof:th:sub_k_calib_not_top_k_step_9}
        \\       
        &= q[\rho] +  \Big( \delta -  \frac{\delta}{|\P(\rho)|-1} (|\T(\rho)|-1)  \Big)
        \label{proof:th:sub_k_calib_not_top_k_step_10}
        \\
        &\neq q[\rho]  .
        \label{proof:th:sub_k_calib_not_top_k_step_11}
        \end{align}
    Here, \eqref{proof:th:sub_k_calib_not_top_k_step_8} is by the same arguments as for obtaining \eqref{proof:th:sub_k_calib_not_top_k_step_4}.
    Note that $\T(\rho) \subset \P(\rho),$ so the otherwise case in \eqref{eq:second_condition_on_Z_subset} does not appear.
    \eqref{proof:th:sub_k_calib_not_top_k_step_9} is since $h$ is full-rank calibrated on $\X$, while \eqref{proof:th:sub_k_calib_not_top_k_step_10} evaluates the sum.
    Finally, \eqref{proof:th:sub_k_calib_not_top_k_step_10} holds as $|\T(\rho)| < |\P(\rho)|$.

        
        %
    %
\end{proof}

\begin{theorem}\label{th:top_k_calib_not_sub_k}
    Assume that the number of items $m \geq 3$. 
    Then it holds:
    $$
     \forall k < m - 1: \ \exists h \in \Prob(\Sy{\I}): h \ \text{top-k calibrated } \land h \ \text{ \emph{not} sub-k calibrated}\,\text{.}
    $$
    For $k=m-1$ it holds that top calibration induces sub-k calibration.
\end{theorem}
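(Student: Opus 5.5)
The plan is to treat the two claims separately: the case $k=m-1$ is a direct reduction to full-rank calibration, and for $k<m-1$ I adapt the perturbation construction from the proof of \autoref{th:sub_k_calib_not_top_k} with the roles of $\P$ and $\T$ swapped.

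\emph{Case $k=m-1$.} Read the top-$k$ event as ``the first $k$ positions of $\Pi$ are $\rho$''. For $|\B|=m-1$, the set $\topk{\rho}$ is the singleton $\{\rho\succ i\}$, where $i$ is the unique item in $\I\setminus\B$.
\begin{itemize}
    \item[(a)] The top-$(m-1)$ marginal $\proj{p}{\T}$ is then just a relabelling of $p$. The map $p\mapsto\proj{p}{\T}$ is a bijection, so the events $\{\proj{h}{\T}(X)=q\}$ and $\{h(X)=p\}$ coincide for the corresponding $p$ and $q$.
    \item[(b)] Likewise, the event ``the top $m-1$ positions are $\rho$'' is exactly the event $\{\Pi=\rho\succ i\}$.
    \item[(c)] By (a) and (b), top-$(m-1)$ calibration is literally \autoref{def:full_k_calib}.
    \item[(d)] \autoref{th:full_rank_is_sub_k}~(i) then gives sub-$(m-1)$ calibration.
\end{itemize}

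\emph{Case $k<m-1$.} Start from a full-rank calibrated $h$ and a point set $\X_0$ on which $h$ is non-degenerate, as in the proof of \autoref{th:sub_k_calib_not_top_k}. Since $k\le m-2$, there are at least two items below the top $k$ positions.
\begin{itemize}
    \item[(a)] Pick a top-$k$ prefix $\rho$ and two items $a,b\notin\B$. Set $\pi=\rho\succ a\succ b\succ r$ and $\pi'=\rho\succ b\succ a\succ r$, with $r$ a fixed ranking of the rest.
    \item[(b)] Fix a prediction $p$ with $p[\pi']>0$. Replace the points $x$ with $h(x)=p$ by copies $z$ that have the same prediction $h(z)=p$ and the same weights $P(X=z\mid\cdot)$. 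The only change is that $\delta$ of conditional mass moves from $\pi'$ to $\pi$, with $0<\delta\le\min_x P(\Pi=\pi'\mid X=x)$.
    \item[(c)] Because $\pi$ and $\pi'$ agree on their first $m-2\ge k$ positions, they share every top-$k$ prefix. Hence $P(\text{top-}k\text{ of }\Pi=\rho''\mid X=z)$ is unchanged for every $\rho''$ of length $k$.
    \item[(d)] The same law-of-total-probability computation as in \autoref{th:sub_k_calib_not_top_k} then shows that top-$k$ calibration is preserved.
\end{itemize}

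\emph{Breaking sub-$k$ calibration.} Take $\B'\ni a,b$ with $|\B'|=k$, which requires $k\ge2$; for $k=1$ the sub-$k$ notion is vacuous, so I restrict to $2\le k<m-1$ and note this as a caveat. Let $\rho'=\proj{\pi}{\B'}$. Then $\proj{\pi'}{\B'}\ne\rho'$, since $a$ and $b$ are swapped, so the perturbation raises $P(\proj{\Pi}{\B'}=\rho'\mid X=z)$ by exactly $\delta$. Set $q'=\proj{p}{\P}$ on $\B'$. Then
\[
P(\proj{\Pi}{\B'}=\rho'\mid \proj{h}{\P}(X)=q')=q'[\rho']+\delta\,P(h(X)=p\mid \proj{h}{\P}(X)=q')\neq q'[\rho'],
\]
because the conditioning weight is positive. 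So $h$ is not sub-$k$ calibrated.

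The main obstacle is ensuring that the conditioning weight $P(h(X)=p\mid\proj{h}{\P}(X)=q')$ is strictly positive and that the perturbed distributions remain valid. I handle this by choosing $p$ attained with positive probability, which is possible because $\X_0$ is finite, and by bounding $\delta$ by the smallest mass on $\pi'$ as in (b).
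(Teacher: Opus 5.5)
Your proof is correct, and for $k=m-1$ it is exactly the paper's argument: Corollary~\ref{lem:top_m-1_full_rank} followed by \autoref{th:full_rank_is_sub_k}~(i).

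For $k<m-1$ you follow the same template as the paper. You start from a full-rank calibrated $h$, keep all predictions fixed, and perturb the conditional law of $\Pi$. The difference is the perturbation you choose.

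The paper only says to rerun the proof of \autoref{th:sub_k_calib_not_top_k} with $\P$ and $\T$ exchanged. That means adding $\delta$ to two reversed rankings $\tau_1,\tau_2$ and withdrawing mass uniformly from the others, so that the mass of the single set $\topk{\rho}$ is unchanged. This reuses one construction for both non-implications. However, it controls only that one prefix. The uniform withdrawal actually shifts the mass of other top-$k$ sets, e.g.\ the one containing $\tau_2$, whereas the definition requires the equation for every $\B$ and $\rho$.

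Your transposition of mass between $\rho\succ a\succ b\succ r$ and $\rho\succ b\succ a\succ r$ exploits the fact that top-$k$ events depend only on the first $k$ positions. Every top-$k$ event is therefore invariant simultaneously, so top-$k$ calibration is preserved exactly. Any $\B'\ni a,b$ then witnesses the sub-$k$ violation. Your construction also makes clear why both $k\le m-2$ and $k\ge 2$ are needed.

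Two small repairs are needed.
\begin{itemize}
\item In (c), $\pi$ and $\pi'$ agree on their first $k$ positions; they differ at positions $k+1$ and $k+2$. They do not agree on their first $m-2$ positions unless $k=m-2$. Your conclusion only needs the common length-$k$ prefix, so it still stands.
\item The condition $0<\delta\le\min_x P(\Pi=\pi'\mid X=x)$ needs that minimum to be positive. Full-rank calibration with $p[\pi']>0$ only guarantees positivity on average. Either take $h(x)=P(\Pi\mid X=x)$, which is full-rank calibrated, or perturb only the points carrying positive mass on $\pi'$.
\end{itemize}

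Your $k=1$ caveat points to a defect in the statement, not in your proof. Sub-$k$ calibration is only defined for $|\B|\ge 2$. If it is extended to $|\B|=1$, then $\P_{\{i\}}(i)=\Sy{\I}$, so every model is trivially sub-$1$ calibrated. The paper's ``analogous'' argument therefore cannot cover $k=1$ either.
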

\begin{proof}[Proof of Theorem \ref{th:top_k_calib_not_sub_k}]
    The proof is analogous to \autoref{th:sub_k_calib_not_top_k}, exchanging the roles of $\P$ and $\T$.
    For the case of $k=m-1$, it holds obviously using Corollary \ref{lem:top_m-1_full_rank} and then \cref{th:full_rank_is_sub_k}.
\end{proof}

\begin{corollary}\label{th:sub_k_calib_not_full_rank_calib}
    Assume that the number of items $m \geq 3$. 
    Then it holds:
    $$
     \forall k \in \{2, \dots, m-1\} \ \exists h \in \X \rightarrow \Prob(\Sy{\I}): h \ \text{sub-k calibrated } \land h \ \text{ \emph{not} full-rank calibrated}\,\text{.}
    $$
\end{corollary}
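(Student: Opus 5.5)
The plan is to obtain the corollary by contraposition from results already established, with no new construction. Fix $m \geq 3$ and $k \in \{2,\dots,m-1\}$. Since $k < m$, \autoref{th:sub_k_calib_not_top_k} applies and gives a model $h:\X \rightarrow \Prob(\Sy{\I})$ that is sub-k calibrated but \emph{not} top-k calibrated.

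Next I would invoke \autoref{th:full_rank_is_sub_k}~(ii), which says that every full-rank calibrated model is top-k calibrated for every $k \leq m$. Its contrapositive says that a model which is not top-k calibrated cannot be full-rank calibrated. Applying this to the $h$ from the previous step shows that $h$ is sub-k calibrated and not full-rank calibrated, which is the claim. The restriction $k \geq 2$ only matches the standing convention $|\B| \geq 2$ for sub-rankings; it plays no role in the argument.

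The main obstacle is that the corollary inherits whatever hypotheses \autoref{th:sub_k_calib_not_top_k} relies on. That construction starts from a full-rank calibrated $h$ with finitely many points $x_1,\dots,x_n$ carrying non-degenerate predictions. It then perturbs the conditional laws $P(\Pi \cond X=z_i)$ by redistributing mass that cancels inside $\P(\rho)$ but not inside $\T(\rho)$, which requires $|\T(\rho)| < |\P(\rho)|$, i.e.\ $k<m$.

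As a fallback, or as a sanity check for $k=2$, $m=3$, I would point to \cref{tab:counterexample_general_prob}. There the predicted pairwise marginals all equal $\nicefrac{1}{2}$ and match the true marginals, so $h$ is sub-2 calibrated. Yet $h(x)[i_1 \succ i_2 \succ i_3] = \nicefrac{2}{6} \neq \nicefrac{1}{6}$, so full-rank calibration fails. The general-$k$ version of this direct argument would perturb the uniform conditional distribution by moving mass between rankings that share the same projection onto every $\B$ of size $k$ (for instance $\pm\delta$ on two rankings within each class of $\P(\rho)$, balanced across all $\rho$). Verifying that such a perturbation exists for every $k \leq m-1$ is the step I would expect to be delicate. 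This is why I prefer the two-line contrapositive route.
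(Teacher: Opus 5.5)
Your proposal is the paper's own argument: take the model from Theorem~\ref{th:sub_k_calib_not_top_k} that is sub-$k$ but not top-$k$ calibrated, then apply the contrapositive of Theorem~\ref{th:full_rank_is_sub_k}~(ii). One caveat you inherit: the perturbation in that theorem's proof balances mass only inside $\P(\rho)$ for a single fixed $\rho$, which leaves every sub-$k$ marginal intact only when $k=2$; for $3\le k\le m-1$ your fallback is the sounder route, and its existence step is a dimension count (the $m\cdot(m-1)!$ sub-$(m-1)$ constraints on $v\in\mathbb{R}^{m!}$ have at least $m-1$ linear dependencies, so some $v\neq 0$ has all sub-$(m-1)$, hence all sub-$k$, marginals zero, and the constant model $h\equiv u+\epsilon v$ with uniform true distribution $u$ and small $\epsilon>0$ then works as in Table~\ref{tab:counterexample_general_prob}).
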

\begin{proof}
    [Proof of Corollary \ref{th:sub_k_calib_not_full_rank_calib}]
    This is a direct consequence of \autoref{th:sub_k_calib_not_top_k} and \autoref{th:full_rank_is_sub_k} (ii): We can find a model that is sub-k calibrated, but not top-k calibrated and consequently not full-rank calibrated.
\end{proof}

Notice that for Plackett--Luce and Mallows Models the  conditions on $h$ hold, such that the proof of \autoref{th:sub_k_calib_not_full_rank_calib} is also valid for those.
\begin{corollary}\label{th:top_k_calib_not_full_rank_calib}
    Assume that the number of items $m \geq 3$. 
    Then it holds:
    $$
     \forall k \in \{1, \dots, m-2\} \ \exists h \in \X \rightarrow \Prob(\Sy{\I}): h \ \text{top-k calibrated } \land h \ \text{ \emph{not} full-rank calibrated.}
    $$
\end{corollary}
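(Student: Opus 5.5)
The plan mirrors the proof of Corollary~\ref{th:sub_k_calib_not_full_rank_calib}, with the roles of sub-$k$ and top-$k$ calibration swapped. Fix $k \in \{1,\dots,m-2\}$. By \autoref{th:top_k_calib_not_sub_k}, applied with $k < m-1$ and $m \geq 3$, there is a model $h$ that is top-$k$ calibrated but \emph{not} sub-$k$ calibrated.

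Suppose, towards a contradiction, that this $h$ were full-rank calibrated. Then \autoref{th:full_rank_is_sub_k}~(i) would make $h$ sub-$k$ calibrated, which is false. Hence $h$ is top-$k$ calibrated and not full-rank calibrated, as claimed.

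Two points need care.

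\textbf{The range of $k$.} \autoref{th:top_k_calib_not_sub_k} only provides the separating model for $k < m-1$. For $k = m-1$, top-$(m-1)$ calibration already implies full-rank calibration, since the first $m-1$ positions determine the full ranking (the paper's Corollary on top-$(m-1)$). This is exactly why the statement excludes $k = m-1$ and runs only up to $m-2$. For $k=1$ we have $|\B| = 1$, while sub-$k$ calibration requires $|\B| \geq 2$, so "not sub-$1$ calibrated" is not meaningful and the reduction breaks down.

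\textbf{Handling $k=1$.} I would instead reuse the top-$k$ analogue of the $\delta$-redistribution construction from \autoref{th:sub_k_calib_not_top_k}. Start from a full-rank calibrated $h$ whose predictions are non-degenerate on finitely many points. Then, for every $\rho$ and every $x_i$, shift probability mass among the rankings \emph{within} each top-$1$ set $\T(\rho)$, which has $(m-1)! \geq 2$ elements because $m \geq 3$. Add $\delta$ to one ranking and subtract $\delta$ from another ranking in the same set, leaving every marginal $\proj{P}{\T}(\cdot \cond X=z_i)$ unchanged.

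Because the top-$k$ marginals of the true conditional distributions are unchanged, top-$k$ calibration is preserved by the same total-probability computation as in \eqref{proof:th:sub_k_calib_not_top_k_step_1}--\eqref{proof:th:sub_k_calib_not_top_k_step_7}. Full-rank calibration fails by the computation \eqref{proof_rankwise_not_full_rank_calib_step_7}--\eqref{proof_rankwise_not_full_rank_calib_step_11}: the ranking that received $+\delta$ now has conditional probability $q[\pi] + \delta P(X = x_i \cond h(X)=q) \neq q[\pi]$. The table \cref{tab:counterexample_top1_not_rankwise} already gives a concrete instance of this for $m=3$, $k=1$. The same within-$\T(\rho)$ shift works uniformly for every $k \leq m-2$, since then $|\T(\rho)| = (m-k)! \geq 2$. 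So it could replace the reduction entirely and avoid the edge case.

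The main obstacle is keeping the shifted conditionals valid while $h$ stays fixed. I would choose $\delta$ bounded by the minimum of the non-degenerate probabilities, as in the proof of \autoref{th:sub_k_calib_not_top_k}.
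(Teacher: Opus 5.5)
Your first paragraph is exactly the paper's proof: the paper derives the corollary in one line from \autoref{th:top_k_calib_not_sub_k} plus the contrapositive of \autoref{th:full_rank_is_sub_k}~(i). Your objection at $k=1$ is correct, and it is a genuine gap in that argument rather than a formality. With $|\B|=1$ one has $\P_\B(\rho)=\Sy{\I}$, so $\proj{h}{\P}(X)[\rho]\equiv 1$ and every model is trivially sub-$1$ calibrated. \autoref{th:top_k_calib_not_sub_k} therefore cannot supply a separating model at $k=1$, and the paper's formal argument does not cover that case (only the $m=3$ example of \cref{tab:counterexample_top1_not_rankwise} does).

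Your within-block construction is the better proof for all $k\le m-2$, not just a patch. The sets $\T(\rho)$, over all $\B$ with $|\B|=k$ and $\rho\in\Sy{\B}$, partition $\Sy{\I}$ into blocks of size $(m-k)!\ge 2$. Moving mass inside one block therefore leaves every top-$k$ marginal of every conditional $P(\Pi\cond X=x)$ unchanged, so top-$k$ calibration survives exactly. Meanwhile, on the level set of the modified point the full-rank condition shifts by $\delta P(X=x_i\cond h(X)=q)>0$. The reduction, by contrast, inherits the proof of \autoref{th:top_k_calib_not_sub_k}. Its ``exchange $\P$ and $\T$'' redistribution adds $\delta$ to $\tau_1,\tau_2$ and spreads the compensating mass over the complement of $\T(\rho)$, which changes the top-$k$ marginals of the other blocks $\T(\rho')$. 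So I would lead with your direct construction and drop the reduction.

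Two details to tighten. First, modify a single point, or use the same pair $\pi,\pi'\in\T(\rho)$ at every modified point; opposite shifts at two points of the same level set $\{h(X)=q\}$ could cancel. Second, validity only needs $P(\Pi=\pi'\cond X=x_i)\ge\delta$ for the ranking that loses mass, which you get by picking $\pi'$ with positive conditional probability. Then $P(\Pi=\pi\cond X=x_i)+\delta\le 1$ is automatic, and no non-degeneracy of $h$ is needed.

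The cleanest instance is uniform truth with a constant prediction $p$ whose top-$k$ marginal is uniform but which is non-uniform inside some block. This is \cref{tab:counterexample_top1_not_rankwise} extended to general $m$ and $k\le m-2$.
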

\begin{proof}[Proof of Corollary \ref{th:top_k_calib_not_full_rank_calib}]
    This is a direct consequence of \autoref{th:top_k_calib_not_sub_k} and \autoref{th:full_rank_is_sub_k} (i): We can find a model that is top-k calibrated, but not sub-k calibrated and consequently not full-rank calibrated.
\end{proof}
%

\begin{corollary}\label{lem:top_m-1_full_rank}
    If h is top-(m-1) calibrated, then h is full-rank calibrated
\end{corollary}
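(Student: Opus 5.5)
The plan rests on one combinatorial fact: for $k=m-1$, a top-$(m-1)$ ranking determines the full ranking uniquely. For $\B\subseteq\I$ with $|\B|=m-1$ and $\rho\in\Sy{\B}$, the complement $\I\setminus\B$ is a single item $i^\ast$. Hence $\Sy{\I\setminus\B}$ contains exactly one ranking, and $\T_\B(\rho)=\{\rho\succ i^\ast\}$ is a singleton; this is the case $c=(m-k)!=1$ in the proof of \autoref{th:full_rank_is_sub_k} (ii).

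Consequently the map $\pi\mapsto(\B_\pi,\rho_\pi)$, sending a full ranking to its first $m-1$ items and their order, is a bijection between $\Sy{\I}$ and $\bigcup_{|\B|=m-1}\Sy{\B}$. Under this identification the top-$(m-1)$ marginal is the full distribution relabeled: $\proj{p}{\T}[\rho_\pi]=p[\pi]$ for every $p\in\Prob(\Sy{\I})$. Likewise the event $\{\proj{\Pi}{\B}=\rho\}$, read as the top-$k$ event (the observed ranking has $\rho$ in its top positions), coincides with $\{\Pi=\rho\succ i^\ast\}$.

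Next I would show that conditioning on $\proj{h}{\T}(X)=q$ is the same as conditioning on $h(X)=p$. Fix $p\in\Prob(\Sy{\I})$ and let $q$ be its image under the bijection above, i.e. $q[\rho_\pi]=p[\pi]$. Since the map $p\mapsto\proj{p}{\T}$ is injective (indeed bijective onto $\Prob(\cup_{|\B|=m-1}\Sy{\B})$), we get $\{h(X)=p\}=\{\proj{h}{\T}(X)=q\}$ as events. Therefore, for each $\pi\in\Sy{\I}$ with $\pi=\rho\succ i^\ast$, top-$(m-1)$ calibration (\autoref{def:top_k_calib}) gives
$$P(\Pi=\pi\cond h(X)=p)=P(\proj{\Pi}{\B}=\rho\cond \proj{h}{\T}(X)=q)=q[\rho]=p[\pi],$$
which is exactly \autoref{def:full_k_calib}.

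The only delicate point is interpretive: in \autoref{def:top_k_calib}, $\proj{\Pi}{\B}=\rho$ must be read as the top-$k$ event rather than the sub-$k$ projection. Otherwise the statement fails, since for $m=3$, $k=2$, the event $i_1\succ i_2$ also contains $i_3\succ i_1\succ i_2$. I would state this reading explicitly, as it is the one implicitly used in \autoref{th:full_rank_is_sub_k} (ii), where $c=(m-k)!$. The remaining care is measure-theoretic, but because the conditioning events are literally equal, no integral decomposition is needed.
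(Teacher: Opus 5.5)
Your proof is correct and follows the same idea as the paper, which notes that $c=(m-k)!=1$ for $k=m-1$, so every top-$(m-1)$ marginal entry is the probability of a single full ranking and the two calibration conditions coincide; your explicit bijection, and your observation that $\{h(X)=p\}$ and $\{\proj{h}{\T}(X)=q\}$ are the same event, supply the rigor behind the paper's ``obviously''. One small correction to your side remark: under the literal sub-$k$ reading of $\proj{\Pi}{\B}=\rho$ the corollary does not fail but is vacuously true, because summing the condition over $\rho\in\Sy{\B}$ would force $\proj{h}{\T}(X)$ to put all its mass on $\Sy{\B}$ for each of the $m$ disjoint choices of $\B$ at once, which is impossible---so the real reason to reject that reading is that no model could then be top-$k$ calibrated, which would make \autoref{th:full_rank_is_sub_k} (ii) false.
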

\begin{proof}[Proof of Corollary \ref{lem:top_m-1_full_rank}]
    Recall that the marginalization of top-k marginalization (\autoref{def:sub_k_top_k_marginal}) uses exactly $c = (m-k)!$ many rankings $\pi \in \Sy{\I}$ where $k \in \{1, \dots, m\}$.
    In particular for $k=(m-1)$ we have exactly one ranking $\pi$ in the marginalisation, as $c=(m-(m-1))! = 1$.
    Obviously, it must hold that $h \in \mathcal{H}$ being top-$(m-1)$ calibrated also induces $h$ full-rank calibrated.
\end{proof}

\begin{theorem}\label{th:rankwise_not_sub_k_calibrated}
 Under the same assumptions as \autoref{th:rankwise_not_full_rank_calib} it holds:
    $$
     \forall k \in \{2, \dots, m-1\} \ \exists h \in \X \rightarrow \Prob(\Sy{\I}): h \ \text{rankwise calibrated } \land h \ \text{ \emph{not}  sub-k calibrated}\,\text{.}
    $$
\end{theorem}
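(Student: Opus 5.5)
We follow the perturbation scheme from the proof of \autoref{th:rankwise_not_full_rank_calib}. We start from a full-rank calibrated $h$ on a finite feature set $\X$; by \autoref{th:full_rank_rankwise_calib} and \autoref{th:full_rank_is_sub_k} (i), $h$ is both rankwise and sub-$k$ calibrated. We then build perturbed points $z_1,\dots,z_n$ that leave $h$ unchanged but alter the conditional ranking distributions. The goal is that the perturbation cancels inside every rankwise conditioning group $\{h(X)[\pi]=\alpha\}$ but does not cancel inside some sub-$k$ group $\{\proj{h}{\P}(X)=q\}$.

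\textbf{Choice of rankings and points.} Fix $k\in\{2,\dots,m-1\}$, a set $\B$ with $|\B|=k$, and a sub-ranking $\rho\in\Sy{\B}$. Since $k<m$, the set $\subk{\rho}$ is a proper subset of $\Sy{\I}$ with $m!/k!\ge 2$ elements, so we can choose $\pi\in\subk{\rho}$ and $\tau\notin\subk{\rho}$. The hypothesis of \autoref{th:rankwise_not_full_rank_calib} supplies $x_i,x_j$ with $h(x_i)[\pi]=h(x_j)[\pi]=\alpha$ and $h(x_i)[\tau]=h(x_j)[\tau]=\alpha'$, and with equal conditional weight inside the group $\{h(X)[\pi]=\alpha\}$. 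We must also use the freedom in choosing $h$ so that $\proj{h}{\P}(x_i)\neq\proj{h}{\P}(x_j)$. One way is to take $h(x_j)$ equal to $h(x_i)$ except for swapping mass between two rankings $r\in\subk{\rho}$ and $r'\notin\subk{\rho}$ with $r,r'\notin\{\pi,\tau\}$. This needs $m!\ge 4$, which holds since $m\ge 3$.

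\textbf{Perturbation and rankwise calibration.} As in \autoref{th:rankwise_not_full_rank_calib}, set $P(\Pi=\pi\cond z_i)=P(\Pi=\pi\cond x_i)+\delta$ and $P(\Pi=\tau\cond z_i)=P(\Pi=\tau\cond x_i)-\delta$, apply the opposite shifts at $z_j$, and leave all other points unchanged. Here $\delta>0$ is small enough to keep every entry in $[0,1]$. The check for rankwise calibration repeats \eqref{proof_rankwise_not_full_rank_calib_step_1}--\eqref{proof_rankwise_not_full_rank_calib_step_6}.
\begin{itemize}
\item[] For the group of $\pi$: $z_i$ and $z_j$ both lie in it with equal weights, so $+\delta$ and $-\delta$ cancel.
\item[] For the group of $\tau$: both points again lie in it (both have value $\alpha'$), and the shifts cancel provided the weights are equal there too. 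We either add this to the assumption or choose $P_X$ uniform on $\{x_i,x_j\}$ within these groups.
\item[] For every other ranking: the conditional probabilities are unchanged.
\end{itemize}

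\textbf{Failure of sub-$k$ calibration.} Let $q=\proj{h}{\P}(x_i)$. Because $\proj{h}{\P}(x_j)\neq q$, the point $z_j$ is not in the group $\{\proj{h}{\P}(X)=q\}$. Since $\pi\in\subk{\rho}$ and $\tau\notin\subk{\rho}$, the shift at $z_i$ changes $P(\proj{\Pi}{\B}=\rho\cond z_i)$ by exactly $+\delta$. Following \eqref{proof_rankwise_not_full_rank_calib_step_7}--\eqref{proof_rankwise_not_full_rank_calib_step_11}, we obtain
\begin{equation*}
P(\proj{\Pi}{\B}=\rho\cond \proj{h}{\P}(X)=q)=q[\rho]+\delta\, P(X=z_i\cond \proj{h}{\P}(X)=q)\neq q[\rho],
\end{equation*}
because $z_i$ has positive mass. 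Hence $h$ is not sub-$k$ calibrated.

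\textbf{Main obstacle.} The hardest step is choosing $h$ and $P_X$ consistently so that three things hold at once. First, $x_i$ and $x_j$ agree on $\pi$ and $\tau$ with equal weights in both rankwise groups. Second, their sub-$k$ marginals differ. Third, $h$ is still full-rank calibrated before the perturbation. The two-ranking swap construction addresses the first two. Full-rank calibration on the unperturbed points is obtained simply by defining the true conditional distribution at each $x$ to be $h(x)$.
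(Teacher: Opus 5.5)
Your proposal is correct and follows essentially the same route as the paper. Both arguments start from a full-rank calibrated $h$ and shift $\pm\delta$ between $\pi$ and $\tau$ at two equally weighted points that agree on $h(\cdot)[\pi]$ and $h(\cdot)[\tau]$ but have different sub-$k$ marginals, so the shifts cancel in every rankwise group but not in the sub-$k$ group of $x_i$.

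Your version is more careful than the paper's sketch, which leaves implicit two requirements:
\begin{itemize}
\item $\tau$ must lie outside $\subk{\rho}$; otherwise the shift at $z_i$ does not move $P(\proj{\Pi}{\B}=\rho \cond X=z_i)$.
\item $x_i$ and $x_j$ must also agree on $\tau$.
\end{itemize}

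Two small notes on your write-up. First, choose $h(x_i)$ to give positive mass to both $\pi$ and $\tau$; otherwise no positive $\delta$ exists. Second, equal weights in the $\tau$-group need no extra assumption: for discrete $\X$, equal conditional weights in one shared group already force $P(X=x_i)=P(X=x_j)$.
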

\begin{proof}[Proof of Theorem \ref{th:rankwise_not_sub_k_calibrated}]
We proceed similarly to the proof of \autoref{th:rankwise_not_full_rank_calib}:
Let  $\X$ consist without loss of generality of $n <  \infty$  data points for which $h \in \Hy$ be rankwise and sub-k calibrated for some $k \in \{2,\dots,m\}$ arbitrary but fixed.
Let $\B \subseteq \I$ be a subset of items with $|\B|=k$, $\pi,\tau \in \Sy{\I}, q \in \Prob(\Sy{\B})$ and $\rho \in \Sy{\B}$ arbitrary but fixed such that $\pi \in \subk{\rho}$.
Additionally, we choose $q$ such that  $q[\rho] \neq \alpha$.
We choose the data points in $\X$ such that there exist two $x_i,x_j \in \X$ for which $x_i,x_j \in \mathcal{G} = \{ x \cond h(x)[\pi] = \alpha \}$, $x_i \in \mathcal{F} = \{x \cond \proj{h}{\P}(x)= q\}$, $x_j \notin \mathcal{F}$
and $P(X=x_i \cond h(x_i)[\pi]=\alpha) = P(X=x_j \cond h(x_j)[\pi]=\alpha)$.
Intuitively, $\mathcal{G}$ are the data points for which $h$ assigns the probability $\alpha$ to the ranking $\pi$ and $\mathcal{F}$ are the points for which the marginals $\proj{h}{\P}$ are identical to the probability vector $q$.
Importantly, we have chosen $x_i, x_j$ in such a way, that both are assigned the same probability for $\pi$, i.e., $h(x_i)[\pi]=h(x_j)[\pi]=\alpha$ but their marginals are not identical, i.e., $\proj{h}{\P}(x_i) \neq \proj{h}{\P}(x_j)$.

From here we can proceed similarly to the proof of \autoref{th:rankwise_not_full_rank_calib} and construct the same $z_1, \dots, z_n$ based on $x_1, \dots, x_n$ such that 
$h$ is rankwise calibrated on these yet not sub-k calibrated.
The same reason applies here: Since $x_i,x_j$ are elements of $\mathcal{G}$ and we have that $P(X=x_i \cond h(x_i)[\pi]=\alpha) = P(X=x_j \cond h(x_j)[\pi]=\alpha)$ we can repeat the same steps \eqref{proof_rankwise_not_full_rank_calib_step_1} -- \eqref{proof_rankwise_not_full_rank_calib_step_6} to show that $h$ is rankwise calibrated on  $z_1, \dots, z_n$.
Since $x_i$ is an element of $\mathcal{F},$ but $x_j$ is not, $h$ is not sub-k calibrated  on  $z_1, \dots, z_n$.

\end{proof}

\begin{theorem}\label{th:rankwise_not_top_k_calibrated}
 Assume that the number of items $m \geq 3$ and let $\X = \{X_1, \dots, X_n\}$ be the set of (random) data points for which $n <  \infty$.
    Then it holds:
    $$
     \forall k \in \{1, \dots, m-2\} \ \exists h \in \X \rightarrow \Prob(\Sy{\I}): h \ \text{rankwise calibrated } \land h \ \text{\emph{not} top-k calibrated}\,\text{.}
    $$
\end{theorem}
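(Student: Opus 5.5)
We follow the strategy of \autoref{th:rankwise_not_full_rank_calib} and \autoref{th:rankwise_not_sub_k_calibrated}. Start from a model $h$ that is full-rank calibrated on a finite feature set, hence rankwise and top-$k$ calibrated by \autoref{th:full_rank_rankwise_calib} and \autoref{th:full_rank_is_sub_k} (ii). Then perturb the conditional outcome distributions at finitely many points. The perturbation must leave every rankwise conditional $P(\Pi=\pi \cond h(X)[\pi]=\alpha)$ unchanged while shifting some top-$k$ conditional.

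\textbf{Choice of points.} Fix $k\in\{1,\dots,m-2\}$, a set $\B$ with $|\B|=k$, and $\rho\in\Sy{\B}$. Because $k\le m-2$, the set $\topk{\rho}$ contains $(m-k)!\ge 2$ rankings. Pick two distinct $\pi,\tau\in\topk{\rho}$.

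Choose two points $x_i,x_j$ with the following properties:
\begin{itemize}
\item[] $h(x_i)[\pi]=h(x_j)[\pi]=\alpha$ and $h(x_i)[\tau]=h(x_j)[\tau]=\alpha'$;
\item[] $P(X=x_i\cond h(X)[\pi]=\alpha)=P(X=x_j\cond h(X)[\pi]=\alpha)$, and likewise for $\tau$;
\item[] $\proj{h}{\T}(x_i)=q$ but $\proj{h}{\T}(x_j)\neq q$.
\end{itemize}
This is possible by adjusting the mass $h$ puts on rankings outside $\{\pi,\tau\}$ and outside $\topk{\rho}$. Alternatively, we redistribute within the remaining elements of $\topk{\rho}$, using $(m-k)!\ge 2$ when needed. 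We then take $h$ full-rank calibrated by setting $P(\Pi\cond X=x)=h(x)$.

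\textbf{Perturbation.} Define $z_i,z_j$ exactly as in the table of the proof of \autoref{th:rankwise_not_full_rank_calib}: add $\delta$ to $\pi$ and subtract $\delta$ from $\tau$ at $z_i$, and do the reverse at $z_j$. Keep $\delta$ in the admissible range so that both remain valid distributions. All other points are unchanged.

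\emph{Rankwise calibration survives.} Repeating steps \eqref{proof_rankwise_not_full_rank_calib_step_1}--\eqref{proof_rankwise_not_full_rank_calib_step_6}, the group $h(X)[\pi]=\alpha$ gains $+\delta$ and $-\delta$ with equal conditional weights. The same cancellation holds for $\tau$. Every other ranking is untouched.

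\emph{Top-$k$ calibration breaks.} For the top-$k$ event $\{\proj{\Pi}{\B}=\rho\}$ restricted to top positions, both $\pi$ and $\tau$ lie in $\topk{\rho}$. So at each single point the perturbation preserves $P(\Pi\in\topk{\rho}\cond X=z)$, and this choice does not break top-$k$ calibration. This is the main obstacle.

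\textbf{Corrected choice.} To break calibration we need the perturbation to move mass into or out of $\topk{\rho}$. So take $\pi\in\topk{\rho}$ and $\tau\notin\topk{\rho}$; since $k\le m-2<m$, such a $\tau$ exists. Then at $z_i$ the top-$k$ marginal of the true distribution at $\rho$ rises by $\delta$. At $z_j$ it falls by $\delta$, but $z_j$ lies in a different conditioning class $\proj{h}{\T}\neq q$, so the changes do not cancel. Following \eqref{proof_rankwise_not_full_rank_calib_step_7}--\eqref{proof_rankwise_not_full_rank_calib_step_11}, we get
\[
P(\Pi\in\topk{\rho}\cond \proj{h}{\T}(X)=q)=q[\rho]+\delta\,P(X=z_i\cond \proj{h}{\T}(X)=q)\neq q[\rho].
\]

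The delicate part is arranging $x_i,x_j$ to share the values of $h$ at $\pi$ and $\tau$, with equal conditional weights, yet have different top-$k$ marginals. Here $k\le m-2$ is used: $\topk{\rho}$ then contains a ranking other than $\pi$. Moving mass of $h$ between that ranking and some ranking outside $\topk{\rho}$ and different from $\tau$ changes $\proj{h}{\T}(x_j)[\rho]$ while leaving the values at $\pi$ and $\tau$ intact. For $k=m-1$ this freedom disappears, consistent with Corollary~\ref{lem:top_m-1_full_rank}.
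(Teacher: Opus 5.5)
Your proof is correct and follows the paper's argument, which reuses the $\pm\delta$ perturbation of Theorem~\ref{th:rankwise_not_full_rank_calib} via Theorem~\ref{th:rankwise_not_sub_k_calibrated} with $\P$ replaced by $\T$; you also make explicit two conditions that the paper's one-line proof leaves implicit but needs, namely $\tau\notin\topk{\rho}$ (otherwise every top-$k$ probability is unchanged pointwise, as you observed) and that $x_i,x_j$ share both $h(\cdot)[\pi]$ and $h(\cdot)[\tau]$ with equal weight. Two side remarks are wrong, though harmless for the stated range: redistributing mass within $\topk{\rho}$ never changes $\proj{h}{\T}$, and $k\le m-2$ is not essential, since altering any other coordinate of $\proj{h}{\T}(x_j)$ already removes $x_j$ from the class $q$ --- for $k=m-1$, top-$k$ calibration is equivalent to full-rank calibration, so the conclusion still holds there and Corollary~\ref{lem:top_m-1_full_rank} signals no obstruction.
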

\begin{proof}[Proof of Theorem \ref{th:rankwise_not_top_k_calibrated}]
    The proof is analogous to \autoref{th:rankwise_not_sub_k_calibrated}, by replacing $\P$ for $\T$.
\end{proof}

\begin{theorem}\label{th:rankwise_sub_k_not_rankwise_top_k_calib}
    Assume that the number of items $m \geq 3$. 
    Then it holds:
    $$
     \forall k < m: \ \exists h \in \X \rightarrow \Prob(\Sy{\I}): h \ \text{rankwise sub-k calibrated } \land h \ \text{ \emph{not} rankwise top-k calibrated}\,\text{.}
    $$
\end{theorem}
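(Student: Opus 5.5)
Rather than perturb a given calibrated model, I would construct the counterexample explicitly. This avoids the delicate bookkeeping in the proof of \autoref{th:sub_k_calib_not_top_k}.

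Fix $k<m$ and take a finite feature space $\X=\{x_1,x_2\}$ with $P(X=x_1)=P(X=x_2)=\tfrac12$. I would choose the true conditionals to be uniform, $P(\Pi=\pi\cond X=x_l)=1/m!$ for both points. The true sub-$k$ marginal on any $\B$ with $|\B|=k$ is then $1/k!$ for every $\rho\in\Sy{\B}$. The true top-$k$ marginal is $(m-k)!/m!$ for every ordered $k$-tuple.

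The model $h$ should have sub-$k$ marginals that are exactly uniform on every $\B$, while its top-$k$ marginals are not $(m-k)!/m!$. I would use a Mallows-type symmetrization: let $h(x_1)=h(x_2)=\tfrac12(\delta_{\tau_1}+\delta_{\tau_2})$, with $\tau_1=i_1\succ\dots\succ i_m$ and $\tau_2$ its reverse. For $k=2$ every pair then has sub-2 marginal $\tfrac12$, which is correct. For general $k$ this fails, because only two of the $k!$ orders of $\B$ occur. I therefore need a distribution $p$ on $\Sy{\I}$ whose projection to every $k$-subset is uniform but which is not uniform on top-$k$ prefixes.

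My plan for constructing $p$ is to take a \emph{$k$-wise uniform} family of permutations $W\subseteq\Sy{\I}$ that is not $(m-k)$-prefix uniform. The simplest attempt is to let $p$ be uniform on all permutations that fix a chosen structure at the bottom. To keep things checkable, I would instead mirror the paper's perturbation idea directly on the uniform distribution: start from $p_0=$ uniform, which is sub-$k$ and top-$k$ correct, and add a signed perturbation $\varepsilon\,\Delta$. Here $\Delta$ must lie in the kernel of all sub-$k$ marginalization maps but not in the kernel of some top-$k$ marginalization map. Such a $\Delta$ exists by a dimension or explicit argument. Take two rankings $\pi,\pi'$ that agree on the relative order of every $k$-subset-wise statistic set... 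Concretely, I would use $\Delta=\sum_{\sigma}\mathrm{sgn}$-type combinations supported on rankings sharing the same multiset of $k$-subset projections but with different top-$k$ prefixes.

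The step I expect to be the main obstacle is showing that such a $\Delta$ exists for every $k<m$. Sub-$k$ marginals for all $\B$ carry strictly less information than the full distribution when $k<m$ (for $k\le m-1$ the number of constraints $\binom{m}{k}k!$ is smaller than $m!$ only when $k\le m-2$). So I would first handle $k\le m-2$ by a dimension count. This gives a nonzero $\Delta$ in the joint kernel of the sub-$k$ maps. I would then show it can be chosen outside the kernel of the top-$k$ map, arguing that otherwise top-$k$ together with sub-$k$ marginals would determine the full distribution, which contradicts a dimension bound. For $k=m-1$, sub-$(m-1)$ marginals do not determine the full distribution, since cyclic-shift differences cancel. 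Top-$(m-1)$ marginals, however, are the full distribution (\autoref{lem:top_m-1_full_rank}). Any nonzero $\Delta$ in the sub-$(m-1)$ kernel therefore works, for example the alternating sum over the $m$ cyclic rotations and their reverses.

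Given $\Delta$, I would set $h(x_1)=h(x_2)=p_0+\varepsilon\Delta$ with $\varepsilon$ small, so that $h$ is a valid non-degenerate distribution. Since $h$ is constant in $x$, the conditioning events in \autoref{def:rankwise_sub_k_calib} and \autoref{def:rankwise_top_k_calib} are either the whole space or null. Rankwise sub-$k$ calibration then reduces to $\proj{h}{\P}[\rho]=1/k!$, which holds. Rankwise top-$k$ calibration fails at any $\rho$ with $\proj{(\Delta)}{\T}[\rho]\neq0$, since there the predicted value differs from the true frequency $(m-k)!/m!$.
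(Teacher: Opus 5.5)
Your construction is sound in outline, and the final verification is right. With $h$ constant and the truth uniform, rankwise sub-$k$ calibration reduces to $\proj{h}{\P}[\rho]=1/k!$, and rankwise top-$k$ calibration to $\proj{h}{\T}[\rho]=(m-k)!/m!$. Everything therefore hinges on a $\Delta$ with $\sum_{\pi\in\P(\rho)}\Delta(\pi)=0$ for all $\B,\rho$ but $\sum_{\pi\in\T(\rho)}\Delta(\pi)\neq 0$ for some $\rho$.

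\textbf{The gap.} The argument you give for this key step does not work. Write $K_{\mathrm{sub}}$ and $K_{\mathrm{top}}$ for the joint kernels of the sub-$k$ and top-$k$ maps. If $K_{\mathrm{sub}}\subseteq K_{\mathrm{top}}$, then $K_{\mathrm{sub}}\cap K_{\mathrm{top}}=K_{\mathrm{sub}}\neq\{0\}$. So the sub-$k$ and top-$k$ marginals together would still \emph{not} determine the distribution, and no contradiction arises.

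\textbf{The fix.} $K_{\mathrm{sub}}\subseteq K_{\mathrm{top}}$ holds iff every top-$k$ indicator $\mathbbm{1}_{\T(\rho)}$ lies in the span of the sub-$k$ indicators $\mathbbm{1}_{\P(\rho')}$. A count rules this out:
\begin{itemize}
\item The sets $\T(\rho)$ partition $\Sy{\I}$ into $m!/(m-k)!$ nonempty blocks, so their indicators are linearly independent.
\item For each $\B$, the $k!$ indicators $\mathbbm{1}_{\P_\B(\rho)}$ sum to the all-ones vector. Hence the sub-$k$ indicators span a space of dimension at most $1+\binom{m}{k}(k!-1)=m!/(m-k)!-\binom{m}{k}+1$.
\item This is strictly smaller than $m!/(m-k)!$ whenever $1\le k\le m-1$.
\end{itemize}
This single count handles every $k<m$ at once, so your preliminary count and the separate case $k=m-1$ are unnecessary. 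It also gives $\sum_\pi\Delta(\pi)=0$ for free, since the all-ones vector lies in the sub-$k$ span.

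Your explicit $k=m-1$ example should be dropped or checked. The natural readings of an ``alternating sum over the $m$ cyclic rotations and their reverses'' already fail to have vanishing pairwise marginals for $m=3$. A valid choice there is $\delta_{i_1\succ i_2\succ i_3}+\delta_{i_3\succ i_2\succ i_1}-\delta_{i_1\succ i_3\succ i_2}-\delta_{i_2\succ i_3\succ i_1}$.

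\textbf{Comparison with the paper.} With that repair, your proof is correct and takes a genuinely different route. The paper keeps a given rankwise-calibrated $h$ fixed and perturbs the data-generating conditionals $P(\Pi\cond X=z_i)$. It moves mass $\delta$ onto $\tau_1,\tau_2$ and away from the rest of $\P(\rho)$ and its complement, rerunning the proof of \autoref{th:sub_k_calib_not_top_k} with $q[\rho]$ replaced by $\alpha$. You instead fix the truth and perturb the model along a direction annihilated by all sub-$k$ functionals.

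Your version certifies rankwise sub-$k$ calibration for every $\B$ and every $\rho\in\Sy{\B}$ simultaneously. The paper's perturbation is checked only against the single pair $(\B,\rho)$ it is built from. The paper's style, on the other hand, leaves the model untouched, which is what lets it apply to a prescribed model class such as Plackett--Luce or Mallows. Your $p_0+\varepsilon\Delta$ is generally of neither form, but that is irrelevant for the existential statement as posed.
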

\begin{proof}[Proof of Theorem \ref{th:rankwise_sub_k_not_rankwise_top_k_calib}]
    The proof is quite similar to the proof of \autoref{th:sub_k_calib_not_top_k}, where instead of 
    \begin{enumerate}
        \item     choosing some $q \in \Prob(\Sy{\B})$ arbitrary but fixed, take some $\alpha\in[0,1]$ arbitrary but fixed,
        \item choosing $h$ to be full-rank calibrated, use an $h$ that is rankwise calibrated,
        \item         conditioning on $\proj{h}{\P}(X)=q$, condition on $\proj{h}{\P}(X)[\rho]=\alpha,$
        \item      conditioning on $\proj{h}{\T}(X)=q$, condition on $\proj{h}{\T}(X)[\rho]=\alpha,$
        \item replace each appearance of $q[\rho]$ by $\alpha$.
    \end{enumerate}
\end{proof}

\begin{theorem}\label{th:rankwise_top_k_not_rankwise_sub_k_calib}
    Assume that the number of items $m \geq 3$.
    Then it holds:
    $$
     \forall k < m: \ \exists h \in \X \rightarrow \Prob(\Sy{\I}): h \ \text{rankwise top-k calibrated } \land h \ \text{ \emph{not} rankwise sub-k calibrated}\,\text{.}
    $$
\end{theorem}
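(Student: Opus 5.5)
We mirror the proof of \autoref{th:rankwise_sub_k_not_rankwise_top_k_calib}, which itself follows \autoref{th:sub_k_calib_not_top_k}, but exchange the roles of $\P$ and $\T$. We start from a model $h$ that is full-rank calibrated on $\X$, and hence rankwise top-k and rankwise sub-k calibrated by \autoref{th:full_rank_is_sub_k} and \autoref{th:sub_k_rankwise_sub_k}. We assume $h$ is non-degenerate on a finite subset $\X_0=\{x_1,\dots,x_n\}$. We then fix $\B$ with $|\B|=k$, a sub-ranking $\rho\in\Sy{\B}$ and a level $\alpha\in[0,1]$ attained by $\proj{h}{\P}(x_i)[\rho]$ for the relevant $x_i$. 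We replace $\X_0$ by new points $z_1,\dots,z_n$ that keep the predictions $h(z_i)=h(x_i)$ and the conditional weights $P(X=z_i\mid\cdot)$, and we only perturb the true conditional distribution $P(\Pi\mid X=z_i)$.

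The perturbation must keep the top-k marginal of the truth unchanged on $\T(\rho)$ while changing the sub-k marginal on $\P(\rho)$.
\begin{itemize}
\item Pick two rankings $\tau_1\in\T(\rho)$ and $\tau_2\in\P(\rho)\setminus\T(\rho)$. Such a $\tau_2$ exists for $k<m$, since $|\P(\rho)|=m!/k!>(m-k)!=|\T(\rho)|$.
\item If $|\T(\rho)|\ge 2$, add $\delta$ to $\tau_1$ and subtract $\delta/(|\T(\rho)|-1)$ from the other elements of $\T(\rho)$, so the top-k mass of $\rho$ is preserved.
\item Add $\delta$ to $\tau_2$ and subtract the compensating mass from rankings outside $\P(\rho)$, spreading it evenly.
\end{itemize}
Because $\P(\rho)\neq\Sy{\I}$ for $k\ge 2$, there is room outside $\P(\rho)$ to absorb this mass. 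The perturbed distribution then sums to one, and $\delta>0$ is taken small enough, using non-degeneracy, to keep all entries in $[0,1]$.

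Each other top-k set $\T(\rho')$ either contains $\tau_2$ or intersects the complement of $\P(\rho)$. The bookkeeping must therefore also leave every other top-k marginal of the truth unchanged. We handle this with a simpler choice of the compensating ranking: take $\sigma\notin\P(\rho)$ sharing the same top-$k$ prefix as $\tau_2$. This is possible when $\tau_2$'s top-$k$ block is not $\B$ and $m-k\ge 2$, or by permuting the lower items so the relative order on $\B$ flips. Now add $\delta$ to $\tau_2$ and subtract $\delta$ from $\sigma$. This pairwise swap preserves every top-k marginal exactly, so no separate $\tau_1$ adjustment is needed. It also increases the sub-k marginal $P(\proj{\Pi}{\B}=\rho\mid X=z_i)$ by $\delta$.

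With this construction, the rankwise top-k calibration identity is unchanged term by term in the law-of-total-probability expansion used in \eqref{proof:th:sub_k_calib_not_top_k_step_1}--\eqref{proof:th:sub_k_calib_not_top_k_step_7}, with $q[\rho]$ replaced by $\alpha$. So $h$ remains rankwise top-k calibrated on $\X\setminus\X_0\cup\Z$. Conditioning instead on $\proj{h}{\P}(X)[\rho]=\alpha$ gives $\alpha+\delta\sum_i P(X=z_i\mid \proj{h}{\P}(X)[\rho]=\alpha)\neq\alpha$, so $h$ is not rankwise sub-k calibrated.

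The main obstacle is the existence of $\sigma$. We need two rankings with identical top-$k$ prefix that disagree on the relative order of $\B$, which requires that $\B$ not be entirely contained in the top-$k$ block. If the top block is some $\B'\neq\B$, at least one item of $\B$ lies below position $k$. We then need a second item of $\B$ that can be reordered relative to it, either two items of $\B$ below position $k$ or a rearrangement within the tail. Where the prefix argument fails, for example $k=m-1$, the statement requires separate treatment, and we would restrict $k$ or argue directly there.
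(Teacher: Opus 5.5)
Your prefix-preserving swap is correct wherever it exists, and it is more careful than the paper's route. The paper's proof just exchanges $\P$ and $\T$ in the redistribution of \autoref{th:sub_k_calib_not_top_k}, which only controls the single marginal for $\rho$ and never checks other $\B'$, $\rho'$. Moving $\delta$ from $\sigma$ to $\tau_2$ inside one top-$k$ prefix class leaves every pointwise top-$k$ marginal of the truth unchanged, so rankwise (even full) top-$k$ calibration survives, while $P(\proj{\Pi}{\B}=\rho\mid X=z_i)$ rises by $\delta$. Two points need tightening. First, state the existence condition precisely. Two rankings with the same top-$k$ prefix agree on the prefix order and place every prefix item above every tail item. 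They can therefore disagree on $\B$ only if at least two items of $\B$ lie below position $k$. This requires $\min(k,m-k)\ge 2$, i.e.\ $2\le k\le m-2$; ``a rearrangement within the tail'' adds nothing beyond this. Second, non-degeneracy must hold for the true conditional $P(\Pi\mid X=x_i)$, not for $h(x_i)$. Simply take $P(\Pi\mid X=x)=h(x)$ with full support.

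The genuine gap is $k=m-1$, which you leave open. It is not a corner case: for $m=3$ it is the only admissible $k$, so your argument proves nothing there. No refinement of your idea can work either. A top-$(m-1)$ prefix determines the full ranking, so any perturbation fixing all pointwise top-$(m-1)$ marginals fixes $P(\Pi\mid X=z_i)$ entirely. The paper's sketch does not help here, since it divides by $|\T(\rho)|-1=0$. The missing idea is pooling across points, as in \autoref{th:rankwise_not_full_rank_calib}. Rankwise top-$(m-1)$ calibration is exactly rankwise calibration, so construct the example directly. Let $\X=\{x_1,x_2\}$ carry equal mass, let $u$ be uniform on $\Sy{\I}$ and $e_\pi$ the point mass at $\pi$. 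Pick $\pi_a\neq\pi_c$ in $\P(\rho)$ and $\pi_b\neq\pi_d$ outside it, which is possible since $|\P(\rho)|=m$ and $m!-m\ge 2$. Set
\begin{align*}
h(x_1)&=u, & P(\Pi\mid X=x_1)&=u+\delta(e_{\pi_a}-e_{\pi_b}),\\
h(x_2)&=u+\eta(e_{\pi_c}-e_{\pi_d}), & P(\Pi\mid X=x_2)&=h(x_2)-\delta(e_{\pi_a}-e_{\pi_b}),
\end{align*}
with small $\delta,\eta>0$. Each ranking either receives the same prediction at both points, where the $\pm\delta$ cancel in the pooled frequency, or different predictions with truth equal to prediction pointwise. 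Hence $h$ is rankwise calibrated, and thus rankwise top-$(m-1)$ calibrated. However, $\proj{h}{\P}(x_1)[\rho]=m/m!\neq m/m!+\eta=\proj{h}{\P}(x_2)[\rho]$, and the level set $\{x_1\}$ has true frequency $m/m!+\delta$. \Cref{tab:counterexample_rankwise_not_top_k_sub_k} is an instance of this for $m=3$.

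Finally, make explicit that $k\ge 2$ is required. For $|\B|=1$ one has $\P(\rho)=\Sy{\I}$, so every model is trivially rankwise sub-$1$ calibrated and the statement cannot hold for $k=1$.
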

\begin{proof}[Proof of Theorem \ref{th:rankwise_top_k_not_rankwise_sub_k_calib}]
    This proof is analogous to \autoref{th:rankwise_sub_k_not_rankwise_top_k_calib}, replacing $\P$ for $\T$.
\end{proof}

\begin{corollary}\label{th:rankwise_sub_k_not_rankwise_calib}
 Assume that the number of items $m \geq 3$. 
    Then it holds:
    $$
     \forall k \in \{2, \dots, m-1\} \ \exists h \in \X \rightarrow \Prob(\Sy{\I}): h \ \text{rankwise sub-k calibrated} \land h \   \text{ not rankwise calibrated }\text{.}
    $$
\end{corollary}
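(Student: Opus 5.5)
The plan is to prove this as a corollary, chaining the construction from the proof of \autoref{th:sub_k_calib_not_top_k} with \autoref{th:sub_k_rankwise_sub_k}~(i). The model built there is sub-$k$ calibrated, so it is rankwise sub-$k$ calibrated. It then remains to check directly that the same perturbation destroys rankwise calibration.

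\textbf{Step 1 (construction).} Fix $k\in\{2,\dots,m-1\}$, a set $\B$ with $|\B|=k$, and $\rho\in\Sy{\B}$. Start from a full-rank calibrated $h$ that has a finite set $\X_0=\{x_1,\dots,x_n\}$ of points with non-degenerate predictions. Exactly as in the proof of \autoref{th:sub_k_calib_not_top_k}, replace $\X_0$ by points $z_1,\dots,z_n$ with the same predictions $h(z_i)=h(x_i)$ and the same conditional weights. Their true conditional ranking distributions are perturbed as in \eqref{eq:second_condition_on_Z_subset}: add $\delta>0$ on $\tau_1,\tau_2$, remove $\delta$ uniformly from the rest of $\P(\rho)$, and remove $\delta$ uniformly from the complement. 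Non-degeneracy of $h$ on $\X_0$ guarantees that an admissible $\delta>0$ exists.

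\textbf{Step 2 (rankwise sub-$k$ calibration).} The computation \eqref{proof:th:sub_k_calib_not_top_k_step_1}--\eqref{proof:th:sub_k_calib_not_top_k_step_7} shows the perturbed model is sub-$k$ calibrated for $\rho$. The same cancellation holds for every other $\rho'\in\Sy{\B}$, since the mass added inside $\P(\rho)$ is removed inside $\P(\rho)$ and likewise outside it. Hence the model is sub-$k$ calibrated, and \autoref{th:sub_k_rankwise_sub_k}~(i) gives rankwise sub-$k$ calibration.

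\textbf{Step 3 (failure of rankwise calibration).} This is the key step. Take the ranking $\tau_1$ and $\alpha:=h(z_1)[\tau_1]$. By the law of total probability, as in \eqref{proof_rankwise_not_full_rank_calib_step_1}--\eqref{proof_rankwise_not_full_rank_calib_step_5},
\[
P(\Pi=\tau_1\cond h(X)[\tau_1]=\alpha)=\alpha+\delta\sum_{i:\,h(z_i)[\tau_1]=\alpha}P(X=z_i\cond h(X)[\tau_1]=\alpha).
\]
Here the unperturbed part equals $\alpha$ because full-rank calibration implies rankwise calibration (\autoref{th:full_rank_rankwise_calib}). Every perturbed point receives $+\delta$ on $\tau_1$, so no cancellation can occur. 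The sum contains $z_1$, which has positive weight by \eqref{eq:first_condition_on_Z_subset} (transferred to this conditioning event). Thus the conditional frequency exceeds $\alpha$, and rankwise calibration fails.

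\textbf{Main obstacle.} The delicate points are bookkeeping ones. First, Step 2 must hold for all $\B$ of size $k$, not only the chosen one. If that fails, I would restrict the perturbation to points whose conditioning events for other subsets $\B'$ are unaffected, or choose $\tau_1,\tau_2$ reverse to each other so that pairwise cancellation is symmetric. Second, the positive-weight condition must be ensured under the rankwise conditioning event. As a fallback for small cases, the explicit example in \cref{tab:counterexample_general_prob} (for $m=3$, $k=2$) can be embedded into larger item sets.
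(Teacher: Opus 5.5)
\textbf{Step 2 fails for $k\ge 3$.} Your Step 3 is sound. For $k=2$ the whole argument also works: there $|\P(\rho)|-1=m!-|\P(\rho)|-1=m!/2-1$, so the perturbation \eqref{eq:second_condition_on_Z_subset} is $+\delta$ on $\tau_1,\tau_2$ and a uniform $-\delta/(m!/2-1)$ on every other ranking. Since $\tau_2$ reverses $\tau_1$, every pairwise class $\mathcal{P}_{\B'}(\rho')$ contains exactly one of them, so all sub-2 marginals are preserved for every pair.

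For $k\ge 3$, which is a nonempty case once $m\ge 4$, your claim that ``the same cancellation holds for every other $\rho'\in\Sy{\B}$'' is false. The computation \eqref{proof:th:sub_k_calib_not_top_k_step_1}--\eqref{proof:th:sub_k_calib_not_top_k_step_7} only checks the class of $\rho$ itself. Consider the one of $\tau_1,\tau_2$ lying outside $\P(\rho)$. The $-\delta$ that compensates for it is spread uniformly over all $m!-m!/k!-1$ remaining rankings outside $\P(\rho)$. These rankings make up the $k!-1$ classes $\mathcal{P}_\B(\rho')$ with $\rho'\neq\rho$. That extra ranking, however, lies in only one of them: $\mathcal{P}_\B(\bar\rho)$, with $\bar\rho$ the reversal of $\rho$.

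Hence at every perturbed point the sub-$k$ marginal of $\bar\rho$ changes by $\delta\bigl(1-\tfrac{m!/k!-1}{m!-m!/k!-1}\bigr)>0$. Each remaining $\rho'$ changes by $-\tfrac{(m!/k!)\,\delta}{m!-m!/k!-1}<0$. For example, with $m=4$ and $k=3$, $\bar\rho$ gains $16\delta/19$. The shift is identical at all $z_i$, so it cannot cancel under the conditioning $\proj{h}{\P}(X)[\bar\rho]=\beta$ with $\beta=\proj{h}{\P}(z_1)[\bar\rho]$. Your model is therefore not even rankwise sub-$k$ calibrated.

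None of your fallbacks repairs this. Perturbing fewer points creates no cancellation. $\tau_1,\tau_2$ are already reversed, and that is exactly what makes $k=2$ work. Embedding \cref{tab:counterexample_general_prob} into a larger item set changes the $k$-marginal on every $\B$ that contains the three embedded items when $k\ge 3$.

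\textbf{What is missing.} You need a perturbation $d\in\mathbb{R}^{\Sy{\I}}$ with $d\neq 0$ whose sub-$k$ marginals vanish simultaneously for all $\B$ with $|\B|=k$ and all $\rho'\in\Sy{\B}$. Such a $d$ exists for every $k\le m-1$. Consider the map sending $d$ to its marginals on the sets $\I\setminus\{j\}$, $j\in\I$. It goes from $\mathbb{R}^{m!}$ to $\mathbb{R}^{m\cdot(m-1)!}=\mathbb{R}^{m!}$. Its image lies in the subspace where the $m$ block totals coincide, which has codimension $m-1$, so its kernel has dimension at least $m-1\ge 2$. For $|\B|=k\le m-1$, pick $j\notin\B$; then every sub-$k$ marginal on $\B$ is a sum of marginals on $\I\setminus\{j\}$, so this kernel annihilates all sub-$k$ marginals.

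Now add a small multiple of such a $d$ to a full-support conditional distribution at a point of positive mass. This leaves every conditional sub-$k$ probability unchanged, so sub-$k$ calibration is preserved, and by \autoref{th:sub_k_rankwise_sub_k}~(i) so is rankwise sub-$k$ calibration. Your Step 3, applied to any $\pi$ with $d[\pi]>0$, then destroys rankwise calibration. The simplest instance is to take $\Pi$ uniform ($u$) and independent of $X$, with the constant predictor $h\equiv u+\varepsilon d$. This is precisely the pattern of \cref{tab:counterexample_general_prob}.
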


With a very similar proof technique one derives the following corollaries.
\begin{corollary}\label{th:rankwise_top_k_not_rankwise_calib}
 Assume that the number of items $m \geq 3$. 
    Then it holds:
    $$
     \forall k \in \{1, \dots, m-2\} \ \exists h \in \X \rightarrow \Prob(\Sy{\I}): h \ \text{rankwise top-k calibrated}  \land h \  \text{ not rankwise calibrated.}
    $$
\end{corollary}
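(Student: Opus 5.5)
The corollary asks, for every $k\in\{1,\dots,m-2\}$, for a model that is rankwise top-$k$ calibrated but not rankwise calibrated. I will construct it directly, using the same perturbation technique as in the proofs of \autoref{th:rankwise_not_full_rank_calib} and \autoref{th:sub_k_calib_not_top_k}. Since $k\le m-2$, every top-$k$ set satisfies $|\topk{\rho}|=(m-k)!\ge 2$.

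\textbf{Base model.} I start from a model $h$ that is full-rank calibrated on a finite set $\X_0=\{x_1,\dots,x_n\}$ and whose predictions $h(x)$ are non-degenerate. By \autoref{th:full_rank_rankwise_calib} this $h$ is rankwise calibrated. By \autoref{th:full_rank_is_sub_k}~(ii) together with \autoref{th:sub_k_rankwise_sub_k}~(ii), it is also rankwise top-$k$ calibrated.

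\textbf{Perturbation.} I leave $h$ unchanged and perturb only the conditional law $P(\Pi\cond X)$, replacing $x_i$ by $z_i$ with the same marginal weights, as in \eqref{eq:first_condition_on_Z_subset}. The perturbation is designed so that it vanishes on every top-$k$ block. Since $k\le m-2$, any top-$k$ prefix $\sigma\in\Sy{\B'}$ leaves at least two items unranked. So every block $\topk{\sigma}$ contains at least two full rankings, for instance $\tau_1=\sigma\succ a\succ b\succ r$ and $\tau_2=\sigma\succ b\succ a\succ r$. The full rankings are partitioned into these blocks, one for each top-$k$ prefix. Inside each block I move mass $\delta$ from $\tau_2$ to $\tau_1$, choosing $\delta$ small enough that all probabilities stay in $[0,1]$; non-degeneracy allows this.

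\textbf{Rankwise top-$k$ calibration survives.} For every $\B$ with $|\B|=k$ and every $\rho\in\Sy{\B}$, the event $\{\proj{\Pi}{\B}=\rho\}$ used in the top-$k$ notion is a union of whole blocks $\topk{\cdot}$, and within each block the changes $+\delta$ and $-\delta$ cancel. So $P(\proj{\Pi}{\B}=\rho\cond X=z_i)$ is unchanged. The predictions $\proj{h}{\T}(z_i)[\rho]=\proj{h}{\T}(x_i)[\rho]$ and the conditioning weights are also unchanged. Repeating the law-of-total-probability chain \eqref{proof:th:sub_k_calib_not_top_k_step_1}--\eqref{proof:th:sub_k_calib_not_top_k_step_7} with $q[\rho]$ replaced by $\alpha$, as in \autoref{th:rankwise_sub_k_not_rankwise_top_k_calib}, shows that rankwise top-$k$ calibration still holds.

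\textbf{Rankwise calibration fails.} Fix $\alpha=h(x_i)[\tau_1]$ for some $x_i$. Exactly as in \eqref{proof_rankwise_not_full_rank_calib_step_7}--\eqref{proof_rankwise_not_full_rank_calib_step_11}, I get
\[
P(\Pi=\tau_1\cond h(X)[\tau_1]=\alpha)=\alpha+\delta\sum_{i:\,h(x_i)[\tau_1]=\alpha}P(X=z_i\cond h(X)[\tau_1]=\alpha)=\alpha+\delta\neq\alpha .
\]
The middle equality holds because every point in the conditioning group received the same shift $+\delta$, so no cancellation occurs.

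\textbf{Main obstacle.} The delicate step is making sure the perturbation cancels on \emph{every} top-$k$ event simultaneously, for all $\B$ and $\rho$, while still shifting a single full ranking uniformly across its level set. The block decomposition handles this: full rankings are partitioned by their top-$k$ prefix, so every top-$k$ event is a union of blocks. This is also exactly where $k\le m-2$ is needed. For $k=m-1$ the blocks are singletons, and indeed Corollary~\ref{lem:top_m-1_full_rank} shows that the implication then holds, so no counterexample exists.
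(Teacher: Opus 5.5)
Your proof is correct and follows essentially the paper's route. The paper gives no separate argument for this corollary beyond invoking ``a very similar proof technique'', i.e.\ holding $h$ fixed and perturbing $P(\Pi \cond X)$ as in the proofs of \autoref{th:rankwise_not_full_rank_calib} and \autoref{th:sub_k_calib_not_top_k}; your transfer of mass between two rankings sharing a top-$k$ prefix is a clean instance of this, and it preserves all top-$k$ marginals for every $\B$ and $\rho$ at once. One detail to tighten: choosing $\delta>0$ needs $P(\Pi=\tau_2 \cond X=x_i)>0$, which non-degeneracy of $h(x_i)$ alone does not give, but taking the base model $h(x)=P(\Pi \cond X=x)$ with full support secures both.
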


Notice that Theorems \ref{th:rankwise_not_sub_k_calibrated}, \ref{th:rankwise_not_top_k_calibrated}, \ref{th:sub_k_calib_not_rankwise} and \ref{th:top_k_calib_not_rankwise} also apply to Mallows Models and Plackett--Luce models, as there is no condition imposed on $h$ other than being calibrated.

\begin{theorem}\label{th:sub_k_calib_not_rankwise}
    Let $\X$ consist of at least two elements.
    If there exist two data points $x_i, x_j \in \X$ with $i \neq j$ such that  
    $
        P(X=x_i \cond h(x_i)[\alpha] = \alpha) \neq P(X=x_j \cond h(x_j)[\alpha] = \alpha) %
    $
    for some $\alpha \in [0,1]$,
    then it holds:
    $$
     \forall k \in \{2, \dots, m-1\} \ \exists h \in \X \rightarrow \Prob(\Sy{\I}): h \ \text{sub-k calibrated}  \land h \ \text{ not rankwise calibrated.}
    $$
\end{theorem}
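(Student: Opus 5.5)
Following the template of \autoref{th:rankwise_not_full_rank_calib} and \autoref{th:rankwise_not_sub_k_calibrated}, but with the roles of the two notions exchanged, the plan is to start from a well-behaved model and perturb the conditional distributions of two data points. We start from a finite feature set $\X=\{x_1,\dots,x_n\}$ and an $h$ that is full-rank calibrated on $\X$. By \autoref{th:full_rank_is_sub_k} (i) and \autoref{th:full_rank_rankwise_calib}, $h$ is then both sub-$k$ and rankwise calibrated. We fix $\B$ with $|\B|=k$, a sub-ranking $\rho\in\Sy{\B}$, and two full rankings $\pi,\tau\in\subk{\rho}$ with $\pi\neq\tau$. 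These exist because $k\le m-1$ gives $|\subk{\rho}|=m!/k!\ge m\ge 3$. The model $h$ is left unchanged throughout; only the conditional distributions of $\Pi$ given $X$ are modified, as in the earlier proofs, by replacing $x_i,x_j$ with new points $z_i,z_j$.

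The construction proceeds in two steps.
\begin{enumerate}
\item[(a)] Pick $x_i$ and set
\[
P(\Pi=\pi\cond X=z_i)=P(\Pi=\pi\cond X=x_i)+\delta,\qquad P(\Pi=\tau\cond X=z_i)=P(\Pi=\tau\cond X=x_i)-\delta,
\]
leaving all other rankings unchanged. Choose $\delta>0$ small enough that the result is a valid distribution, which is possible when $h(x_i)$ is non-degenerate.
\item[(b)] Apply the same kind of $\pm\delta$ swap to a second point $x_j$ if this is needed to keep the total mass balanced. The hypothesis $P(X=x_i\cond h(x_i)[\alpha]=\alpha)\neq P(X=x_j\cond h(x_j)[\alpha]=\alpha)$ is what ensures that this swap at $x_j$ cannot cancel the swap at $x_i$ inside the rankwise condition.
\end{enumerate}

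Sub-$k$ calibration is preserved. Since both $\pi$ and $\tau$ lie in $\subk{\rho}$, the shift leaves $P(\proj{\Pi}{\B}=\rho'\cond X=z_i)$ unchanged for every $\rho'\in\Sy{\B}$ and every such $\B$. To get this for all size-$k$ subsets simultaneously, $\pi$ and $\tau$ must agree on the relative order within every $\B'$ of size $k$. The simplest choice meeting this is for $\pi$ and $\tau$ to differ by an adjacent transposition of two items, which leaves every projection onto a $k$-subset not containing both items unchanged. The expansion \eqref{proof:th:sub_k_calib_not_top_k_step_1}--\eqref{proof:th:sub_k_calib_not_top_k_step_7} then goes through with the added term equal to $0$.

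Rankwise calibration is broken. Conditioning on $h(X)[\pi]=\alpha$ with $\alpha=h(x_i)[\pi]$ and mimicking \eqref{proof_rankwise_not_full_rank_calib_step_7}--\eqref{proof_rankwise_not_full_rank_calib_step_11}, we obtain
\[
\alpha+\delta\,P(X=x_i\cond h(X)[\pi]=\alpha)\neq\alpha .
\]

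The main obstacle is reconciling preserved sub-$k$ marginals over \emph{all} $\B$ with a genuine change in the full distribution. An adjacent transposition changes the projection onto any $\B'$ containing both swapped items, so we cannot simply swap mass between two rankings. Instead we use a $\pm\delta$ pattern over several rankings whose sub-$k$ projections cancel for all $\B'$, as in \cref{tab:counterexample_general_prob}: there the perturbation $+\delta$ on $i_1\succ i_2\succ i_3$ and $i_3\succ i_2\succ i_1$ and $-\delta/2$ on the other four rankings has zero pairwise marginals. For general $m$ and $k\le m-1$, we first try the perturbation $+\delta$ on a ranking and its reversal and $-\delta$ on a suitable pair. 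If that fails, we use the signed combination of a ranking and its cyclic shifts. In either case we must verify explicitly that all size-$k$ marginals vanish. The inequality hypothesis is then used exactly as above: the shifted mass at $z_i$ enters the rankwise sum with a nonzero weight and is not compensated.
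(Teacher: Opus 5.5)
Your final strategy is sound and genuinely different from the paper's. You perturb $P(\Pi\cond X=x_i)$ at a single point by $\delta\Delta$, where $\Delta$ is a nonzero signed measure on $\Sy{\I}$ whose projection onto \emph{every} $k$-subset $\B$ vanishes. Then each $P(\proj{\Pi}{\B}=\rho\cond X=x)$ is unchanged, so sub-$k$ calibration survives ($h$ is untouched), while $P(\Pi=\pi\cond h(X)[\pi]=\alpha)$ shifts by $\delta\Delta[\pi]\,P(X=x_i\cond h(X)[\pi]=\alpha)$.

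The gap is that the existence of such a $\Delta$ for every $k\in\{2,\dots,m-1\}$ is exactly the step you leave open, and neither of your candidates supplies it.
The reversal pattern $[\pi]+[\pi^{\mathrm{rev}}]-[\sigma]-[\sigma']$ works only for $k=2$. For $k\ge 3$ (so $m\ge 4$), vanishing on each $k$-subset forces $\proj{\sigma}{\B}\in\{\proj{\pi}{\B},\proj{\pi^{\mathrm{rev}}}{\B}\}$ for all $\B$. So whether $\sigma$ agrees with $\pi$ on a pair is constant on every triple of items. Since any two pairs are chained through triples when $m\ge 4$, this gives $\sigma\in\{\pi,\pi^{\mathrm{rev}}\}$, and the perturbation is zero.
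The cyclic-shift idea already fails for $m=4$, $k=3$. Put coefficients $c_1,\dots,c_4$ on $1\succ2\succ3\succ4$, $2\succ3\succ4\succ1$, $3\succ4\succ1\succ2$, $4\succ1\succ2\succ3$. The marginal on $\{1,2,3\}$ forces $c_2=c_3=0$ and $c_4=-c_1$, and the marginal on $\{1,2,4\}$ forces $c_1=0$.
Separately, steps (a)--(b) and your ``sub-$k$ calibration is preserved'' paragraph are wrong as written. Two distinct rankings disagree on some pair, which lies in some $k$-subset, so no swap between two rankings preserves all sub-$k$ marginals.

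The missing ingredient is a dimension count, which handles all $k$ at once. Let $M$ send a signed measure on $\Sy{\I}$ to its sub-$k$ marginals in $\bigoplus_{|\B|=k}\mathbb{R}^{\Sy{\B}}$, a space of dimension $m!/(m-k)!$. Every image satisfies the $\binom{m}{k}-1\ge 1$ independent relations saying that all $\B$-marginals have equal total mass. Hence $\operatorname{rank}M\le m!/(m-k)!-\binom{m}{k}+1<m!$, so $\ker M\neq\{0\}$. Any $0\neq\Delta\in\ker M$ has total mass $0$, hence some entry $\Delta[\pi]>0$.

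Now start from $h(x)=P(\Pi\cond X=x)$, which is trivially full-rank calibrated, with $P(X=x_i)>0$ and $P(\Pi\cond X=x_i)$ of full support (``non-degenerate'' is not enough). Then $P(\Pi\cond X=x_i)+\delta\Delta$ is a valid distribution for small $\delta$. With $\alpha=h(x_i)[\pi]$, your computation gives $\alpha+\delta\Delta[\pi]\,P(X=x_i\cond h(X)[\pi]=\alpha)>\alpha$. No second point $x_j$ is needed, and neither is the theorem's hypothesis.

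For comparison, the paper picks $x_i,x_j$ with identical sub-$k$ marginals and equal conditional weights, where only $x_i$ has $h(x_i)[\pi]=\alpha$. It applies opposite $\pm\delta$ swaps between $\pi$ and $\tau$ at the two points, which cancel in the sub-$k$ conditioning but not in the rankwise one. Choosing such points presupposes that $h(x_i)-h(x_j)$ is a nonzero element of $\ker M$. That is the same fact you are missing, and the paper takes it for granted. Completed as above, your one-point version is self-contained and avoids the equal-weight bookkeeping.
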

\begin{proof}[Proof of Theorem \ref{th:sub_k_calib_not_rankwise}]
We proceed similarly to the proofs of \autoref{th:rankwise_not_full_rank_calib} and \autoref{th:rankwise_not_sub_k_calibrated}:
Let  $\X$ consist without loss of generality of $n <  \infty$  data points for which $h \in \Hy$  be rankwise and sub-k calibrated for some $k \in \{2,\dots,m\}$ arbitrary but fixed.
Let $\B \subseteq \I$ be a subset of items with $|\B|=k$, $\pi,\tau \in \Sy{\I}, q \in \Prob(\Sy{\B})$ and $\rho \in \Sy{\B}$ arbitrary but fixed such that $\pi \in \subk{\rho}$.
Additionally, we choose $q$ such that  $q[\rho] \neq \alpha$.
We choose the data points in $\X$ such that there exist two $x_i,x_j \in \X$ for which $x_i,x_j \in \mathcal{F} = \{x \cond \proj{h}{\P}(x)= q\}$, $x_i \in \mathcal{G} = \{ x \cond h(x)[\pi] = \alpha \}$ and $x_j \notin \mathcal{G}$.
Intuitively, $\mathcal{G}$ is the set of data points for which $h$ assigns the probability $\alpha$ to the ranking $\pi$, and $\mathcal{F}$ is the set of points for which the marginals $\proj{h}{\P}$ are identical to the probability vector $q$.
Importantly, we have chosen $x_i, x_j$ in such a way that they have different probability for $\pi$, i.e., $h(x_i)[\pi] \neq h(x_j)[\pi]=\alpha$ but their marginals are identical, i.e., $\proj{h}{\P}(x_i)  = \proj{h}{\P}(x_j)$.

From here we can proceed similarly to the proofs of \autoref{th:rankwise_not_full_rank_calib} and \autoref{th:rankwise_not_sub_k_calibrated}.
That is, we construct the same $z_1, \dots, z_n$ based on $x_1, \dots, x_n$ such that 
$h$ is sub-k calibrated on these yet not rankwise calibrated.
The same reason applies here: Since $x_i,x_j$ are elements of $\mathcal{F}$ and we have that  we can repeat the same steps \eqref{proof_rankwise_not_full_rank_calib_step_1} -- \eqref{proof_rankwise_not_full_rank_calib_step_6} by using the appropriate marginalizations and projections to show that $h$ is sub-k calibrated on  $z_1, \dots, z_n$.
Since $x_i$ is an element of $\mathcal{G},$ but $x_j$ is not, and $P(X=x_i \cond h(x_i)[\pi]=\alpha) \neq P(X=x_j \cond h(x_j)[\pi]=\alpha)$, we see that $h$ is not rankwise calibrated  on  $z_1, \dots, z_n$.

\end{proof}

\begin{theorem}\label{th:top_k_calib_not_rankwise}
 Assume that the number of items $m \geq 3$.
    Then it holds:
    $$
     \forall k \in \{1, \dots, m-2\} \ \exists h \in \X \rightarrow \Prob(\Sy{\I}): h \  \text{top-k calibrated} \land h \ \text{ not rankwise calibrated.}
    $$
\end{theorem}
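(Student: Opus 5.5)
We prove the statement by an explicit construction that generalizes \cref{tab:counterexample_top1_not_rankwise}. The idea is the one behind \autoref{th:sub_k_calib_not_rankwise}: perturb the predictions of a calibrated model in a way that leaves every top-$k$ marginal unchanged but moves mass between full rankings. Fix $k \in \{1,\dots,m-2\}$. Then $|\topk{\rho}| = (m-k)! \geq 2$ for every $\rho \in \Sy{\B}$ with $|\B| = k$, so each top-$k$ block contains at least two full rankings, and mass can be shifted between them without changing the block total. For $k = m-1$ no such construction can exist, by Corollary \ref{lem:top_m-1_full_rank} and \autoref{th:full_rank_rankwise_calib}; this is why the range of $k$ in the statement stops at $m-2$.

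Concretely, take two points $x_1, x_2$ with equal mass and the ground truth $P(\Pi \cond X = x_1) = P(\Pi \cond X = x_2) = u$, the uniform distribution on $\Sy{\I}$. For each top-$k$ block $\topk{\rho}$, select two distinct rankings $\sigma_\rho, \sigma'_\rho \in \topk{\rho}$. Define
\[
h(x_1)[\pi] = h(x_2)[\pi] = \tfrac{1}{m!} + \epsilon\bigl(\mathbbm{1}\{\pi = \sigma_\rho\} - \mathbbm{1}\{\pi = \sigma'_\rho\}\bigr)
\]
for all $\pi \in \topk{\rho}$, with $0 < \epsilon \leq \tfrac{1}{m!}$.

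The verification then proceeds in three steps.

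\textbf{Top-$k$ calibration.} Within each block the perturbations cancel, so $\proj{h}{\T}(x_1) = \proj{h}{\T}(x_2) = \proj{u}{\T}$. Conditioning on $\proj{h}{\T}(X) = \proj{u}{\T}$ therefore selects every point. By the law of total probability, as in the proof of \autoref{th:full_rank_is_sub_k}, the conditional probability of $\proj{\Pi}{\B} = \rho$ equals $\proj{u}{\T}[\rho]$, which is the value the model predicts.

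\textbf{Failure of rankwise calibration.} For $\pi = \sigma_\rho$, the event $h(X)[\pi] = \tfrac{1}{m!} + \epsilon$ has positive probability. Yet $P(\Pi = \pi \cond h(X)[\pi] = \tfrac{1}{m!} + \epsilon) = \tfrac{1}{m!}$, which differs from $\tfrac{1}{m!} + \epsilon$. So \eqref{eq:rankwise_calb} fails.

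\textbf{Choice of $\epsilon$.} Taking $\epsilon = \tfrac{1}{m!}$ yields exactly the pattern of \cref{tab:counterexample_top1_not_rankwise} for $m = 3$, $k = 1$, which serves as a sanity check.

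The main obstacle is the bookkeeping in the first step. The top-$k$ marginal ranges over $\bigcup_{\B} \Sy{\B}$, and the blocks $\topk{\rho}$ for a fixed $k$ partition $\Sy{\I}$. One must therefore check that the perturbation cancels within every block simultaneously, and not just within the blocks for one particular $\B$. This holds because the blocks are disjoint and each block is balanced separately.

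A second point needs care: if the two distinct perturbed probability values collide with other values, they could be pooled with them under rankwise conditioning. This cannot happen here, because all unperturbed probabilities equal $\tfrac{1}{m!}$ and all perturbed ones differ from it. Finally, $\X$ can be enlarged to any given feature space by mapping the remaining points to a calibrated prediction whose values avoid $\tfrac{1}{m!} \pm \epsilon$. This extension follows the same pattern as the extension remark in \autoref{th:rankwise_not_full_rank_calib}.
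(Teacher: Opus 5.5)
Your proof is correct, but it runs in the opposite direction from the paper's.

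\textbf{The paper's route.} The paper argues by reference to \autoref{th:sub_k_calib_not_rankwise}, which reuses the template of \autoref{th:rankwise_not_full_rank_calib}. It keeps the model $h$ fixed and starts from a model that is already calibrated. It then chooses two points in the same top-$k$ level set of $h$ but in different rankwise level sets. Finally it moves $\pm\delta$ of \emph{ground-truth} mass between two rankings at these points, so the shifts cancel under top-$k$ conditioning but not under rankwise conditioning.

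\textbf{Your route.} You instead fix the ground truth (uniform) and perturb the \emph{prediction} inside each block $\topk{\rho}$. This is fully explicit and generalizes \cref{tab:counterexample_top1_not_rankwise} directly. It does not rely on postulating suitably placed points in a pre-existing calibrated model, and it already works with a single feature point. It also shows exactly where $k \leq m-2$ enters, namely through $(m-k)! \geq 2$.

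\textbf{What the paper's route buys.} Because $h$ is never altered, the paper can remark that the result also holds inside $\Hy_{PL}$ and $\Hy_{MM}$. Your perturbed uniform model is neither Plackett--Luce nor Mallows; for $\epsilon = 1/m!$ it even has zero entries. You can recover that stronger form by swapping roles in your own construction. Take $h(x)$ to be any full-support model, say a Plackett--Luce distribution. Let the ground truth be $h(x)$ with $\pm\epsilon$ moved between two rankings of one block.

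\textbf{Two small remarks.} First, your top-$k$ step implicitly reads the event $\proj{\Pi}{\B}=\rho$ in \eqref{eq:top_k_calb} as $\Pi \in \topk{\rho}$. This is the reading the paper itself uses in proving \autoref{th:full_rank_is_sub_k} (ii), so state it explicitly. Second, your concern about collisions is moot. Rankwise conditioning for a fixed $\pi$ pools instances rather than rankings, and both of your points carry the same prediction.
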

\begin{proof}[Proof of Theorem \ref{th:top_k_calib_not_rankwise}]
    The proof is analogous to \autoref{th:sub_k_calib_not_rankwise}, replacing $\P$ for $\T$. 
\end{proof}

With a very similar proof technique one derives the following corollaries.
\begin{corollary}\label{th:rankwise_sub_k_not_sub_k_calib}
 Assume that the number of items $m \geq 3$. 
    Then it holds:
    $$
     \forall k \in \{2, \dots, m-1\} \ \exists h \in \X \rightarrow \Prob(\Sy{\I}): h \ \text{rankwise sub-k calibrated} \land h \   \text{ not sub-k calibrated }\text{.}
    $$
\end{corollary}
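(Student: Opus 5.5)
We will prove the corollary by an explicit construction in the style of the earlier exclusiveness results: the two events contributing to a single sub-ranking probability get perturbed in opposite directions. The pooled conditional probability for that sub-ranking is left unchanged, while the conditional probability given the full sub-$k$ marginal is shifted.

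Fix $k \in \{2,\dots,m-1\}$, a set $\B \subseteq \I$ with $|\B| = k$, and two distinct sub-rankings $\rho, \rho' \in \Sy{\B}$. These exist since $k \geq 2$. We start, as in the proof of \autoref{th:sub_k_calib_not_top_k}, from a full-rank calibrated $h$. By \autoref{th:full_rank_is_sub_k}(i) and \autoref{th:sub_k_rankwise_sub_k}(i), this $h$ is sub-$k$ and rankwise sub-$k$ calibrated. We take $\X$ finite w.l.o.g. and pick two points $x_i, x_j$ of equal conditional weight with the following properties:
\begin{itemize}
\item they share the value $\proj{h}{\P}(x_i)[\rho] = \proj{h}{\P}(x_j)[\rho] = \alpha$;
\item they share the value $\proj{h}{\P}(x_i)[\rho'] = \proj{h}{\P}(x_j)[\rho'] = \alpha'$;
\item their full marginals differ, $\proj{h}{\P}(x_i) = q \neq \proj{h}{\P}(x_j)$.
\end{itemize}
Such a configuration exists because for $k \geq 2$ we have $|\Sy{\B}| = k! \geq 2$. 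For $k \geq 3$, the marginal can differ on a third sub-ranking. For $k = 2$, there are only two sub-rankings, so $\rho'$ is forced to be the complement; in that case we instead pair $x_i, x_j$ with equal $\rho$-probabilities and use differing marginals on another set $\B'$ of the same size. Alternatively, we use the $k=2$ example of \cref{tab:counterexample_general_prob} adapted accordingly. This $k=2$ case is the delicate point.

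Next we construct $z_i, z_j$ as in \autoref{th:rankwise_not_full_rank_calib}. We move mass $\delta$ inside $\P(\rho)$ to $\P(\rho')$ at $z_i$, and the reverse at $z_j$, choosing $\delta$ small enough to keep valid distributions. All other conditional distributions are kept unchanged. We then check rankwise sub-$k$ calibration. For $\rho$ and $\rho'$, the conditioning events $\{\proj{h}{\P}(X)[\rho] = \alpha\}$ and $\{\proj{h}{\P}(X)[\rho'] = \alpha'\}$ each contain both $z_i$ and $z_j$ with equal weight. The changes $+\delta$ and $-\delta$ therefore cancel, exactly as in steps \eqref{proof_rankwise_not_full_rank_calib_step_1}--\eqref{proof_rankwise_not_full_rank_calib_step_6}. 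Other sub-rankings and other subsets $\B'$ are unaffected provided the mass transfer is done within a single set $\P_{\B'}(\sigma)$ for each other $\B'$. We arrange this by moving mass between two full rankings that differ only in the relative order of items in $\B$, e.g. an adjacent transposition of two $\B$-items, which does not change any projection onto a set $\B'$ that does not contain both of those items. This bookkeeping is the main obstacle. If it cannot be achieved for all $\B'$ simultaneously, we additionally require the analogous equal-weight pairing of $x_i$ and $x_j$ for every affected $(\B', \sigma)$.

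Finally, we show that sub-$k$ calibration fails. Conditioning on $\{\proj{h}{\P}(X) = q\}$ includes $z_i$ but not $z_j$. Hence
$$P(\proj{\Pi}{\B} = \rho \cond \proj{h}{\P}(X) = q) = q[\rho] + \delta\, P(X = z_i \cond \proj{h}{\P}(X) = q) \neq q[\rho],$$
exactly as in \eqref{proof_rankwise_not_full_rank_calib_step_7}--\eqref{proof_rankwise_not_full_rank_calib_step_11}. This completes the plan.
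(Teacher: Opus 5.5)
\textbf{The gap is the case $k=2$, and it cannot be closed because the statement is false there.} Your construction is the $\delta$-perturbation technique the paper intends; its justification is only a pointer to the arguments of \autoref{th:rankwise_not_full_rank_calib} and \autoref{th:rankwise_not_sub_k_calibrated}. For $k \geq 3$ it goes through.

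For $|\B| = 2$ we have $\Sy{\B} = \{\rho, \bar\rho\}$. Every $q \in \Prob(\Sy{\B})$ is determined by $q[\rho]$, so the events $\{\proj{h}{\P}(X) = q\}$ and $\{\proj{h}{\P}(X)[\rho] = q[\rho]\}$ coincide. Hence $P(\proj{\Pi}{\B} = \rho \cond \proj{h}{\P}(X) = q) = P(\proj{\Pi}{\B} = \rho \cond \proj{h}{\P}(X)[\rho] = q[\rho])$, and the condition for $\bar\rho$ follows by complement. So rankwise sub-2 calibration and sub-2 calibration are equivalent, set by set, just as weak and strong calibration coincide for binary classification. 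In your terms, for $k=2$ your first bullet already contradicts your third.

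Neither of your fallbacks helps:
\begin{itemize}
\item Moving the failure to another pair $\B'$ meets the same equivalence on $\B'$. To cancel the rankwise condition there, $x_i$ and $x_j$ must share the $\B'$-pairwise probability, i.e.\ the whole $\B'$-marginal, so the sub-2 condition on $\B'$ cancels as well.
\item The model of \cref{tab:counterexample_general_prob} is sub-2 calibrated, so it cannot serve as the counterexample.
\end{itemize}
The corollary therefore needs $k \in \{3, \dots, m-1\}$, hence $m \geq 4$; for $m = 3$ it is false. The paper's one-line justification does not address this either, so the defect lies in the statement, not only in your proof.

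\textbf{For $k \geq 3$, your extra pairing constraints for every affected $(\B', \sigma)$ can be satisfied.} This is the point you left open. Let $\pi, \pi'$ differ by swapping adjacent items $a, b \in \B$, and set $P(X=x_i) = P(X=x_j)$. Initially put $P(\Pi \cond X = x) = h(x)$, with $h(x_i)$ uniform on $\Sy{\I}$ and $h(x_j) = h(x_i) + \varepsilon\,(e_{\pi^{\mathrm{rev}}} - e_{\pi'^{\mathrm{rev}}})$. Here $e$ is a point mass, $\mathrm{rev}$ is reversal, and $\varepsilon, \delta$ are small.

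Take any $\B' \supseteq \{a,b\}$ with $|\B'| \geq 3$, including $\B$ itself. Neither reversed ranking projects onto $\proj{\pi}{\B'}$ or $\proj{\pi'}{\B'}$: both reverse the order of $a$ and some third item $c \in \B'$, on which $\pi$ and $\pi'$ agree. Hence $x_i$ and $x_j$ share every affected value, including those at $\rho, \rho'$ on $\B$. Their $\B$-marginals nevertheless differ at $\rho^{\mathrm{rev}} \notin \{\rho, \rho'\}$. Sets not containing both $a$ and $b$ are untouched by the $\pi \leftrightarrow \pi'$ transfer.

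Your cancellation and non-cancellation computations then apply verbatim. The only correction is a sign: given the direction of your transfer, the shift in your final display is $-\delta$, not $+\delta$.
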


\begin{corollary}\label{th:rankwise_top_k_not_top_k_calib}
 Assume that the number of items $m \geq 3$. 
    Then it holds:
    $$
     \forall k \in \{1, \dots, m-2\} \ \exists h \in \X \rightarrow \Prob(\Sy{\I}): h \ \text{rankwise top-k calibrated}  \land h \  \text{ not top-k calibrated.}
    $$
\end{corollary}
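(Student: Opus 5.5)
We plan to prove the corollary with an explicit finite counterexample rather than the perturbation route of \autoref{th:rankwise_sub_k_not_rankwise_top_k_calib}. The idea is the multi-class phenomenon from \autoref{section:calibration_prolr}: rankwise top-$k$ calibration pools contexts by a single top-$k$ probability, whereas top-$k$ calibration conditions on the whole top-$k$ marginal. So we want contexts whose marginals differ as vectors but whose individual entries coincide. The perturbations of the true distributions should cancel within every pool of equal entries, but not within the singleton groups defined by the full marginal.

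\textbf{Setup.} Fix $m\geq 3$ and $k\in\{1,\dots,m-2\}$. There are $m!/(m-k)!\geq 3$ top-$k$ rankings, so we can pick three distinct ones, $\rho_1,\rho_2,\rho_3$ (over possibly different item sets $\B$). For each $\rho_\ell$ we fix one completion $\pi_\ell\in\topk{\rho_\ell}$. Let $\X=\{x_1,x_2,x_3\}$, each point having probability $\nicefrac13$, and fix $0<\delta<\nicefrac14$. Define $h(x_s)$ to be supported on $\{\pi_1,\pi_2,\pi_3\}$, with mass $\nicefrac12$ on $\pi_s$ and $\nicefrac14$ on each of the other two. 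Then the top-$k$ marginal $\proj{h}{\T}(x_s)$ puts $\nicefrac12$ on $\rho_s$, $\nicefrac14$ on the other two, and $0$ on every other top-$k$ ranking. The three marginals are pairwise distinct.

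\textbf{True conditionals.} We define them as cyclic perturbations:
\begin{align*}
P(\Pi\cond x_1) &= (\tfrac12,\ \tfrac14+\delta,\ \tfrac14-\delta),\\
P(\Pi\cond x_2) &= (\tfrac14-\delta,\ \tfrac12,\ \tfrac14+\delta),\\
P(\Pi\cond x_3) &= (\tfrac14+\delta,\ \tfrac14-\delta,\ \tfrac12),
\end{align*}
with entries on $(\pi_1,\pi_2,\pi_3)$ and zero mass elsewhere. The event $\{\proj{\Pi}{\B}=\rho_\ell\}$, read as ``the top-$k$ of $\Pi$ equals $\rho_\ell$'', then has the probability of the corresponding entry. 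For every other top-$k$ ranking this probability is $0$.

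\textbf{Rankwise top-$k$ calibration.} We check \eqref{eq:rankwise_top_k_calib} for every $\rho$ and $\alpha$.
\begin{itemize}
\item For top-$k$ rankings other than $\rho_1,\rho_2,\rho_3$, only $\alpha=0$ occurs, and the true frequency is $0$.
\item For $\rho_\ell$ and $\alpha=\nicefrac12$, the pool is $\{x_\ell\}$, where the true probability is exactly $\nicefrac12$.
\item For $\rho_\ell$ and $\alpha=\nicefrac14$, the pool is the other two contexts, which have equal weight. The cyclic design gives them true probabilities $\nicefrac14+\delta$ and $\nicefrac14-\delta$, which average to $\nicefrac14$.
\end{itemize}

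\textbf{Failure of top-$k$ calibration.} Conditioning on $\proj{h}{\T}(X)=\proj{h}{\T}(x_1)=:q$ isolates $x_1$, since the marginals are distinct. This gives $P(\proj{\Pi}{\B}=\rho_2\cond \proj{h}{\T}(X)=q)=\nicefrac14+\delta\neq q[\rho_2]$, violating \eqref{eq:top_k_calb}.

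\textbf{Main obstacle.} The delicate step is ensuring that every pool of equal values in the rankwise condition cancels simultaneously. With only two contexts the perturbations cannot be balanced while keeping valid probability vectors, which is why three cyclic contexts are used. A secondary point is reading $\proj{\Pi}{\B}=\rho$ in the top-$k$ definition as the top-$k$ prefix event, consistent with $\proj{p}{\T}$. Under that reading, mass placed only on the chosen completions $\pi_\ell$ contributes correctly, and the whole construction is valid uniformly for all $k\leq m-2$ (indeed for all $k<m$).
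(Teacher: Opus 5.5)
Your construction is correct, but it takes a different route from the paper.

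The paper gives no separate proof of this corollary. It only says it follows ``with a very similar proof technique'', meaning the two-point perturbation of \autoref{th:rankwise_not_full_rank_calib} and \autoref{th:sub_k_calib_not_rankwise}:
start from a full-rank (hence top-$k$ and rankwise top-$k$) calibrated $h$, pick two equally weighted points with $\proj{h}{\T}(x_i)[\rho]=\proj{h}{\T}(x_j)[\rho]$ but $\proj{h}{\T}(x_i)\neq\proj{h}{\T}(x_j)$, and shift the true probability of $\rho$ by $\pm\delta$.

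\textbf{What your route buys.} Your explicit three-point cyclic example checks every pool directly. It needs no existence assumptions on $\X$ and no pre-existing calibrated model.
It also makes explicit the compensation constraint that the two-point scheme must respect. A $+\delta$ on $\rho$ at $x_i$ must be offset by a $-\delta$ on some other top-$k$ ranking $\rho'$, whose pool must cancel as well. So the two points must agree on at least two positive entries of their top-$k$ marginals while still differing elsewhere.
For $m=3$, $k=1$ there are only three top-1 rankings, so agreeing on two entries forces the marginals to be equal. Hence no two-point perturbation can work in that case, and your cyclic three-point design is what is needed. For $m\geq 4$, two suitably chosen points would already suffice, so your remark about two contexts applies only when three top-$k$ rankings are available.
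Your prefix reading of $\proj{\Pi}{\B}=\rho$ is the one the paper itself uses in the proof of \autoref{th:full_rank_is_sub_k}~(ii). Your observation that the example works for all $k<m$ is also correct.

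\textbf{What the paper's route buys.} The perturbation approach changes only the data distribution and leaves $h$ untouched. In principle, $h$ can therefore be a full-support Plackett--Luce or Mallows model. Your $h$ is supported on three rankings, so it exhibits the phenomenon only for a degenerate predictor.
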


\clearpage

\section{Datasets Details}
\label{appendix:datasets}

\begin{table}[hbt]
    \centering
        \begin{tabular}{llrrrrr}
            \toprule
              Name & \#of instances & \#of features & \#of items & \#of rankings & fraction  \\
            \midrule
             authorship & 841 & 70 & 4 & 17 & $0.7083$ \\
             glass  & 214 & 9 & 6 & 30 & $0.0417$ \\
             iris   & 150 & 4 & 3 & 5 & $0.8333$ \\
             libras  & 360 & 90 & 15 & 317 & $<0.001$ \\
             vehicle  & 846 & 18 & 4 & 18 & $0.7500$ \\ 
             vowel  & 528 & 10 & 11 & 294 & $<0.001$ \\ 
             wine  & 178 & 13 & 3 & 5 & $0.8333$ \\ 
             yeast  & 1484 & 8 & 10 & 471 & $<0.001$ \\ 
             \textbf{movies} & 602 & 65 & 15 & 260 & $<0.001$ \\ 
             \textbf{political}  & 170 & 46 & 6 &  87 & $0.1208$ \\
            \bottomrule
        \end{tabular}
        \caption{Datasets used for the experiments. The movies and political datasets correspond to real-world problems.}
        \label{table:datasets}
\end{table}
All datasets used in our experiments are publicly available on \href{https://www.openml.org/search?type=data&sort=runs&status=active&uploader_id=%3D_25829}{OpenML}. 
An overview of the datasets employed for benchmarking is provided in \autoref{table:datasets}, where \emph{political} and \emph{movies} constitute real-world datasets.
The other datasets are adapted versions of classical classification tasks to the problem of label ranking  \citet{Hullermeier.2008}.
Specifically, for each dataset, we first train a Naive Bayes classifier.
The predicted class probabilities are then used to induce a ranking over all labels by sorting them in descending order.
In the event of ties, labels with smaller indices are ranked first.

The \emph{political} dataset \citep{Thies.2024} is derived from Likert-scale survey questions, such as ``How conservative would you rate CDU/CSU?'', which serve as features.
The learning task is to predict an overall ranking of the political parties \emph{LINKE}, \emph{GRUENE}, \emph{SPD}, \emph{CDU/CSU}, \emph{FDP}, and \emph{AfD}.
The \emph{movies} dataset \citep{Harper.2016} consists of a subset of $602$ instances and contains ratings for the $15$ most frequently rated movies.
These movies are ranked according to their assigned star ratings, following the construction described by \citet{Alfaro.2023}.
All data files used in our experiments are available in our public repository: \githubrepo.

\clearpage

\section{Implementation Details}
\label{appendix:implementation_details}
The ranking models are implemented in PyTorch \citep{Paszke.2019}, using neural networks with two hidden layers of size 100 and ReLU activations.
For the Plackett--Luce model and RankClassifier, we use the Adam optimizer \citep{Kingma.2014} with learning rate $10^{-3}$ and no weight decay.
On each dataset, we train for 50 epochs using a batch size of 64.

\paragraph{RankClassifier}

RankClassifier is a multi-class classification model that outputs a probability distribution over all possible rankings $\Sy{\I}$.
Naively training such a model is infeasible for large $m$, since the number of rankings grows factorially as $m!$.
Following \citet{Resin.2023}, we therefore learn a distribution only over the $N$ unique rankings observed in the training data and assign uniform probability mass to all remaining rankings.
Concretely, let $x$ denote the total probability mass assigned to the $N$ observed rankings.
All other rankings are assigned probability $\frac{1-x}{m! - N}$.
We implement this by modifying the softmax normalization to implicitly account for $m! - N$ additional logits that share a constant value $c$ (instead of explicitly representing $m!-N$ outputs).
Let $z_r$ denote the logit for an observed ranking $r$.
Then the probability of an observed ranking is
\[
P(r) \;=\; \frac{e^{z_r}}{\sum_{r' \in \mathcal{R}_{\text{obs}}} e^{z_{r'}} + (m!-N)e^{c}},
\]
and each unobserved ranking receives probability
\[
P(r_{\text{unobs}}) \;=\; \frac{e^{c}}{\sum_{r' \in \mathcal{R}_{\text{obs}}} e^{z_{r'}} + (m!-N)e^{c}}.
\]
This is memory efficient and allows us to represent full ranking distributions even for moderately large item sets (e.g., $m=15$).
Assigning uniform probability mass to all unobserved rankings can be interpreted as a prior assumption; in contrast, assigning very negative logits to these rankings corresponds to assuming that such rankings cannot occur.
Model parameters are learned by minimizing the cross-entropy loss between the predicted distribution and the observed ranking.

\paragraph{Plackett--Luce model}

The Plackett--Luce model is learned by minimizing the negative log-likelihood of the observed rankings as described in \citep{Cheng.2010}.
The learned logits are exponentiated and normalized to obtain the corresponding choice probabilities.
Since the Plackett--Luce model is not uniquely identified without a normalization constraint \citep{Vitelli.2017}, we normalize the parameters accordingly.
To avoid numerical issues, we subtract the maximum logit prior to exponentiation.

\paragraph{Mallows model}

The Mallows model is learned in interleaved steps, since both the dispersion parameter $\lambda$ and the reference ranking $\tau$ must be estimated.
Following \citet{Vitelli.2017}, we initialize $\tau$ as an ``average'' ranking in the training data by computing each item's average position and sorting items by these averages (ties are broken randomly).
We then alternate between updating $\lambda$ using the closed-form solution in \citet{Vitelli.2017} and updating $\tau$ via the local search procedure of \citet{Busse.2007}.

\paragraph{Ranking by pairwise comparisons (RPC)}

Ranking by pairwise comparisons is implemented using the approach of \citet{Alfaro.2023}.
We first construct a pairwise comparison dataset from the rankings by considering all implied pairwise preferences.
Then, for each item pair, we train a binary classifier to predict which item is preferred.
We use scikit-learn's \texttt{DecisionTreeClassifier}, as it has been reported to perform well in this setting \citep{Hullermeier.2004}.
We apply Platt scaling \citep{Platt.1999} as a post-hoc calibration method to ensure that the underlying binary models output calibrated probabilities.
Finally, we aggregate the pairwise predictions using the Borda method \citep{McLean.1995} to obtain a ranking over all items.
We base our implementation on \citet{alfaro_learning_2021} and adapt it to the label ranking setting.
Importantly the calibration of RPC can not be investigated for anything else than sub-2, as it inherently only learns sub-2 pairwise probabilities.

\paragraph{Plackett--Luce-RPC}

For RPC one cannot directly obtain a probability distribution over all rankings, since the model only yields pairwise preference probabilities.
Interpreting these probabilities as Bradley--Terry probabilities \citep{Bradley.1952}, we use the (improved) Zermelo algorithm \citep{Newman.2023} to obtain Plackett--Luce parameters, as described in \autoref{section:ProLR}.
Given $p_{i,j} = P(i \succ j)$ for all $i,j \in \I$, we iteratively update parameters $\theta_i$ for $t=1,\dots,T$ via
\[
\theta_i^{(t+1)} \;=\;
\frac{
  \sum_{j \neq i} \frac{p_{i,j}\,\theta_i^{(t)}}{\theta_i^{(t)} + \theta_j^{(t)}}
}{
  \sum_{j \neq i} \frac{p_{j,i}}{\theta_i^{(t)} + \theta_j^{(t)}}
}\, .
\]
As shown empirically by \citet{Newman.2023}, this update converges faster than the original Zermelo algorithm \citep{Zermelo.1929} while yielding the same estimates.
We set $T=100$, which is safely above typical convergence thresholds (e.g., \citet{Newman.2023} report that fewer than 20 iterations can already achieve a squared difference of $10^{-6}$).
This procedure relates to rank-breaking, where pairwise preferences are used to infer a distribution over full rankings \citep{Soufiani.2013}.
\clearpage 

\section{Reward Model Calibration}%
\label{appendix:reward_model_calibration}

This appendix details the reward model calibration experiments presented in \cref{section:calibration_of_rlhf_reward_models}.

\subsection{Setup}%
\label{appendix:reward_model_calibration:setup}

\textbf{Dataset.}
RewardBench2 \citep{Malik.2025} evaluates reward models across six categories: Factuality, Precise Instruction Following, Math, Safety, Focus, and Ties.
Each datapoint is a six-tuple $(Q, R_1, R_2, R_3, R_4)$, where $Q$ is the prompt, $R_1, R_2, R_3, R_4$ are candidate responses, and $T \in \{1, 2, 3, 4\}$ indicates the correct answer.
We exclude the ``Ties'' category, which allows for more than four responses with multiple correct answers and thus does not fit our top-1 formulation.

\textbf{Model selection.}
We evaluate all Bradley--Terry-trained reward models from the RewardBench2 leaderboard and report the results for the ten best-calibrated ones in \cref{fig:reward_bench_ece}.
We exclude two models (\texttt{nicolinho/QRM-Gemma-2-27B} and \texttt{nicolinho/QRM-Llama3.1-8B-v2} by \citet{Dorka.2024}) trained with quantile regression, as their outputs cannot be interpreted as Plackett--Luce utilities.

\textbf{Label ranking formulation.}
Although ranking free-form responses is naturally an object ranking problem, with the responses themselves having attributes beyond their identity \citep{Kamishima.2011}, the fixed four-response structure of RewardBench2 allows for a label ranking formulation.
We define the feature space $\mathcal{X}$ as the set of prompt-response tuples $(Q, R_1, R_2, R_3, R_4)$ and take labels $\mathcal{I} = \{1, 2, 3, 4\}$ to represent response positions in this context.
The prediction task is then to identify which position contains the best response.

\subsection{Calibration Measurement}

Given a reward model $r$, we define the induced top-1 prediction model by
\[
    h_r \big( (Q, R_1, R_2, R_3, R_4) \big)[i] = \frac{r(Q, R_i)}{\sum_{j=1}^4 r(Q, R_j)}
    \,\text{.}
\]
This is the top-1 probability under the Plackett--Luce model with utilities given by the reward scores.


\textbf{ECE computation.}
For each context, the model predicts a top-1 probability for each of the four positions.
We compute ECE by pooling all (context, position) predictions, binning by predicted probability, and comparing mean predicted probability to empirical accuracy within each bin.
We use $10$ equal-width bins.

\clearpage
\section{Calibration in Label Ranking}
\label{appendix:calibration_label_ranking}

\subsection{Evaluation Metrics: Critical View and Discussion}
\label{appendix:discuss_evaluation_metrics}
\begin{figure}
    \centering
    \begin{subfigure}{\textwidth}
        \centering\includegraphics[width=.8\textwidth]{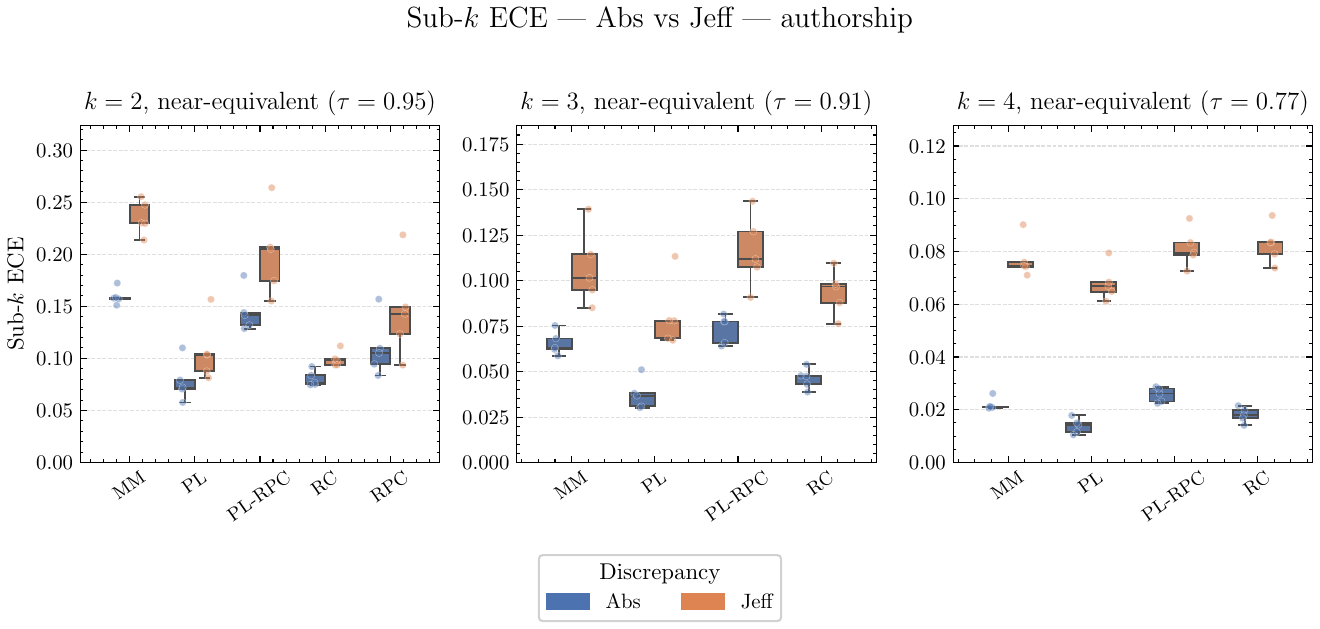}
    \end{subfigure}
    \begin{subfigure}{\textwidth}
        \includegraphics[width=\textwidth]{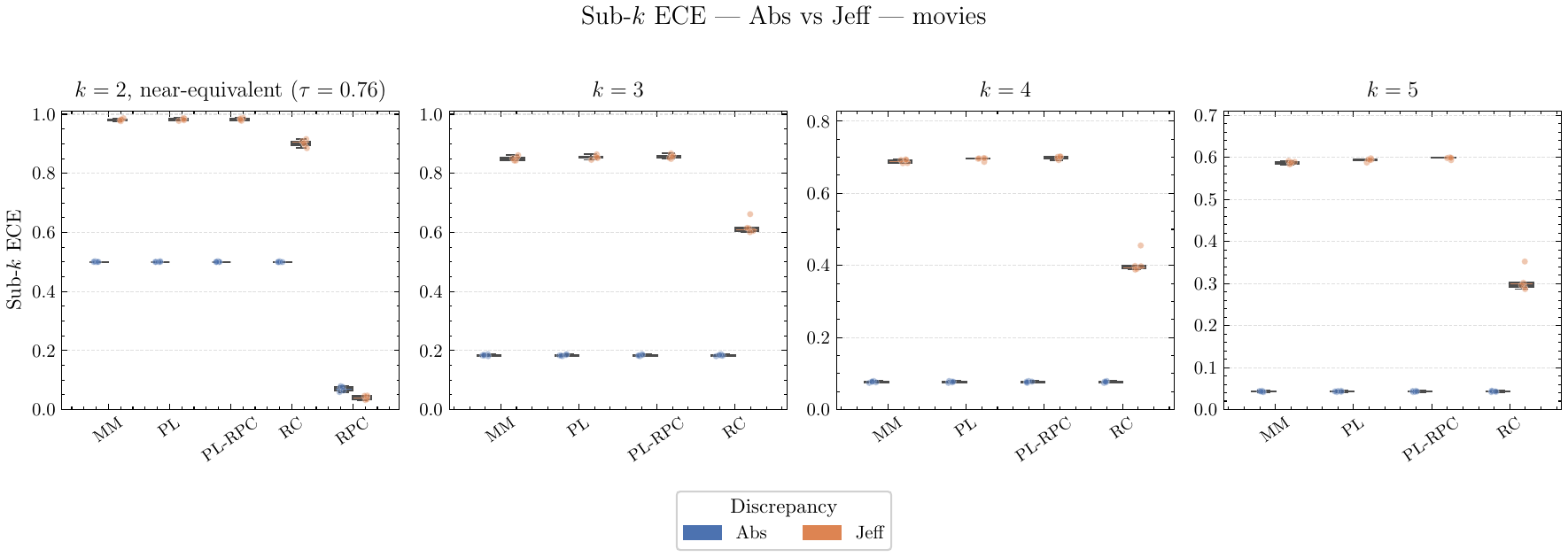}
    \end{subfigure}
    \begin{subfigure}{\textwidth}
        \includegraphics[width=\textwidth]{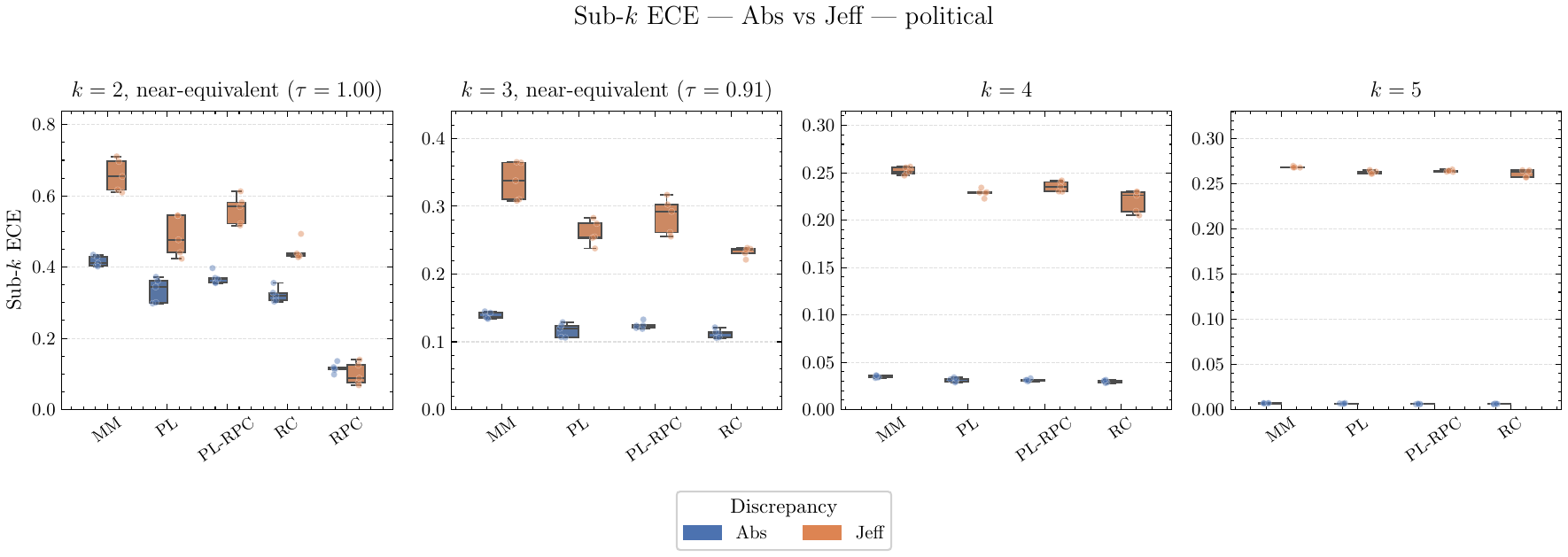}
    \end{subfigure}
    \caption{Comparison of using $|\cdot|$ in contrast to Jeffrey Divergence for calculating sub-k calibration via ECE. Rank correlation of the methods is reported only if the values using $| \cdot |$ are significantly different.}
    \label{fig:comparision_abs_jeff}
\end{figure}
Expected Calibration Error (ECE) is a widely adopted measure for assessing the miscalibration of probabilistic classifiers. 
However, beyond its well-known sensitivity to binning hyperparameters, ECE exhibits limited expressiveness as the number of classes increases. 
As a consequence, several alternative calibration measures have been proposed, most of which aim to reduce the effective number of observed classes \citep{Nixon.2019}. 
This limitation becomes particularly severe in the context of label ranking, where the number of possible classes grows factorial and already exceeds $100$ for as few as $5$ items.

As the number of items increases, probabilistic ranking models inevitably predict increasingly flat distributions over rankings. 
In this regime, the absolute differences used in ECE shrink in magnitude, leading to ECE values that are close to zero, almost independent of the actual calibration quality. 
We mitigate this issue by restricting the computation of ECE to the $95\%$ most frequent rankings observed in the dataset. 
For instance, in the \emph{movies} dataset, only $260$ distinct rankings are observed in the data, despite there being $15!$ possible rankings in total. 
Among these observed rankings, we retain the $95\%$ most frequent ones based on empirical frequency. 
Nevertheless, even under this restriction, ECE remains close to zero, as shown in \autoref{fig:complete_movies_results}.

These observations indicate that using absolute differences in ECE becomes increasingly intractable for large-scale label-ranking problems. 
They also raise the more fundamental question of whether predicting complete distributions over rankings is meaningful for large item sets such as $m=15$, which we do not address in this work. 
One possible direction to alleviate this limitation is to replace absolute error measures with relative divergence-based metrics. 
For example, the Kullback--Leibler divergence $D_{KL}$, or the symmetric Jeffreys divergence
\[
    D_{\text{jeff}}(p \| q) = D_{KL}(p \| q) + D_{KL}(q \| p),
\]
where $p$ denotes the empirical frequency and $q$ the predicted distribution, could be employed. 
Compared to $D_{KL}$, $D_{\text{jeff}}$ has the appealing property of detecting miscalibration even when the model underestimates the true probability mass.
\cref{fig:comparision_abs_jeff} showcases the potential differences when using a relative measure, such as the Jeffreys divergence, in contrast to using the standard absolute difference. 

Such alternatives require systematic investigation, which we leave to future work and instead propose a stagewise evaluation strategy.
Despite low absolute ECE values, meaningful relative comparisons between models remain possible (e.g., determining whether one model is better calibrated than another). 
Moreover, to assess calibration for rankings of length $k$, we recommend starting with calibration at $k' = 2$. 
If ECE is high (e.g., above $0.1$), the model is clearly uncalibrated for larger rankings such as $k = 5$. 
If ECE is acceptable, calibration can then be examined progressively for $k' = 3$ and higher values. 
This strategy exploits the inherent calibration structure of label ranking models (see \autoref{fig:overview_figure}), since calibration at higher $k$ necessarily implies calibration for all lower-order interactions.

To assess full-rank calibration, we employ the canonical multiclass calibration error estimator based on Dirichlet kernel density estimation proposed by \citet{Popordanoska.2022}.
This estimator targets the strongest notion of multiclass calibration, often referred to as canonical or distribution calibration \citep{Alvo.2014}, which requires the entire predicted probability vector over rankings to be calibrated.
Intuitively, the method estimates the conditional empirical distribution of true rankings in a local neighborhood of each predicted probability vector, where locality is defined via a Dirichlet kernel.
The calibration error is then computed as the average $\ell_2$ distance between the predicted probability vector and this locally estimated empirical distribution:
$$
    \text{Full-Rank-ECE} = \frac{1}{n} \sum_{j=1}^n \left\lVert \frac{\sum_{i \neq j} k_{Dir}(h(x_j);h(x_i))y_i}{\sum_{i\neq j} k_{Dir}(h(x_j);h(x_i))} - h(x_j) \right\lVert^2_2
$$
where
$$
k_{Dir}(h(x_i);h(x_j)) = \frac{\Gamma(\sum_{k=1}^K \alpha_{i,k})}{\prod_{k=1}^K \Gamma(\alpha_{i,k})} \prod_{k=1}^K h(x_j)^{\alpha_{i,k} - 1}_k
$$
where $\alpha_{i,k}=\frac{h(x_i)_k}{S_i} + 1$, $S_i = \sum_{k=1}^K h(x_i)_k$ and $y_i$ is the number of $\pi_i$ when enumerating the rankings $\Sy{\I}$.
To apply this method we first encode the given (sub-) rankings into a classification problem, via enumeration of the rankings.
For full rank sub-k or full rank top-k calibration, we use enumerate only the rankings present in the subset and not the entire set $\Sy{\I}$.
For example $2 \succ 1 \succ 3$, might obtain the number $3$ when enumerating over $\Sy{\I}$, yet if the consider the sub-ranking $1 \succ 3$ it obtains the number $2$.
Unlike binning-based approaches, the Dirichlet kernel estimator provides a smooth and consistent estimate of full-distribution miscalibration in high-dimensional probability simplices, making it particularly well suited for evaluating calibration of probabilistic label ranking models over the full ranking space.

\subsection{Further Results}
\label{appendix:further_results}

We report complete results for both full-rank / rankwise sub-$k$ calibration and top-$k$ calibration of the considered label ranking models.
Full-rank calibration is computed using the Dirichlet kernel proposed by \citet{Popordanoska.2022}.
Overall, models tend to be well calibrated on smaller datasets such as \emph{iris} (\autoref{fig:complete_iris_results}) and \emph{authorship} (\autoref{fig:complete_authorship_results}), but become increasingly uncalibrated on datasets with larger ranking spaces, such as \emph{movies} (\autoref{fig:complete_movies_results}) and \emph{political} (\autoref{fig:complete_political_results}).
Across nearly all datasets, RPC and RankClassifier are among the best calibrated models, as both can leverage standard calibration methods for classification \citep{Aly.2005}.
RPC is most often the best calibrated model, while RankClassifier performs best on \emph{iris} (\autoref{fig:complete_iris_results}), \emph{authorship} (\autoref{fig:complete_authorship_results}), \emph{vehicle} (\autoref{fig:complete_vehicle_results}), and \emph{wine} (\autoref{fig:complete_wine_results}).
In contrast, the Mallows model consistently exhibits the worst calibration, since it learns only global parameters, which may fail to capture instance-specific ranking uncertainty.
Among native label ranking models, Plackett--Luce achieves the strongest calibration performance, as its weights are learned individually for each data point rather than globally.
Nevertheless, as illustrated on \emph{vowel} (\autoref{fig:complete_vowel_results}), \emph{yeast} (\autoref{fig:complete_yeast_results}), and \emph{glass} (\autoref{fig:complete_glass_results}), Plackett--Luce can still be substantially uncalibrated.
Consistent with rank breaking effects, the Plackett--Luce--RPC variant shows inferior calibration, as it is trained only on pairwise comparisons but predicts a distribution over full rankings.
We further observe that, despite low ECE values for larger item sets ($k>4$), almost all models exhibit high ECE for $k=2$ and $k=3$.
This behavior is explained by the rapidly increasing number of possible rankings for $k \ge 4$, which leads to extremely small probability masses and consequently small values of $\lvert \text{acc}(I_b) - \text{conf}(I_b) \rvert$, artificially reducing ECE.
This suggests that more sensitive calibration measures should be explored, which we leave for future work.
Overall, these results highlight the need for calibration methods tailored to label ranking, as (i) RankClassifier incurs high computational cost and (ii) RPC only produces distributions over pairwise comparisons, which do not reliably generalize to full ranking distributions.

\begin{figure}
    \centering
    \includegraphics[width=\linewidth]{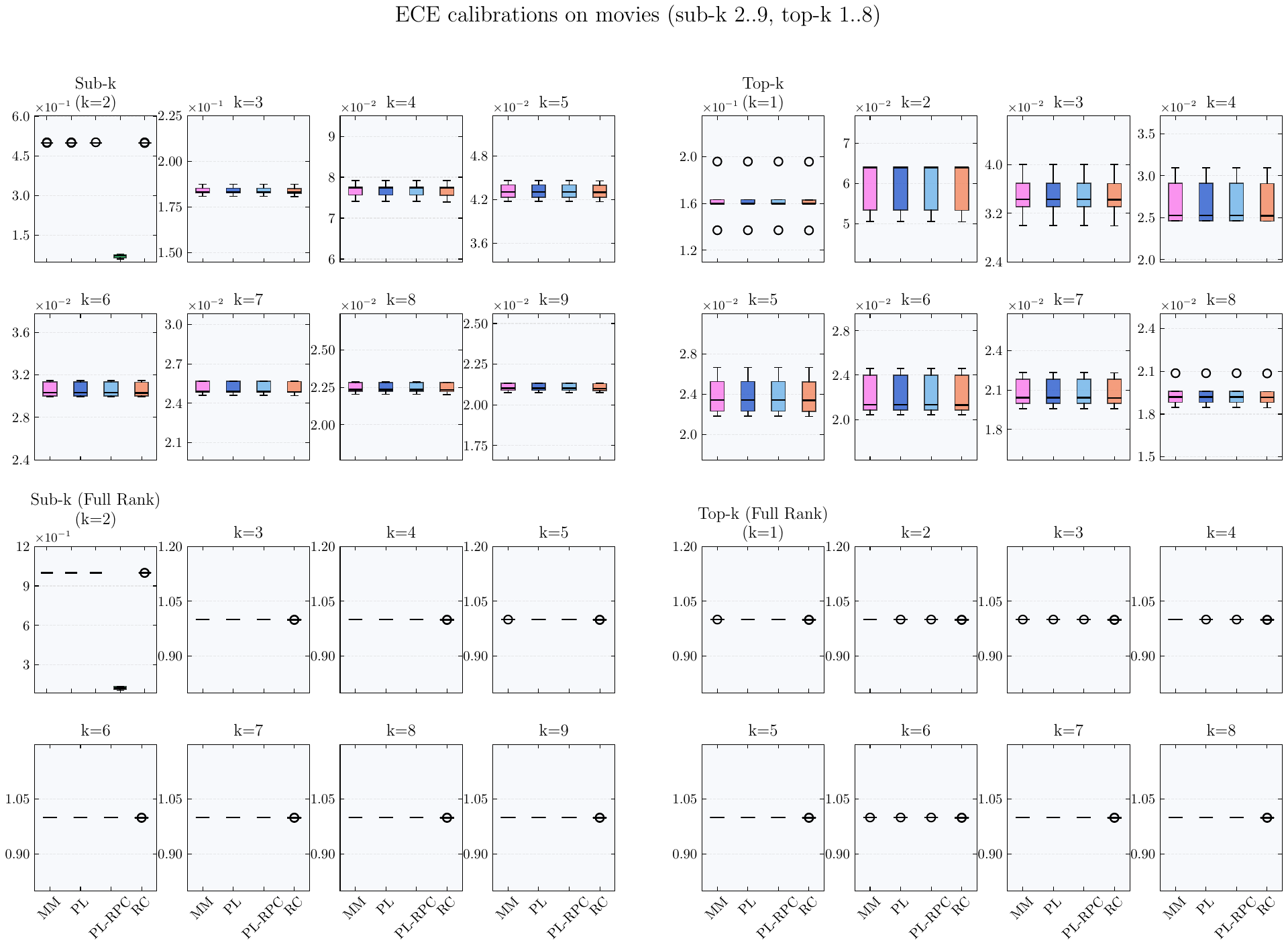}
    \caption{Calibration of label ranking models for the ``movies'' dataset, considering only the rankings covering $95\%$ of the probability mass.}
    \label{fig:complete_movies_results}
\end{figure}
\begin{figure}
    \centering
    \includegraphics[width=\linewidth]{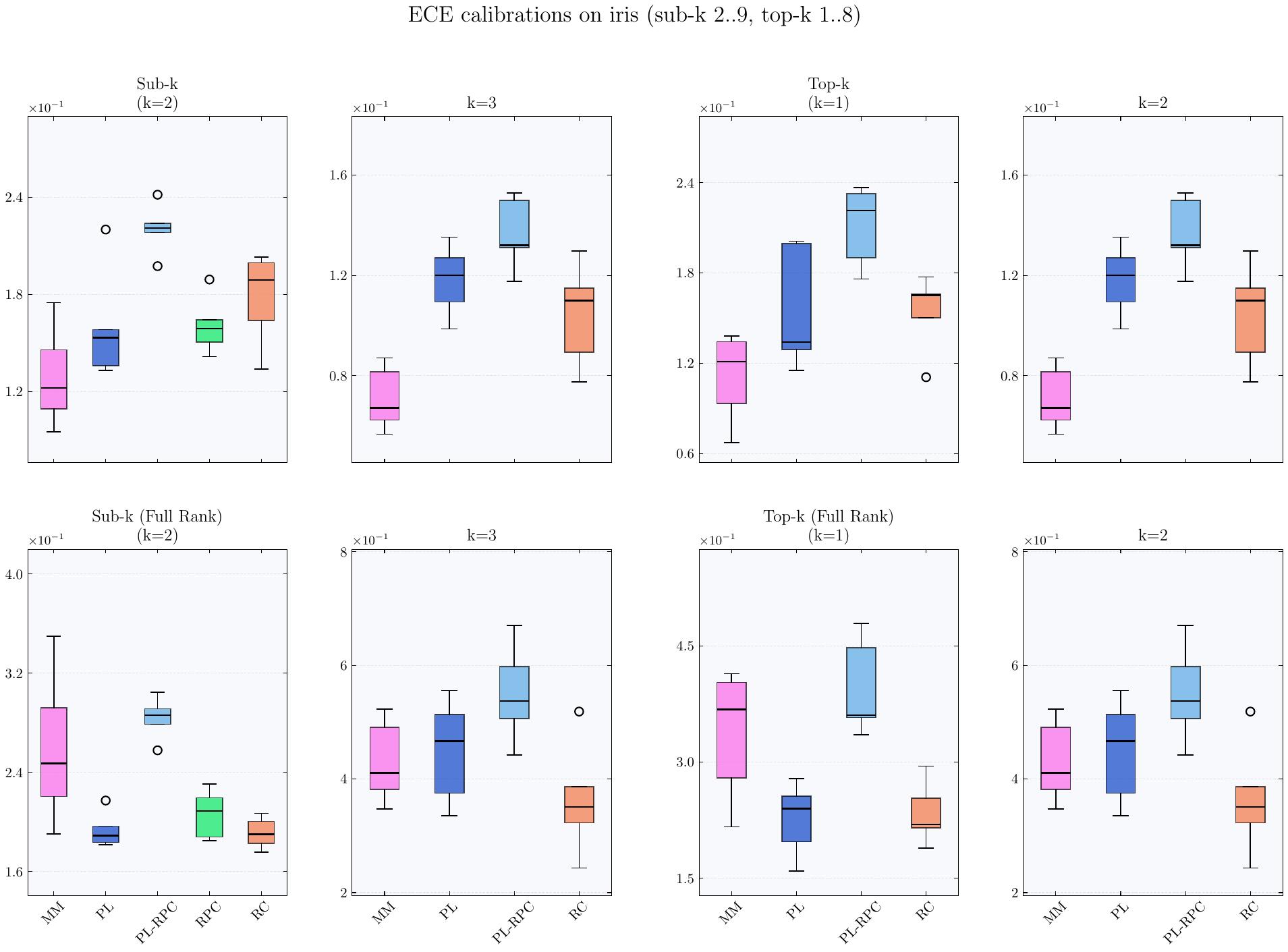}
    \caption{Calibration of label ranking models for the ``iris'' dataset, considering only the rankings covering $95\%$ of the probability mass.}
    \label{fig:complete_iris_results}
\end{figure}
\begin{figure}
    \centering
    \includegraphics[width=\linewidth]{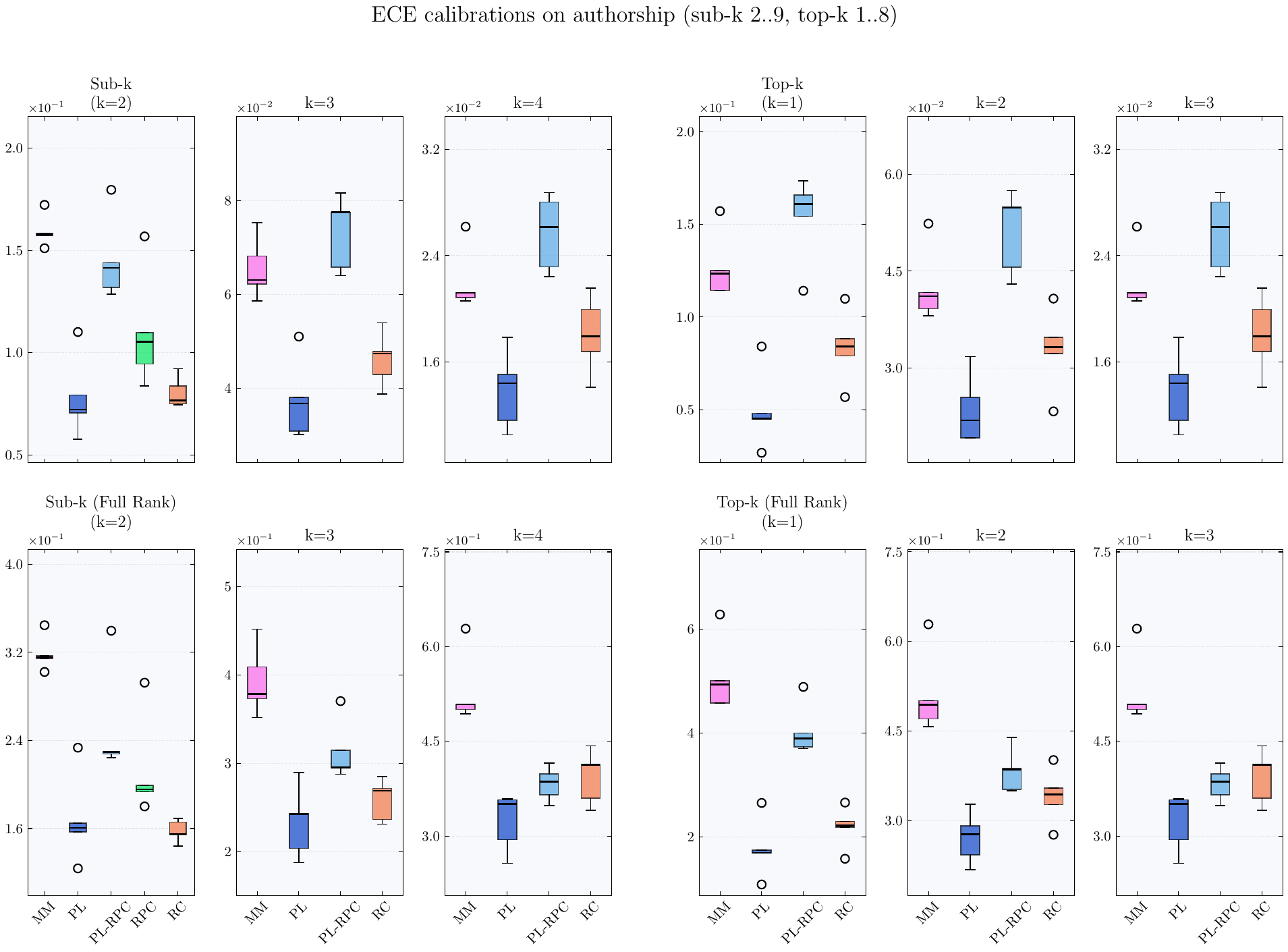}
    \caption{Calibration of label ranking models for the ``authorship'' dataset, considering only the rankings covering $95\%$ of the probability mass.}
    \label{fig:complete_authorship_results}
\end{figure}
\begin{figure}
    \centering
    \includegraphics[width=\linewidth]{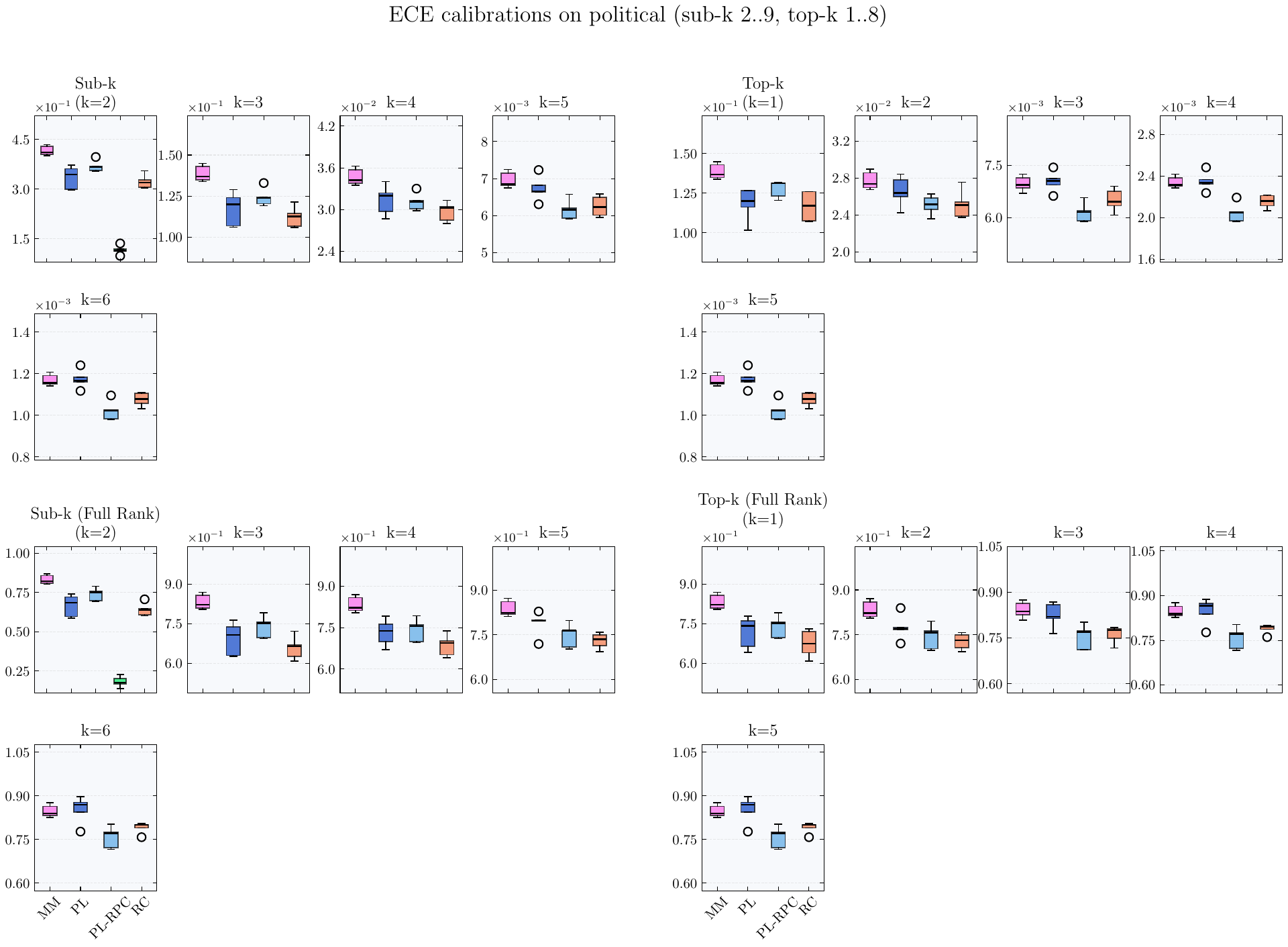}
    \caption{Calibration of label ranking models for the ``political'' dataset, considering only the rankings covering $95\%$ of the probability mass.}
    \label{fig:complete_political_results}
\end{figure}
\begin{figure}
    \centering
    \includegraphics[width=\linewidth]{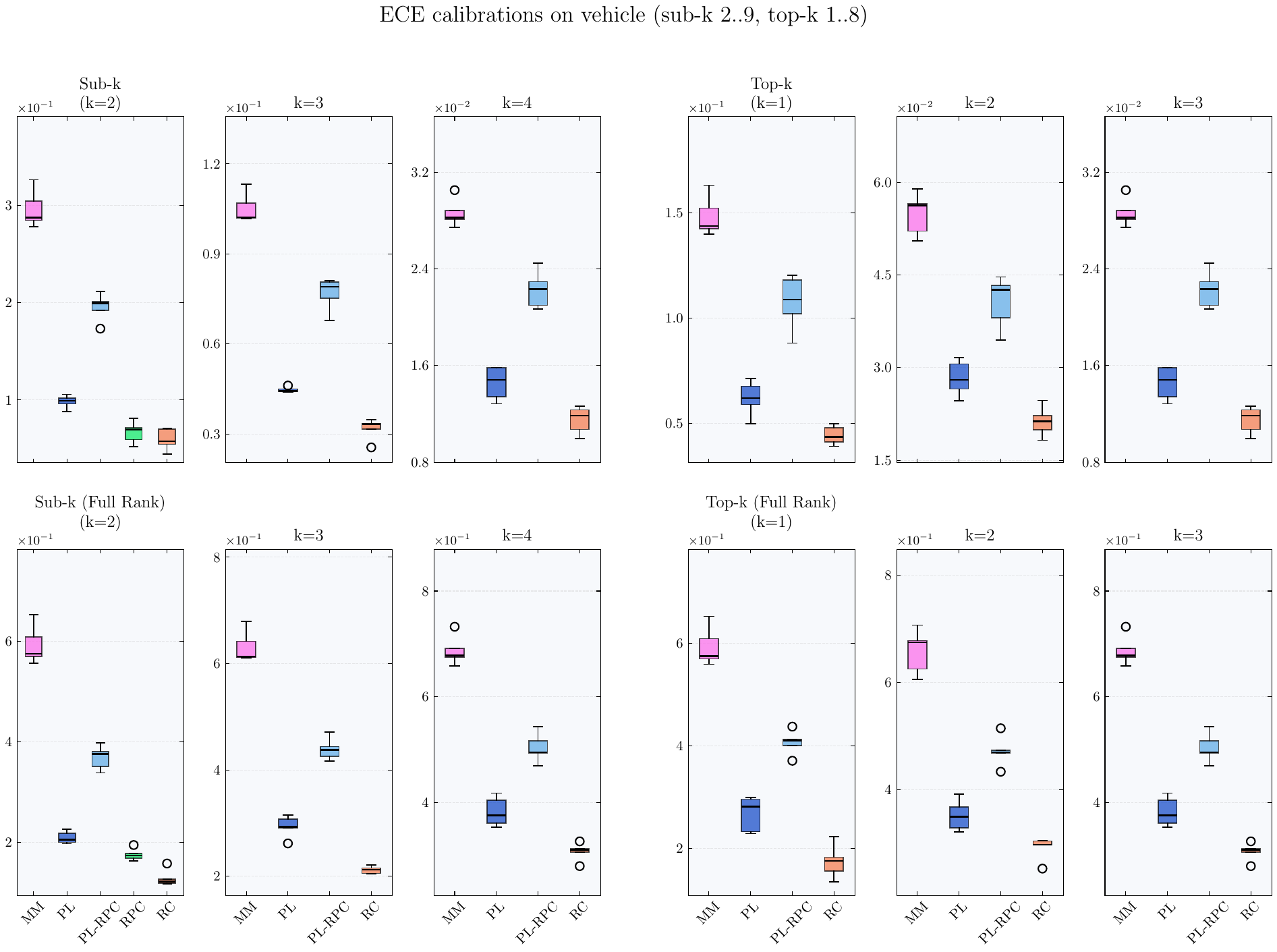}
    \caption{Calibration of label ranking models for the ``vehicle'' dataset, considering only the rankings covering $95\%$ of the probability mass.}
    \label{fig:complete_vehicle_results}
\end{figure}
\begin{figure}
    \centering
    \includegraphics[width=\linewidth]{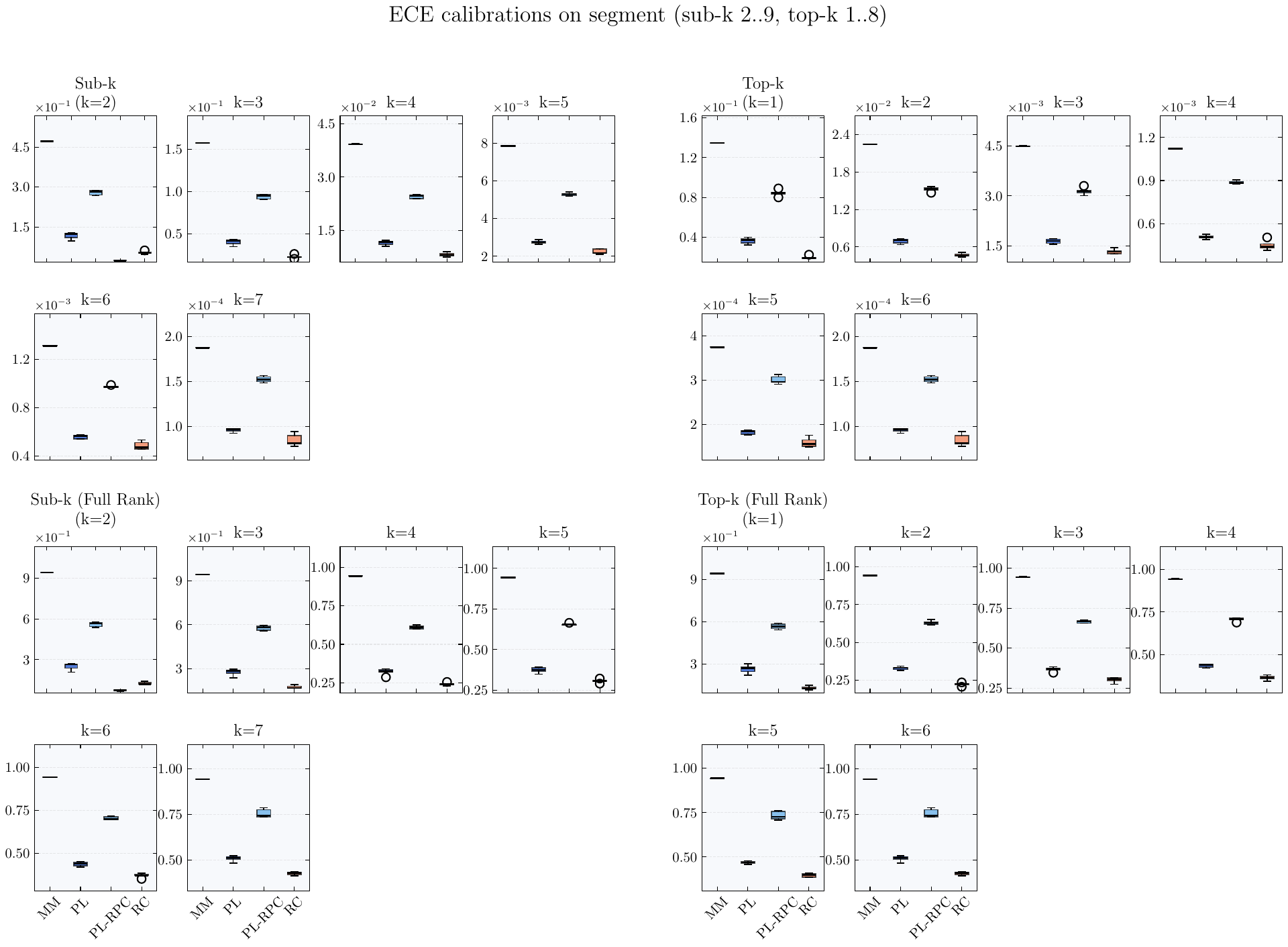}
    \caption{Calibration of label ranking models for the ``segment'' dataset, considering only the rankings covering $95\%$ of the probability mass.}
    \label{fig:complete_segment_results}
\end{figure}
\begin{figure}
    \centering
    \includegraphics[width=\linewidth]{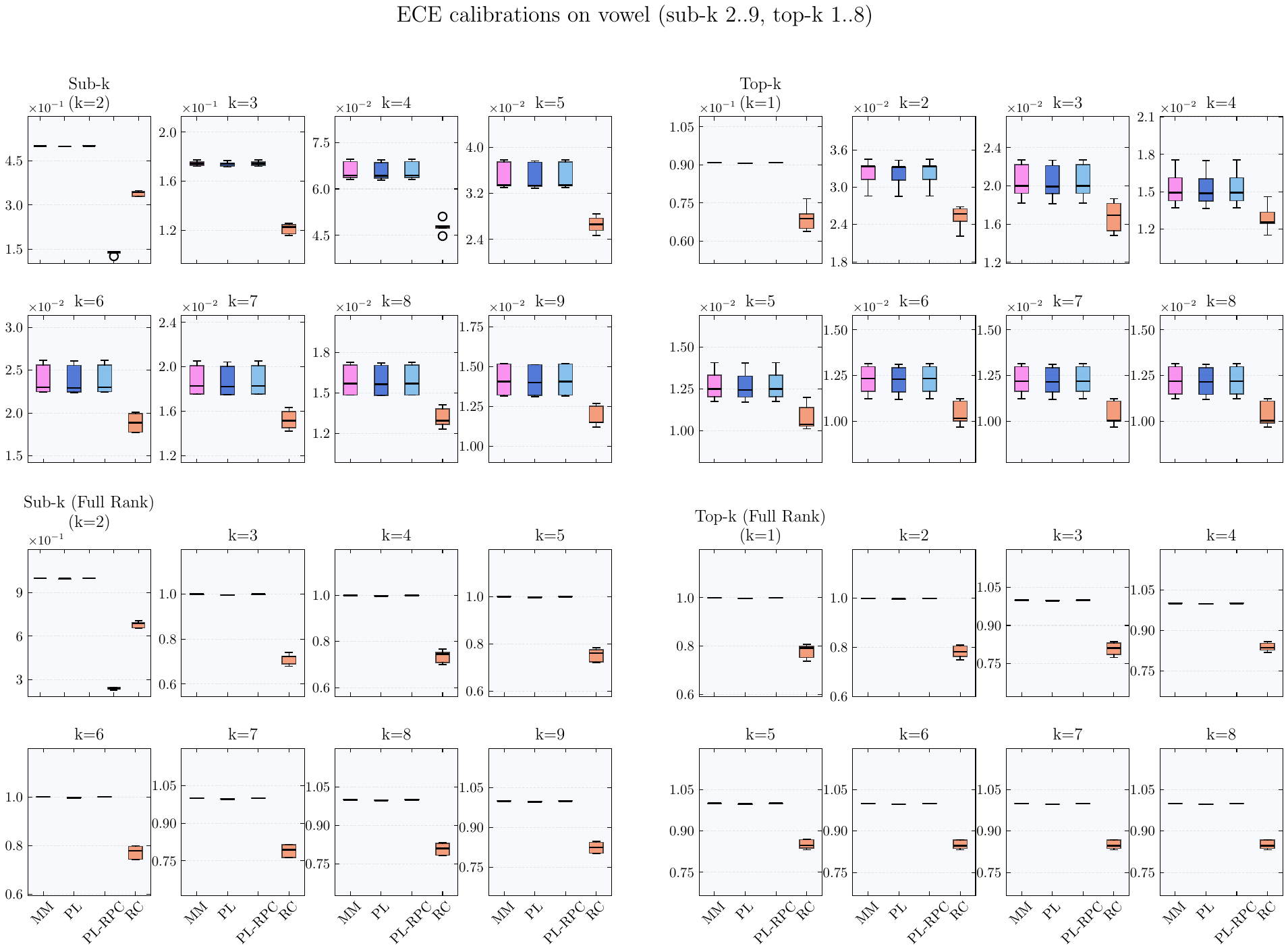}
    \caption{Calibration of label ranking models for the ``vowel'' dataset, considering only the rankings covering $95\%$ of the probability mass.}
    \label{fig:complete_vowel_results}
\end{figure}
\begin{figure}
    \centering
    \includegraphics[width=\linewidth]{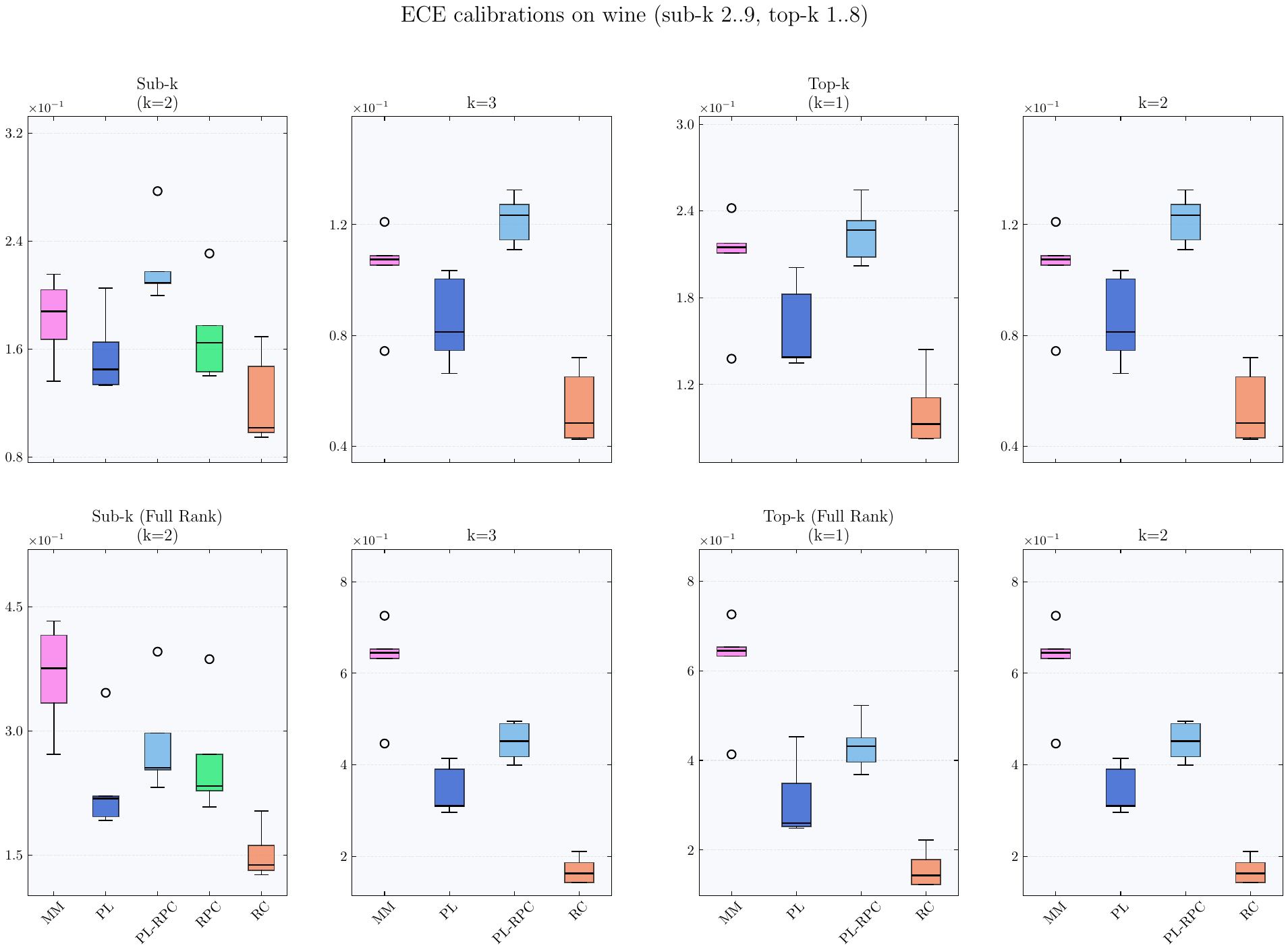}
    \caption{Calibration of label ranking models for the ``wine'' dataset, considering only the rankings covering $95\%$ of the probability mass.}
    \label{fig:complete_wine_results}
\end{figure}
\begin{figure}
    \centering
    \includegraphics[width=\linewidth]{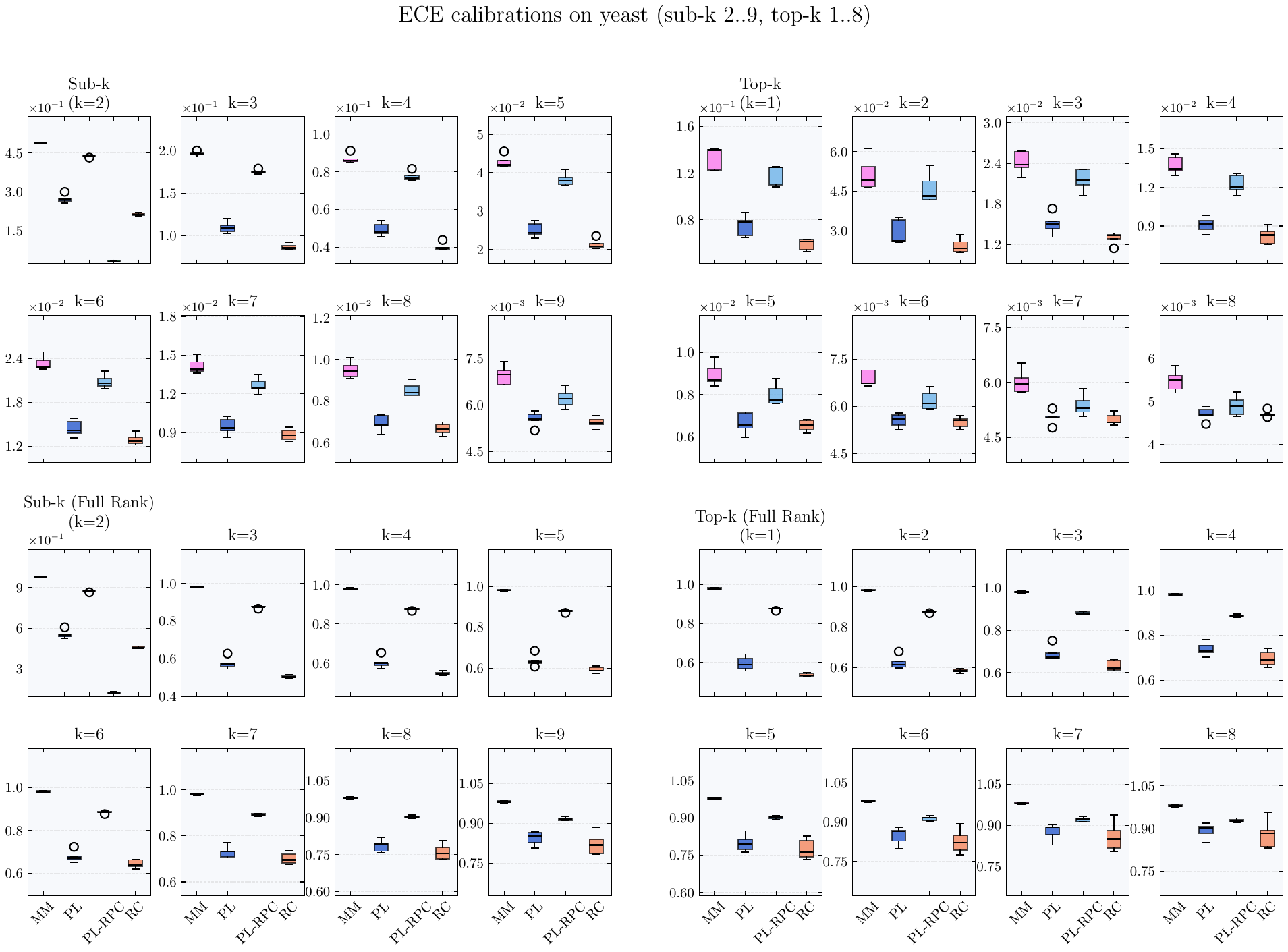}
    \caption{Calibration of label ranking models for the ``yeast'' dataset, considering only the rankings covering $95\%$ of the probability mass.}
    \label{fig:complete_yeast_results}
\end{figure}
\begin{figure}
    \centering
    \includegraphics[width=\linewidth]{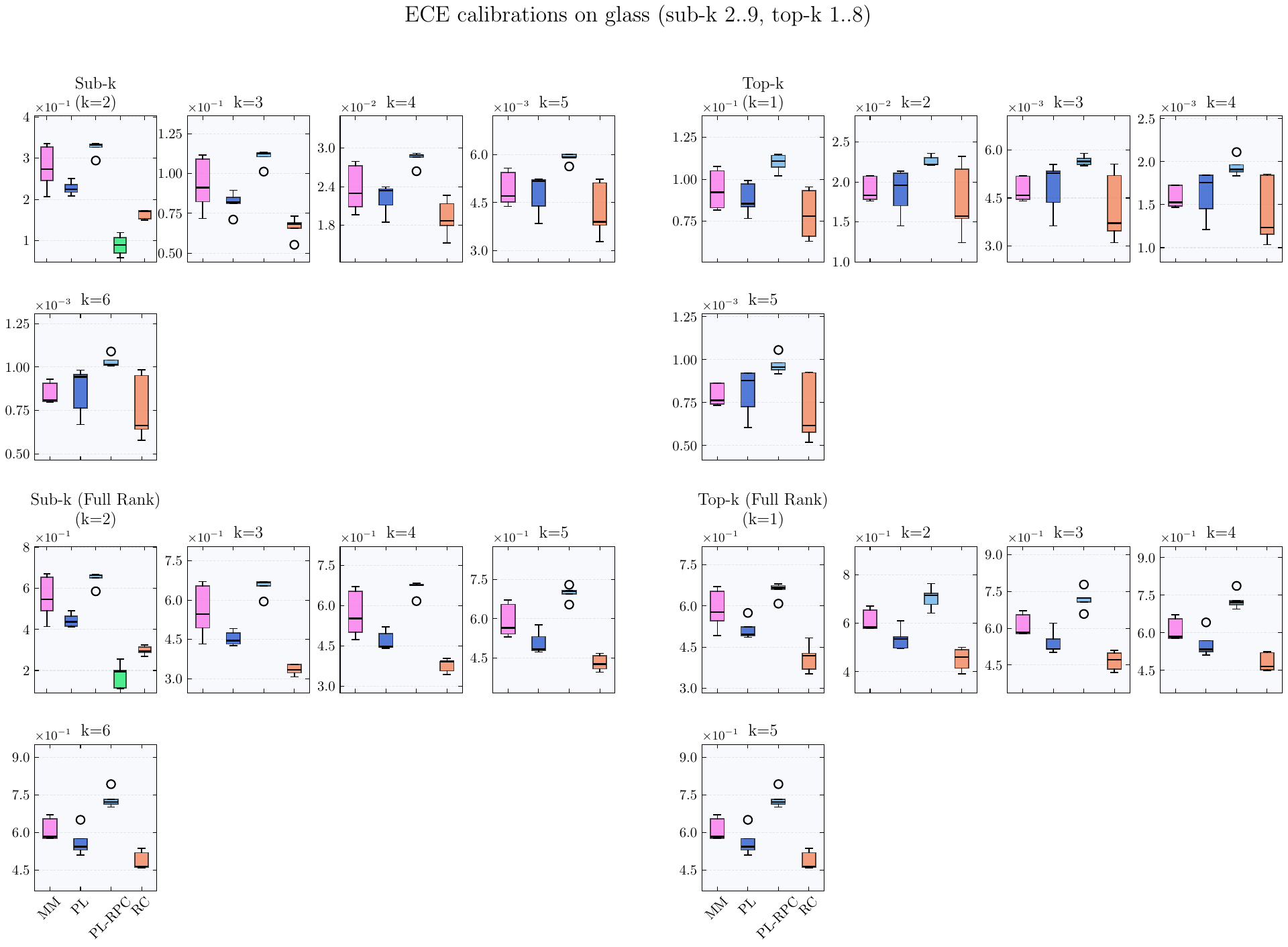}
    \caption{Calibration of label ranking models for the ``glass'' dataset, considering only the rankings covering $95\%$ of the probability mass.}
    \label{fig:complete_glass_results}
\end{figure}

\end{document}